\documentclass{article}

\PassOptionsToPackage{numbers}{natbib}
\usepackage[preprint]{style/neurips_2026}

\usepackage[utf8]{inputenc} 
\usepackage[T1]{fontenc}    
\usepackage{hyperref}       
\usepackage{url}            
\usepackage{booktabs}       
\usepackage{amsfonts}       
\usepackage{nicefrac}       
\usepackage{microtype}      
\usepackage{xcolor}         

\usepackage{style/custom}

\title{
An Asymptotic Theory of Chain-of-Thought in In-Context Learning}

\author{%
  Kaito~Takanami \\
  Department of Physics, Graduate School of Science, The University of Tokyo\\
  Tokyo, Japan \\
  John A. Paulson School of Engineering and Applied Sciences, Harvard University\\
  Cambridge, MA, USA \\
  \texttt{takanami255@g.ecc.u-tokyo.ac.jp} \\
  \And
  Cengiz Pehlevan \\
   John A. Paulson School of Engineering and Applied Sciences, Harvard University\\
   Kempner Institute for the Study of Natural and Artificial Intelligence, Harvard University\\
   Center for Brain Science, Harvard University \\
   Cambridge, MA, USA 
}

\begin{document}

\maketitle

\begin{abstract}
  Chain-of-thought (CoT) reasoning has become a widely used mechanism for eliciting multi-step reasoning in large language models by generating intermediate reasoning steps at inference time. 
Yet the scaling behavior of generalization with CoT depth remains poorly understood.
To address this question, we study a theoretically solvable model of CoT for in-context weight prediction in linear regression, where test-time reasoning is represented as an iterative refinement of the weight-parameter estimate. 
Using tools from random matrix theory under high-dimensional asymptotics, we derive an exact formula for the generalization error as a function of reasoning depth, pretraining data amount, and context length. 
Our analysis reveals a sharp phase transition separating exponential and polynomial improvement, saturation, and overthinking, and characterizes how the optimal reasoning depth scales. 
We further show that deeper reasoning is most effective with sufficiently rich pretraining and in-context information, whereas limited pretraining or context makes longer reasoning prone to error amplification or saturation.
We also validate these predictions through experiments on fully learned linear attention and softmax attention models. 
Our results provide a unified theoretical account of how test-time CoT depth affects generalization.
\end{abstract}

\section{Introduction}
\label{section:introduction}

Chain-of-thought (CoT) reasoning refers to a test-time procedure in which a model generates intermediate reasoning steps before producing its final answer~\citep{wei2022chain}. 
In large language models (LLMs), CoT has become a widely used prompting strategy because it often improves performance on tasks that require multi-step reasoning beyond direct one-shot prediction~\citep{wei2022chain, NEURIPS2022_8bb0d291, snell2025scaling}. 
From a practical perspective, an important advantage of CoT is that it enables performance gains through test-time computation, which is increasingly valuable as continued scaling through pretraining alone faces growing constraints in both data and training resources~\citep{villalobos2024position, cottier2024rising}.

Despite its empirical effectiveness, how test-time CoT depth affects generalization remains poorly understood. 
In particular, recent empirical studies have shown that increasing the number of reasoning steps does not lead to uniformly monotone gains: depending on the setting, deeper reasoning may improve performance, saturate, or even amplify errors through overthinking~\citep{yang2025towards,liu2025mind}.
A central open question is what governs these qualitatively different regimes, and in particular how they depend on the properties of the data seen during pretraining and on the structure of the information available at test time. 
While empirical studies can reveal these phenomena, they cannot by themselves identify the principles that govern them, which motivates the study of a theoretically tractable model in which test-time reasoning dynamics can be analyzed explicitly and related to generalization.

To address these questions, we study a theoretically tractable model of CoT in an in-context learning (ICL) problem for linear regression, where the task is to predict the underlying regression weight from a sequence of contextual examples. 
This setting provides a minimal framework that still captures the essential structure needed to analyze the effect of test-time reasoning depth~\citep{huang2025transformers,javanmard2026understanding,javanmard2026theoreticalperspectivesdataquality}. 
During pretraining, the model directly predicts the regression weight from the context. At inference time, CoT is modeled as a sequence of intermediate updates that progressively refine the weight estimate before the final prediction. 
This formulation enables us to derive a closed-form expression for the generalization error in the high-dimensional limit and to analyze test-time reasoning dynamics explicitly.

Our main results are summarized as follows:
\begin{itemize}
  \item We introduce an in-context weight prediction model for pretraining and test-time CoT, and derive an exact characterization of its test-time error dynamics in the high-dimensional limit, where the input dimension $D$, number of pretraining samples $M$, and number of in-context examples $L$ go to infinity simultaneously with fixed ratios~(Section~\ref{section:result_asympt}).
  \item We identify a phase-transition structure in test-time CoT across data regimes, separating four regimes: an exponential-improvement regime, a polynomial-improvement regime, a saturation regime, and an overthinking regime in which deeper reasoning amplifies error~(Section~\ref{section:result_phase_transition}).
  \item The phase transition further reveals how pretraining task diversity and in-context examples jointly govern test-time scaling. Pretraining task diversity determines the stability of iterative refinement and the optimal reasoning depth, while in-context examples control the error decay rate and the asymptotic information limit at test time~(Sections~\ref{subsection:exponential_polynomial_scaling_regime}--\ref{subsection:saturation_regime}).
  \item We validate these theoretical predictions in nonlinear softmax-attention models and find that the same qualitative behavior persists beyond the solvable linear setting~(Section~\ref{section:experiments}).
\end{itemize}

These results provide unified theoretical insights into how test-time CoT depth affects generalization, and yield principled understandings for when deeper reasoning helps, saturates, 
or becomes harmful\footnote{The code for reproducing the results is available at \url{https://github.com/taka255/cot_asymptotics}.}.

\paragraph{Impact statement.}
This work is theoretical and studies simple attention models in a synthetic setting. 
Its potential positive impact is a better understanding of test-time reasoning, and we do not identify direct negative societal impacts.

\section{Related Works}

\paragraph{CoT and test-time reasoning in LLMs.}
Recent work has brought increasing attention to test-time scaling in LLMs. 
A central reason is that additional inference-time computation can improve reasoning capabilities that are not easily unlocked by scaling pretraining alone~\citep{wei2022chain, snell2025scaling, muennighoff2025s1}. 
This practical importance is reflected in recent frontier reasoning-oriented systems, which have adopted test-time reasoning as a core design principle rather than a purely prompt-level heuristic~\citep{OpenAIOA,guo2025deepseek}. 
At the same time, empirical studies have shown that increasing reasoning depth is not uniformly beneficial: longer reasoning can yield diminishing returns and saturation~\citep{wang2025scalingscalingexploringtesttime}, or even harm performance through overthinking~\citep{su2025underthinkingoverthinkingempiricalstudy, yang2025towards,liu2025mind, hassid2026dont}. 
However, these works are largely empirical and do not provide a general account of what governs the transition between improvement, saturation, and harmful overthinking. 
Moreover, much of the existing literature emphasizes problem difficulty, while paying comparatively less attention to the role of data properties. 
More broadly, theoretical understanding of test-time scaling remains relatively limited~\citep{halder2025demystifying,javanmard2026understanding,schaeffer2025large, huang2025best, levi2025simple,levi2026learning}. 
Motivated by this gap, we theoretically investigate how pretraining quality and in-context information govern these regimes and their scaling laws.\looseness=-1

\paragraph{Theoretical studies of CoT and ICL.}
Theoretical studies of in-context learning have long used linear regression as a tractable benchmark for understanding how transformers can implement learning algorithms~\citep{garg2022what, akyurek2023what, von2023transformers, zhang2025training}. 
One line of development has pushed this framework further through asymptotic analyses that make exact characterization possible, revealing richer phenomena such as double descent and phase transitions associated with the emergence of genuine in-context generalization beyond memorization~\citep{lu2025asymptotic,letey2026pretraintest, nguyen2025differential, bordelon2026theory, takanami2026learning}. 
A separate line of work has connected this broader ICL setting to CoT through in-context weight prediction models for linear regression, where transformers iteratively refine task-parameter estimates without learning~\citep{huang2025transformers,javanmard2026understanding,javanmard2026theoreticalperspectivesdataquality}. 
In particular, \citep{javanmard2026understanding} theoretically showed that increasing test-time compute can reduce the amount of contextual information required during training, but can also hurt performance when the training data fails to represent the input directions that are important for the downstream task.
Our work builds on both lines of research by combining tractable CoT modeling with asymptotic analysis. 
This allows us to view the qualitatively different effects of test-time reasoning depth as phase transitions, and to identify when deeper reasoning improves generalization, when it saturates, and when it becomes harmful through overthinking. 
We further show how these transitions are shaped by pretraining quality and in-context information.

\section{Model}
\label{section:model}
In this section, we introduce the model used in our analysis. 
We study a variant of in-context linear regression model~\citep{garg2022what}.
While standard in-context linear regression aims to predict the output for a new query input given a prompt of labeled examples, 
our objective is to infer the regression weight itself, as proposed in \citep{huang2025transformers,javanmard2026theoreticalperspectivesdataquality, javanmard2026understanding}.
During pretraining, the model learns an update rule from prompts of contextual examples. 
At inference time, CoT is generated by iteratively applying this learned update rule to a new task.

\paragraph{Training data.}
We consider \(M\) linear regression tasks indexed by \(\mu = 1,\dots,M\).
For each task \(\mu\), the training examples \(\{(\vb{x}_{\mu,l}, y_{\mu,l})\}_{l=1}^L\) are generated as
\begin{align}
    \vb{x}_{\mu,l} \sim \mathcal{N}(\vb{0}, I_D / D), \quad
    y_{\mu,l} = \vb{w}_\mu^\top \vb{x}_{\mu,l} + \epsilon_{\mu,l}, \quad
    \epsilon_{\mu,l} \sim \mathcal{N}(0,\sigma_\mu^2),
\end{align}
where $L$ is the length of the context, and \(\vb{w}_\mu \in \mathbb{R}^D\) denotes the task-specific ground-truth parameter vector, sampled from a Gaussian normal distribution.

\paragraph{Prompt construction.}
For each task \(\mu\), we form a prompt matrix from the labeled examples together with a dedicated slot that stores the current estimate of the task parameter.
Let
\begin{align}
    \mathbf{X}_\mu = [\vb{x}_{\mu,1}, \dots, \vb{x}_{\mu,L}] \in \mathbb{R}^{D \times L},
    \qquad
    \vb{y}_\mu = [y_{\mu,1}, \dots, y_{\mu,L}]^\top \in \mathbb{R}^{L}.
\end{align}
We define the embedding matrix as
\begin{align}
    \mathbf{E}_\mu
    =
    \begin{bmatrix}
        \mathbf{X}_\mu & \vb{0}_{D} \\
        \vb{y}_\mu^\top & 0 \\
        0_{D\times L} & \hat{\vb{w}}_{\mu}^{\text{init}} \\
        0_{1\times L} & 1
    \end{bmatrix}
    \in \mathbb{R}^{(2D+2)\times (L+1)}.
\end{align}
The first $L$ columns encode the labeled examples, while the last column functions as a slot for the predicted task parameter. 
It is initialized with \(\hat{\vb{w}}_{\mu}^{\text{init}}\) and updated during reasoning, unless otherwise specified, we take \(\hat{\vb{w}}_{\mu}^{\text{init}}=\vb{0}\).
The final row serves as a marker that distinguishes the example columns, which take value \(0\), from the parameter-estimate column, which takes value \(1\).

\paragraph{Architecture.}
We consider a single-layer linear self-attention model with residual connection. For an input matrix
\(\mathbf{E} \in \mathbb{R}^{(2D+2)\times T}\), the layer is defined as
\begin{align}
    \text{\textbf{Atten}}(\mathbf{E}; \mathbf{W}_K, \mathbf{W}_Q, \mathbf{W}_V)
    = \mathbf{E} + \ab(1/L) (\mathbf{W}_V \mathbf{E})(\mathbf{W}_K \mathbf{E})^\top (\mathbf{W}_Q \mathbf{E}),
\end{align}
where \(\mathbf{W}_K, \mathbf{W}_Q, \mathbf{W}_V \in \mathbb{R}^{(2D+2)\times(2D+2)}\).
Defining \(\mathbf{W} := \mathbf{W}_K^\top \mathbf{W}_Q\), \(\mathbf{V} := \mathbf{W}_V\), and \(\theta := (\mathbf{W},\mathbf{V})\), we equivalently write
\begin{align}
    \textbf{Atten}(\mathbf{E};\theta)
    =
    \mathbf{E} + \ab(1/L) \mathbf{V} \mathbf{E} \mathbf{E}^\top \mathbf{W} \mathbf{E}.
\end{align}
The readout is taken from the weight slot of the final column, which is given by
\begin{align}
    \textbf{Read}(\mathbf{E}) = \mathbf{E}_{(D+2):(2D+1),(L+1)} \in \mathbb{R}^{2D+2}.
\end{align}
The resulting estimate of the task parameter is thus given by \(\textbf{Read}(\textbf{Atten}(\mathbf{E};\theta))\).

\paragraph{Simplified parameterization.}
To obtain a tractable model, we retain only the blocks that directly control the update of the parameter-estimate token.
Accordingly, we use the block parameterization
\begin{align}
\mathbf{W}=
\begin{pmatrix}
  0_{D\times D} & 0_{D\times 1} & I_{D\times D} & 0_{D\times 1}\\
  0_{1\times D} & 0_{1\times 1} & 0_{1\times D} & -1\\
  0_{D\times 1} & 0_{D\times D} & 0_{D\times 1} & 0_{D\times D}\\
  0_{1\times D} & 0_{1\times 1} & 0_{1\times D} & 0_{1\times 1}
\end{pmatrix},
\qquad
\mathbf{V}=
\begin{pmatrix}
  0_{D\times D} & 0_{D\times 1} & 0_{D\times D} & 0_{D\times 1}\\
  0_{1\times D} & 0_{D\times D} & 0_{1\times D} & 0_{D\times D}\\
  -D \mathbf{A} & 0_{D\times 1} & 0_{D\times D} & 0_{D\times 1}\\
  0_{1\times D} & 0_{D\times D} & 0_{1\times D} & 0_{1\times 1}
\end{pmatrix}.
\end{align}
Here \(\mathbf{A}\in \mathbb{R}^{D\times D}\) is the only learnable parameter; all blocks of \(\mathbf{W}\) and all other blocks of \(\mathbf{V}\) are fixed.
In both \(\mathbf{W}\) and \(\mathbf{V}\), the row and column indices are partitioned into four consecutive blocks of sizes \(D,1,D,1\).
This restriction is motivated by prior gradient-flow analyses of related in-context weight-prediction settings~\citep{huang2025transformers,javanmard2026understanding}. 
These works show that, under suitable initializations, gradient flow converges to solutions with the same sparse block structure.
Consistent with this motivation, our full-parameter training experiments suggest that these fixed blocks are learned close to the prescribed values or remain negligible, while the retained block \(\mathbf{A}\) governs the nontrivial update of the parameter estimate at inference time.
See Appendix~\ref{appendix_justification_of_parameter_simplification} for empirical support for this simplification.

Under this parameterization, the attention layer maps the embedding matrix to
\begin{align}
    \textbf{Read}(\textbf{Atten}(\mathbf{E}_\mu; \theta))
    =
    \hat{\vb{w}}_{\mu}^{\text{init}}
    -
    \ab(D/L)\mathbf{A}
    \left(
      \mathbf{X}_\mu \mathbf{X}_\mu^\top \hat{\vb{w}}_{\mu}^{\text{init}} - \mathbf{X}_\mu \vb{y}_\mu
    \right). \label{eq:readout}
\end{align}
Thus, the readout is an updated estimate of the task parameter obtained by a linear transformation of the empirical gradient. 
Detailed calculations and discussions are given in Appendix~\ref{appendix:readout}.

\paragraph{Training objective.}
We learn \(\mathbf{A}\) by ridge-regularized empirical risk minimization so that the readout matches the ground-truth task parameter.
Under the above simplification, the empirical objective is
\begin{align}
  \mathcal{L}(\mathbf{A}) &:=
  \frac{1}{M}\sum_{\mu=1}^M
  \ab\|
    {\vb{w}}_{\mu} -
    \textbf{Read}(\textbf{Atten}(\mathbf{E}_\mu; \theta))
  \|^2
  + \frac{\lambda}{2}\|\mathbf{A}\|_F^2, \label{eq:loss}
\end{align}
where \(\lambda \ge 0\) is a ridge regularization parameter. We denote the minimizer of Eq.~\eqref{eq:loss} by \(\mathbf{A}^*\).

\paragraph{CoT in inference phase.}
At inference time, we consider a new linear regression task generated from the same distribution as in pre-training.
Specifically, the task parameter \(\vb{w} \in \mathbb{R}^D\) denotes the ground-truth parameter vector, and the examples \(\{(\vb{x}_l,y_l)\}_{l=1}^L\) are generated without noise as
$\vb{x}_l \sim \mathcal{N}(\vb{0}, I_D/D)$ and $y_l = \vb{w}^\top \vb{x}_l$ , for \(l=1,\dots,L\).
Let \(\mathbf{X} = [\vb{x}_1,\dots,\vb{x}_L] \in \mathbb{R}^{D\times L}\) and \(\vb{y} = [y_1,\dots,y_L]^\top \in \mathbb{R}^L\).
We then generate a CoT trajectory by repeatedly constructing a prompt from the observed examples and the current sequence of parameter estimates, and applying the learned attention layer.
At step \(t\), we define
\(\hat{\mathbf{W}}_t = [\hat{\vb{w}}_0,\hat{\vb{w}}_1,\dots,\hat{\vb{w}}_t] \in \mathbb{R}^{D\times (t+1)}\),
and construct the prompt
\begin{align}
  \mathbf{P}_t
  =
  \begin{bmatrix}
      \mathbf{X} & 0_{D\times (t+1)} \\
      \vb{y}^\top & 0_{1\times (t+1)} \\
      0_{D\times L} & \hat{\mathbf{W}}_t \\
      0_{1\times L} & \mathbf{1}_{1\times (t+1)}
  \end{bmatrix}
  \in \mathbb{R}^{(2D+2)\times (L+t+1)}.
\end{align}
Starting from \(\hat{\vb{w}}_0=\vb{0}\), we define the inference-time CoT trajectory by \(\hat{\vb{w}}_{t+1} := \textbf{Read}(\textbf{Atten}(\mathbf{P}_t;\theta^*))\),
where \(\theta^* = \mathbf{A}^*\) is the learned parameter.
Here, $t$ can be interpreted as the amount of test-time computation, since the computation required at inference is proportional to $t$.

Our CoT can be viewed as modeling iterative test-time computation, in which the same learned Transformer update is repeatedly applied to an evolving intermediate state. 
From this perspective, our formulation can be viewed as a form of scratchpad reasoning~\citep{nye2022show} and is closely related to recurrent-depth or looped Transformers, which improve inference-time performance by repeatedly applying the same learned recurrent update~\citep{geiping2025scaling, prairie2026parcaescalinglawsstable, kohli2026loopthinkgeneralize}. 
In all of these cases, performance is improved through repeated reuse of the same model over an updated intermediate state or context.

\paragraph{Evaluation.}
To study how the CoT depth at inference time \(t\) affects the final prediction performance, we evaluate the parameter estimation error on a new task.
We then define the mean squared error (MSE) by $\mathcal{E}_{t} := \mathbb{E} [\| \vb{w} - \hat{\vb{w}}_t \|^2 ] / D$,
where the expectation is taken over the pre-training and inference data distributions.

\paragraph{High-dimensional limit.}
We consider the high-dimensional limit in which $D,L,M \to \infty$ with the ratios 
\begin{equation}
    L/D \to \alpha, \quad  M/D \to \tau,
\end{equation}
where $\alpha,\tau \in (0,\infty)$ are fixed constants.
Here, $\alpha$ represents the number of in-context examples per task (context length), and $\tau$ represents the amount of task diversity.
The advantage of this regime is that the generalization error can be characterized quantitatively in terms of only a few macroscopic system parameters, with sample-to-sample fluctuations becoming asymptotically negligible.

\section{Precise characterization of test-time CoT dynamics}
\label{section:result_asympt}

Our goal is to characterize the generalization error at a finite CoT depth \(t\) in the high-dimensional limit. 
The key observation is that the test-time CoT iteration induces a linear recursion for the estimation error (see Appendix \ref{appendix:main_result_reduction}), which can be written as
\begin{align}
  \vb{w}-\hat{\vb{w}}_t
  =
  \left(\mathbf{I}-\ab(1/\alpha)\mathbf{A}^* \mathbf{S}\right)^t \vb{w},
\end{align}
where \(\mathbf{S}=\mathbf{X}\mathbf{X}^\top\) is the inference covariance matrix and \(\mathbf{A}^*\) is the learned update matrix.

Operationally, this means that CoT behaves like a learned iterative algorithm: pretraining determines the update matrix \(\mathbf{A}\), and inference applies this update repeatedly to the prompt examples so as to progressively refine the estimate of the underlying task parameter~\citep{javanmard2026understanding}. 
This representation implies that the MSE depends only on the matrix product generated by the CoT dynamics. 
In particular, the MSE is given by
\begin{align}
  \mathcal E_t
  = \E\Tr\left[
    \left(\mathbf{I}-\ab(1/\alpha)\mathbf{A}^*\mathbf{S}\right)^t
    \left(\mathbf{I}-\ab(1/\alpha)\mathbf{S}{\mathbf{A}^*}^\top\right)^t
  \right] / D.
  \label{eq:generalization_trace_maintext}
\end{align}
Thus, evaluating the error reduces to understanding a family of mixed moments of \(\mathbf{A}^*\mathbf{S}\) and \(\mathbf{S}{\mathbf{A}^*}^\top\).
To organize these moments, we introduce the two-point generating function
\begin{align}
  F(u,v)
  := \E\Tr\ab[
    (\mathbf{I}-u\mathbf{A}^*\mathbf{S})^{-1}(\mathbf{I}-v\mathbf{S}{\mathbf{A}^*}^\top)^{-1}] / D.
\end{align}
A coefficient comparison shows that \(\mathcal E_t\) is obtained from the derivatives of \(F(u,v)\) at \(u=v=0\). 
Therefore, the problem of evaluating the generalization error is reduced to computing the large-\(D\) limit of the scalar function \(F(u,v)\).

In the proportional high-dimensional limit, the generating function \(F(u,v)\) with randomness converges to a deterministic limit, which we denote again by \(F(u,v)\). 
Combined with the reduction above, this implies that the finite-depth generalization error is asymptotically determined by the derivatives of this limiting function at \(u=v=0\). 
The following result makes this characterization explicit and, in the case \(\lambda=0\), gives a closed-form expression for \(F(u,v)\).
\begin{restatable}{result}{resultmain}
  \label{result:main}
  The generalization error at CoT depth $t$ is asymptotically given by
  \begin{align}
    \mathcal E_{t} &= 
      \sum_{p=0}^t\sum_{q=0}^t
      \binom{t}{p}\binom{t}{q}
      \Bigl(-\frac1\alpha\Bigr)^{p+q}
      \frac{1}{p!\,q!}\,
      \pdv[p]{}{u}\pdv[q]{}{v} F(u,v)\Big|_{u=v=0}.
  \end{align}
  Here, especially when $\lambda = 0$, $F(u,v)$ is explicitly given by
  \begin{align}
    F(u,v) &=
    \frac{\alpha\,g(u)\,g(v)}
    {\alpha-(g(u)-1)(g(v)-1)
    \ab[1+\frac{1+\sigma^2}{\alpha \ab(\tau-1)} (\alpha+g(u)+g(v) - 1)]}, 
  \end{align}
  where 
  $g(u)$ is the resolvent function of Wishart matrix, which is given by
  \begin{align}
    g(u) = \ab(c+u(1-\alpha)-\sqrt{\bigl(c+u(1-\alpha)\bigr)^2-4cu}) /{2u}.
  \end{align}
  with $c=1+ (1+\sigma^2) / \alpha$.
\end{restatable}

Technically, the idea to reduce the finite-time error to a two-point correlation function is recently discussed in~\citep{Atanasov2025-wx} for the case of stochastic gradient dynamics. 
The present setting is more delicate because the propagation operator \(\mathbf{A}^*\) is itself learned and generally non-symmetric, so the resulting two-point object does not reduce to the simpler forms analyzed in prior work. 
To close the problem, we combine the generating-function reduction with additional linearization~\citep{helton2018applications} and multi-source cavity techniques~\citep{clark2025simplified} in random matrix theory.
The detailed derivation, including the case \(\lambda>0\), is given in Appendix~\ref{appendix:main_result}.
Also, its validity is supported by the good agreement with numerical experiments reported in Appendix~\ref{appendix:theory_vs_experiment}.

\section{Phase transition of test-time scaling law}
\label{section:result_phase_transition}
In this section, we extract the asymptotic test-time scaling law from the finite-depth characterization of the generalization error using Result~\ref{result:main}. 
As \(t\to \infty\), the behavior of \(\mathcal E_t\) is controlled by two quantities: the limiting error \(\mathcal E_\infty\) and the exponential rate \(\Lambda(\alpha,\tau,\sigma^2)\). 
Their interaction yields a sharp phase transition, separating regimes of exponential improvement, polynomial improvement, information-limited saturation, and overthinking. 
More precisely, we obtain the following asymptotic test-time scaling law.

\begin{restatable}{result}{resultscalinganalysis}
  \label{result:scaling_analysis}
  Assume $\tau>1$.
  Asymptotically as \(t\to\infty\), the test-time scaling law of the generalization error obeys
  \begin{align}
    \mathcal E_t-\mathcal E_\infty
    \sim
    K\, t^{-1/2}\,\Lambda(\alpha,\tau,\sigma^2)^t,
  \end{align}
  where $K$ is a positive constant and the limiting value $\mathcal E_\infty$ is given by
  \begin{align}
    \mathcal E_\infty
    =
    \begin{cases}
      \displaystyle
      \frac{(1-\alpha)(\tau-1)}{\tau-2-\sigma^2},
      & 0<\alpha<1 \text{ and } \tau \geq \tau_c(\alpha,\sigma^2) \\
      0, & \text{otherwise},
    \end{cases}
  \end{align}
  and the exponential rate satisfies
  \begin{align}
    \Lambda(\alpha,\tau,\sigma^2)
    \begin{cases}
      >1, & \tau<\tau_c(\alpha,\sigma^2),\\
      =1, & \tau=\tau_c(\alpha,\sigma^2),\\
      <1, & \tau>\tau_c(\alpha,\sigma^2).
    \end{cases}
  \end{align}
  Here, the critical value $\tau_c(\alpha,\sigma^2)$ is given by
  \begin{align}
    \tau_c(\alpha,\sigma^2) &=
    1+
    \frac{(1+\sigma^2)(\alpha+1+2\sigma^2-\sqrt{\Delta})
    (2\alpha+2+2\sigma^2-\sqrt{\Delta})}
    {2\alpha\sqrt{\Delta}},
  \end{align}
  with $\Delta := (\alpha+1+2\sigma^2)^2-4\alpha$.
\end{restatable}

The detailed derivation is given in Appendix~\ref{appendix:scaling_analysis}.

Result~\ref{result:scaling_analysis} shows that the large-\(t\) behavior of test-time CoT is controlled by two quantities: 
the exponential factor \(\Lambda(\alpha,\tau,\sigma^2)\), which determines whether the finite-depth error grows or decays, and the limiting error floor \(\mathcal E_\infty\), which determines whether perfect asymptotic recovery is possible. 
Their combination yields four qualitatively distinct asymptotic regimes. When \(\Lambda>1\), deeper reasoning amplifies the error, giving the overthinking regime. At the critical boundary \(\Lambda=1\), the exponential improvement disappears and the error decays only polynomially in depth. 
When \(\Lambda<1\), the error decays exponentially fast; this corresponds to the exponential-improvement regime if \(\mathcal E_\infty=0\), and to the saturation regime if \(\mathcal E_\infty>0\), where the dynamics remains stable but converges to a nonzero error floor.

Figure~\ref{fig:phase_diagram}A summarizes this phase structure. It shows that the effect of increasing test-time CoT depth is not governed by a single universal scaling law, but instead changes qualitatively across data regimes. 
Figure~\ref{fig:phase_diagram}B and Figure~\ref{fig:phase_diagram}C show representative trajectories of \(\mathcal E_t\) across these regimes, illustrating how the finite-depth dynamics transition between error amplification, polynomial decay, exponential decay to zero, and exponential decay toward a nonzero limit.
We now discuss the structure and behavior of these regimes in detail.

\begin{figure}[h]
    \centering
    \includegraphics[width=1.0\textwidth]{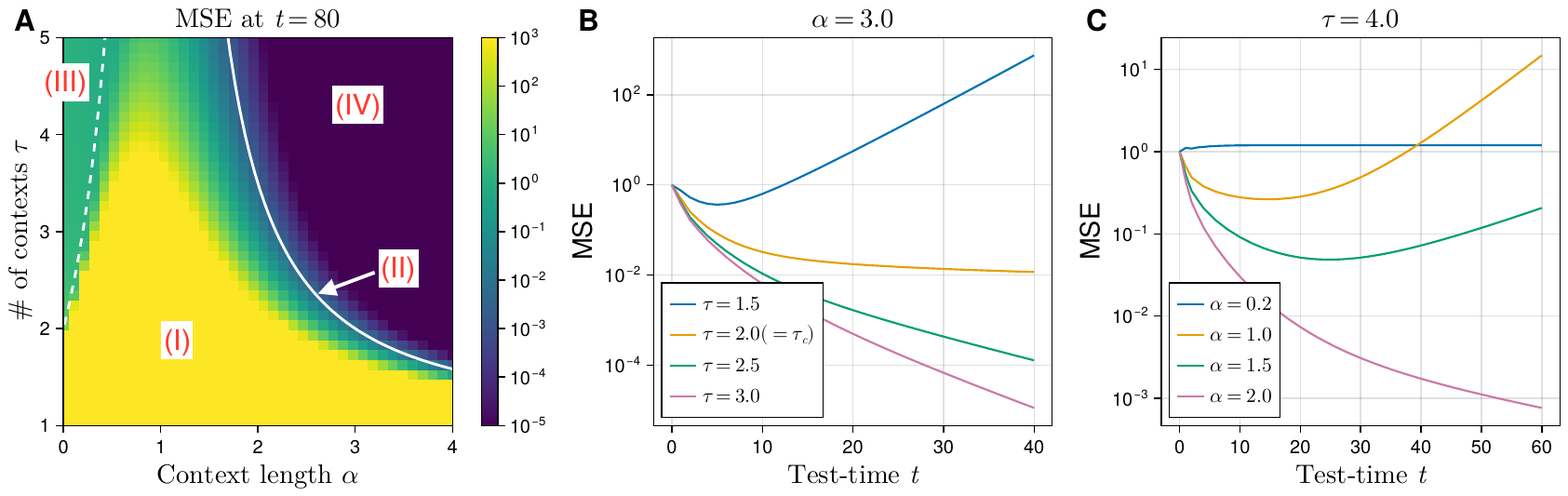}
    \caption{
      \textbf{Phase diagram of the test-time scaling law and representative error dynamics.}
      (A) Heatmap of theoretical prediction of the generalization error (MSE) at a fixed test-time depth \(t=80\). 
      The diagram is divided into four regimes:
      \textbf{(I)} the \emph{overthinking regime}, \(\tau<\tau_c(\alpha,\sigma^2)\), where long test-time CoT amplifies the error;
      \textbf{(II)} the \emph{polynomial-decay regime}, $\alpha>1$ and \(\tau=\tau_c(\alpha,\sigma^2)\), where the error decreases only polynomially;
      \textbf{(III)} the \emph{saturation regime}, \(0<\alpha<1\) and \(\tau>\tau_c(\alpha,\sigma^2)\), where the dynamics is stable but the error converges to a nonzero limit;
      and
      \textbf{(IV)} the \emph{exponential-decay regime}, \(\alpha>1\) and \(\tau>\tau_c(\alpha,\sigma^2)\), where the error decays exponentially to zero.
      (B,C) Theoretical prediction of the generalization error \(\mathcal{E}_t\) dynamics for (B) fixed $\alpha$ and (C) fixed $\tau$.
      (A-C) Parameters: $\lambda= 10^{-5}, \sigma^2=0.01$.
      }
    \label{fig:phase_diagram}
\end{figure}

\subsection{Exponential/polynomial scaling regime} 
\label{subsection:exponential_polynomial_scaling_regime}
In the exponential/polynomial scaling regime, repeated test-time refinement remains beneficial and the generalization error vanishes asymptotically (\(\mathcal E_\infty=0\)). 
When \(\tau > \tau_c(\alpha, \sigma^2)\), the error decays exponentially fast  (\(\Lambda(\alpha,\tau,\sigma^2) < 1\)), whereas at the critical boundary \(\tau=\tau_c(\alpha,\sigma^2)\), the exponential factor disappears and only polynomial decay remains  (\(\Lambda(\alpha,\tau,\sigma^2)=1\)).

In this regime, repeated refinement remains stable, so increasing the CoT depth consistently reduces the generalization error rather than amplifying it. 
Moreover, the error converges to zero asymptotically, implying that perfect recovery is achievable in the large-depth limit. 
The following theorem gives a representative asymptotic scaling law in this favorable setting.

\begin{restatable}{theorem}{theoremexponentialdecay}
  \label{theorem:exponential_decay}
    Assume the population-risk regime \(\tau \to \infty\), ridgeless learning \(\lambda = 0\), and \(\alpha>1\).
    Then, as \(t \to \infty\),
    \begin{align}
      \mathcal E_t &\asymp t^{-3/2}
      \ab(\frac{2\sqrt{\alpha}+\sigma^2}{\alpha+1+\sigma^2})^{2t} .
    \end{align}
    Especially, in the context-rich limit $\alpha \gg \sigma^2$, $\mathcal E_t \asymp t^{-3/2} (4 / \alpha)^{t}$.
\end{restatable}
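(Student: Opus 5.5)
In the limit $\tau\to\infty$ with $\lambda=0$, the learned update should become a scalar multiple of the identity. That reduces Eq.~\eqref{eq:generalization_trace_maintext} to a Marchenko--Pastur integral, whose large-$t$ behaviour follows from a Laplace-type analysis at the spectral edge.

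\textbf{Step 1 (population minimizer).} As $M/D=\tau\to\infty$, the empirical objective in Eq.~\eqref{eq:loss} converges to its population counterpart. I take $\mathbf{A}^*$ to be the population minimizer; I expect this limit to be justified by the fact that the $\tau$-dependent terms in Result~\ref{result:main} vanish as $\tau\to\infty$. The population risk is quadratic in $\mathbf{A}$, and its data are rotationally invariant, since $\vb{w}$ is isotropic Gaussian and $\vb{x}\sim\mathcal{N}(\vb{0},I_D/D)$. Hence the minimizer is proportional to the identity. Using $\mathbb{E}[\mathbf{X}\mathbf{X}^\top]=\alpha I$, $\mathbb{E}[\mathbf{S}^2]\to\alpha(\alpha+1)I$ and $\mathbb{E}[\mathbf{X}\boldsymbol\epsilon\boldsymbol\epsilon^\top\mathbf{X}^\top]=\sigma^2\alpha I$, the normal equations give
\[
\tfrac1\alpha\mathbf{A}^*=c_0 I,\qquad c_0=\frac{1}{\alpha+1+\sigma^2}.
\]
As a consistency check, I would also confirm this by letting $\tau\to\infty$ in $F(u,v)$ from Result~\ref{result:main}. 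The bracket then becomes $1$, so $F$ depends only on $g(u)$ and $g(v)$, which is the signature of a scalar update.

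\textbf{Step 2 (reduction to an MP integral).} With this $\mathbf{A}^*$, Eq.~\eqref{eq:generalization_trace_maintext} becomes
\[
\mathcal E_t=\int (1-c_0x)^{2t}\,\rho_{\mathrm{MP}}(x)\,dx,
\]
where $\rho_{\mathrm{MP}}$ is the limiting spectral density of $\mathbf{S}=\mathbf{X}\mathbf{X}^\top$ with ratio $\alpha$. For $\alpha>1$ this law has no atom at zero. It is supported on $[x_-,x_+]$ with $x_\pm=(\sqrt\alpha\pm1)^2$, and its density behaves like $\sqrt{x-x_-}$ near $x_-$ and like $\sqrt{x_+-x}$ near $x_+$. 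The factor $1-c_0x$ is monotone in $x$, with endpoint values
\[
1-c_0x_\pm=\frac{\sigma^2\mp2\sqrt\alpha}{\alpha+1+\sigma^2}.
\]
The largest absolute value of the integrand's base is therefore attained at the lower edge and equals $r:=\frac{2\sqrt\alpha+\sigma^2}{\alpha+1+\sigma^2}$. The upper edge has strictly smaller modulus $|2\sqrt\alpha-\sigma^2|/(\alpha+1+\sigma^2)$ when $\sigma^2>0$. At $\sigma^2=0$ the two edges tie, which only changes the constant, not the rate.

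\textbf{Step 3 (edge asymptotics).} Near $x_-$, write $(1-c_0x)^{2t}=r^{2t}\exp\{2t\log(1-c_0(x-x_-)/r)\}\approx r^{2t}e^{-2tc_0(x-x_-)/r}$. Integrating this against $\sqrt{x-x_-}$ gives $r^{2t}\,\Gamma(3/2)\,(2tc_0/r)^{-3/2}$. The rest of the support, including the upper edge, contributes exponentially less. This yields $\mathcal E_t\asymp t^{-3/2}r^{2t}$. For the context-rich limit $\alpha\gg\sigma^2$ (and large $\alpha$), $r\approx 2\sqrt\alpha/\alpha$, so $r^2\approx4/\alpha$.

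The main obstacle is Step 1: rigorously justifying that the $\tau\to\infty$ limit commutes with the high-dimensional limit and with the depth $t$, so that the $\tau<\infty$ fluctuations of $\mathbf{A}^*$ do not alter the exponential rate. I would handle this through Result~\ref{result:main}: take $\tau\to\infty$ in $F(u,v)$ and identify the singularity closest to the origin in the $(u,v)$-generating function, which determines the rate. The edge Laplace estimate in Step 3 is routine by comparison.
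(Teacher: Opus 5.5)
Your proposal is correct and follows essentially the paper's route. You reduce to the Marchenko--Pastur integral $\int(1-c_0x)^{2t}\rho_{\mathrm{MP}}(x)\,dx$ with $c_0=1/(\alpha+1+\sigma^2)$ and extract $t^{-3/2}r^{2t}$ from the square-root edge at $x_-$. You are in fact more careful than the paper, which asserts this integral without deriving $\frac{1}{\alpha}\mathbf{A}^*=c_0 I$ and claims $\alpha+1+\sigma^2>(1+\sqrt{\alpha})^2$, which is false unless $\sigma^2>2\sqrt{\alpha}$; you instead correctly use $|1-c_0x_+|\le r$ together with the even exponent $2t$, including the tie at $\sigma^2=0$.

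Your stated main obstacle is misplaced, however. The theorem is an iterated limit ($\tau\to\infty$ at fixed $t$, then $t\to\infty$), so no interchange with $t\to\infty$ is needed. Nor is one expected: the finite-$\tau$ analysis in Result~\ref{result:scaling_analysis} gives a $\tau$-dependent rate and a $t^{-1/2}$ prefactor. Your consistency check already closes Step~1. With the bracket equal to $1$, the formula of Result~\ref{result:main} reduces, via the quadratic equation for $g$, to $F(u,v)=\frac{u\,g(u)-v\,g(v)}{u-v}$, which is exactly the two-point function of the scalar update $\frac{1}{\alpha}\mathbf{A}^*=c_0 I$.
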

The proof is given in Appendix~\ref{appendix:exponential_decay}.

The simplified form Theorem~\ref{theorem:exponential_decay} clarifies that the decay rate is controlled by \(\alpha\), namely the effective amount of in-context examples. 
Larger \(\alpha\) leads to faster exponential improvement, meaning that each refinement step becomes more informative when the context is richer.

The main implication is that, sufficiently accurate pretraining makes additional test-time compute genuinely useful for improving performance. 
Moreover, increasing the number of in-context examples in this regime further improves the efficiency of test-time scaling by accelerating the decay rate. 
In this sense, pretraining enables effective test-time scaling, while in-context information determines how rapidly its benefit is realized.

\subsection{Overthinking regime}
\label{subsection:overthinking_regime}
In the overthinking regime, the effect of increasing test-time CoT depth is non-monotone. 
At small depths, additional reasoning can still improve performance, but beyond a certain point it amplifies error instead of reducing it. 
This behavior is governed by task diversity \(\tau\), which determines the quality of pretraining. 
More specifically, when \(\tau\) is small, the learned operator \(\mathbf{A}^*\) retains systematic error, which is then amplified by repeated test-time refinement. 
As a result, the overthinking regime gives rise to a nontrivial optimal stopping depth. 
The following result characterizes its scaling near the phase boundary, together with the asymptotic error achieved at the optimum.

\begin{restatable}{result}{resultoptimaldepth}
  \label{result:optimal_depth}
  Assume that $(\alpha, \tau, \sigma^2)$ is in the overthinking regime.
  Let $t^* := \argmin_{t\in\mathbb Z_{\ge 0}} \mathcal E_t$.
  Then, as \(\tau\to \tau_c(\alpha,\sigma^2)\) from below,
  \begin{align}
    t^*
    =C_t(\alpha,\sigma^2)\ab(\tau_c(\alpha,\sigma^2)-\tau)^{-1}
    \bigl(1+o(1)\bigr),
  \end{align}
  for some positive function \(C_t(\alpha,\sigma^2)\).
  Moreover, the generalization error at $t^*$ satisfies
  \begin{align}
    \mathcal E_{t^*}
    =
    C_E(\alpha,\sigma^2)\,(t^*)^{-1/2}
    \bigl(1+o(1)\bigr),
    \label{eq:Estar_via_tstar}
  \end{align}
  for some positive function \(C_E(\alpha,\sigma^2)\).
\end{restatable}

The detailed derivation is given in Appendix~\ref{appendix:optimal_depth}.

Result~\ref{result:optimal_depth} shows that, in the overthinking regime, the benefit of test-time reasoning is intrinsically finite-depth. 
As \(\tau\) approaches \(\tau_c(\alpha,\sigma^2)\) from below, the optimal stopping depth \(t^*\) diverges, meaning that deeper reasoning remains useful for increasingly longer horizons. 
At the same time, the best achievable error decreases only polynomially, as \((t^*)^{-1/2}\), rather than exponentially. 
Thus, improving pretraining extends the useful range of test-time CoT, but does not immediately restore the efficient exponential scaling.

This behavior reflects the origin of overthinking. 
When \(\tau\) is small, pretraining task diversity is insufficient, and the learned update rule \(\mathbf{A}^*\) is systematically imperfect. 
Repeated test-time refinement then amplifies this residual imperfection, so deeper reasoning eventually becomes harmful. 
The bottleneck is therefore not test-time compute itself, but the quality of the learned rule produced by pretraining.

The broader implication is that test-time scaling is conditional on pretraining quality. 
In the overthinking regime, increasing reasoning depth does not provide an unconditional path to better performance, because its benefit is confined to a finite useful horizon before error amplification takes over. 
Deeper reasoning therefore becomes reliably effective only when pretraining is strong enough to delay the onset of overthinking and sustain useful refinement over longer horizons.

\subsection{Saturation regime}
\label{subsection:saturation_regime}
In the saturation regime, repeated test-time refinement remains stable and converges (\(\Lambda(\alpha,\tau,\sigma^2)<1\)), but the generalization error does not vanish asymptotically (\(\mathcal E_\infty>0\)). 
This nonzero limit reflects an information bottleneck at test time. 
Even when pretraining is sufficiently accurate, too few in-context examples leave the prompt itself incomplete, so some information required for prediction is simply unavailable at inference time. 
It cannot reconstruct information that is absent from the context.

This mechanism also explains a seemingly counterintuitive phenomenon: increasing the number of in-context examples does not always lead to better long-horizon performance.
As shown in Fig.~\ref{fig:phase_diagram}C, a setting with larger $\alpha$ may perform better at small depths, yet be overtaken at sufficiently large depths by the saturation regime because long-horizon behavior is governed by stability of the test-time refinement dynamics. 
This qualitative prediction is consistent with \citep{ge2025innatereasoningenoughincontext}, which reports that adding more in-context CoT  examples does not monotonically improve performance.

These results imply that, in this regime, performance is limited mainly by the amount of in-context information rather than by test-time compute. 
Therefore, increasing the CoT depth alone yields diminishing returns, while enriching prompt examples is the more effective way to improve performance. 
This identifies the saturation regime as an information-limited phase of test-time scaling.

\section{Experiments}
\label{section:experiments}
In the preceding sections, we theoretically analyzed how the effect of test-time CoT depends on the number of pretraining tasks and in-context examples using a simplified linear attention model. 
Here, we ask if the same qualitative predictions continue to hold in more expressive settings. To this end, we consider two extensions: a full linear attention model and a softmax attention model. 
As in the theoretical setting, \(L\) denotes the number of in-context examples per task and \(M\) denotes the number of training tasks. 
In both models, we use the same basic prompt format as in Section~\ref{section:model}, initialize \(\vb{w}_\mu^{\text{init}}\) as a random Gaussian vector, and read out the prediction from the parameter block of the final token. 
In both extensions, the attention mechanism is parameterized by query, key, and value matrices.

Because these expanded models permit more general token interactions than the simplified theory, we explicitly control which source tokens are visible to the final parameter-estimate token through an attention mask. 
Concretely, at each step, we treat the final parameter-estimate token as the query token and allow it to attend to the \(L\) in-context example tokens and all previously generated estimate tokens, while masking out the query token itself. 
This mask makes CoT an iterative update process anchored to the observed examples, while allowing previously generated estimates to be explicitly reused as intermediate states.
These details are given in Appendix~\ref{appendix:experimental_details}.

\begin{figure}[h]
    \centering
    \includegraphics[width=0.8\textwidth]{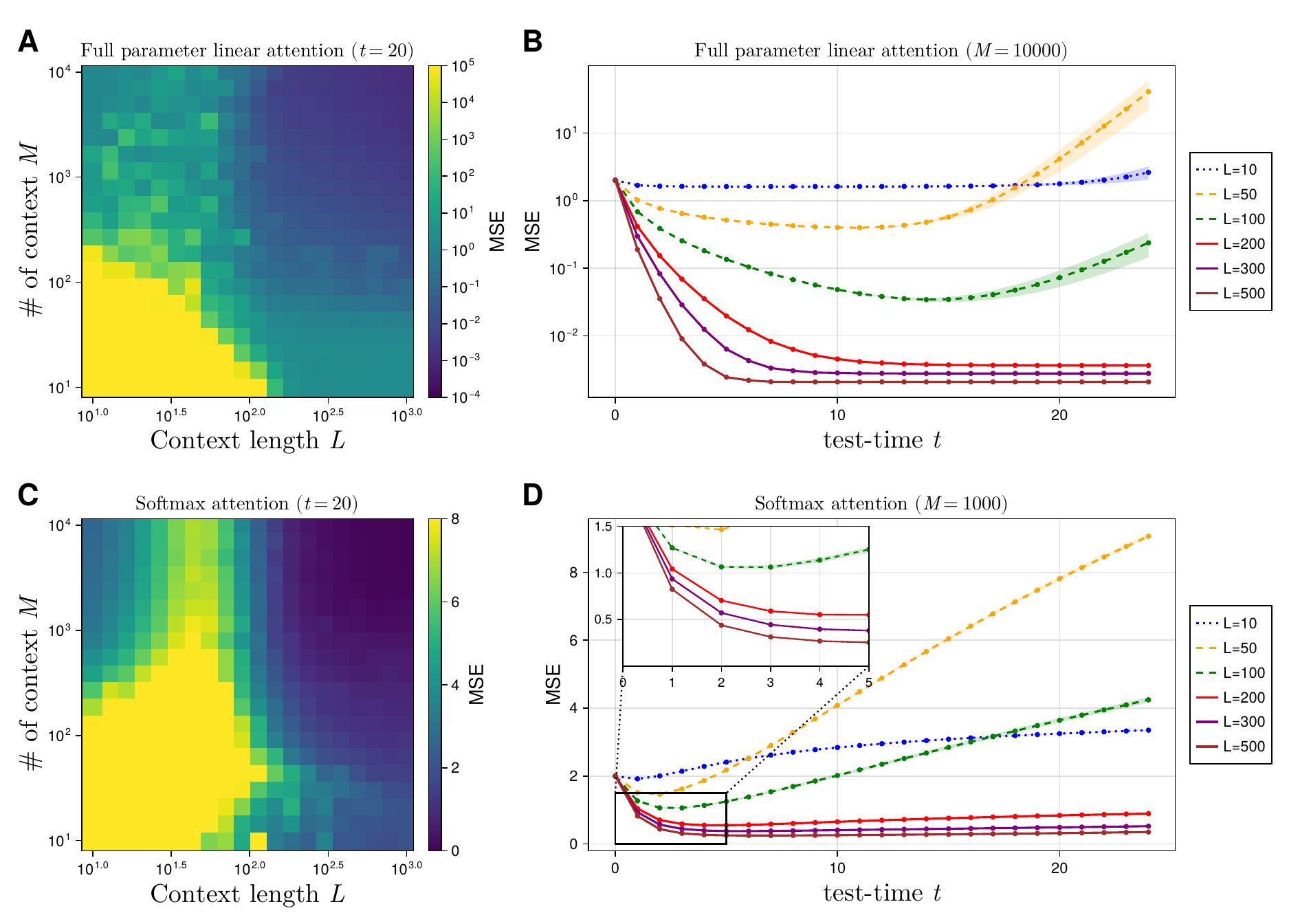}
    \caption{
      \textbf{CoT experiments in the fully learned linear attention and softmax attention models.}
      (A, C) Phase diagrams of the test-time generalization error at \(t=20\) as a function of the context length \(L\) and the number of training tasks \(M\), for (A) the fully learned linear attention model and (C) the softmax attention model, respectively.
      (B, D) Test-time generalization error as a function of reasoning depth in the (B) fully learned linear attention model and (D) the softmax attention model, respectively. 
      The solid curves correspond to an exponential decay regime, the dashed curves to an overthinking regime, and the dotted curves to a saturation regime. 
      Parameters: (A-D) \(\lambda=10^{-5}\), \(\sigma^2=0\), and \(D=50\). 
      Error bars represent the standard error of the mean over 5 trials per point.
    }
    \label{fig:experiments}
\end{figure}

Figures~\ref{fig:experiments}A and~C show the phase diagrams for long test-time CoT ($t=20$) in the fully learned linear attention model and the softmax attention model, respectively.
These phase diagrams suggest that the broad qualitative picture from the theory holds in more expressive models.
In particular, when $M$ is small, the overthinking regime appears more prominently, whereas as $M$ increases, the behavior separates into saturation and decay regimes depending on the number of in-context examples $L$.

Figures~\ref{fig:experiments}B and~D examine the corresponding test-time dynamics at representative values of $M$.
The results also reproduce the three characteristic phases predicted by the theory at a qualitative level.
The solid curves exhibit a decay regime; in the fully learned linear attention model, this decay is close to the exponential behavior predicted by the theory.
In the softmax attention model, however, the decrease appears more gradual over the displayed range, suggesting that while the same qualitative phase structure persists, the precise decay rate depends on the attention parameterization.
A possible reason is that softmax normalization constrains the effective update size at each step, making the dynamics less sharply exponential than in linear attention.
The dashed curves show an overthinking regime, where the error initially decreases but eventually increases as the number of reasoning steps increases.
The dotted curves correspond to a saturation regime, in which the error improves little after an initial period and then remains nearly constant over a long range of reasoning steps.

Overall, the behavior of these regimes is consistent with the theoretical predictions on how $L$ and $M$ control the effectiveness of test-time CoT.
First, the decay regime appears consistent with the prediction in Section~\ref{subsection:exponential_polynomial_scaling_regime}: increasing the number of in-context examples accelerates the rate of error decay.
Second, the overthinking regime is broadly consistent with the prediction in Section~\ref{subsection:overthinking_regime}, in the sense that pretraining on more tasks makes longer test-time CoT beneficial over a wider range of reasoning steps.
Finally, in the long-CoT limit, the overthinking regime can yield worse performance than the saturation regime with fewer in-context examples, which is also in line with the prediction in Section~\ref{subsection:saturation_regime}.

\section{Conclusion}
\label{section:conclusion}
We studied the effect of test-time chain-of-thought depth on generalization in a solvable model of in-context learning for linear regression.
Using a two-point generating function approach combined with random matrix theory, we derived an exact asymptotic formula for the test-time generalization error dynamics.
This formula revealed a sharp phase transition separating four qualitatively distinct regimes of test-time scaling.
When pretraining is sufficient, deeper reasoning reduces the error exponentially at a rate controlled by the in-context examples.
When pretraining is insufficient, reasoning beyond a finite optimal depth amplifies error, and the useful range of test-time compute grows only as pretraining improves.
When in-context information is limited, reasoning remains stable but saturates at a nonzero error floor, since it cannot recover information absent from the context.
Our results provide a unified theoretical account of empirical test-time scaling phenomena, including improvement, saturation and overthinking, and clarify that the primary bottleneck shifts from pretraining quality to context information to test-time compute across these regimes.
These findings not only lay a theoretical foundation for future research on test-time scaling, but also offer practical insights for the design of pretraining strategies and test-time reasoning in LLMs.

\paragraph{Limitations.}
While our analysis provides an insightful unified theoretical framework for understanding the test-time CoT dynamics, 
it is restricted to a linear in-context learner in which CoT is formalized as iterative weight refinements.
Extending the theory to cover richer CoT structures and realistic architectures to characterize the reasoning process is an important direction for future work.

\begin{ack}
We thank Mary Letey and Alex Atanasov for helpful discussions.
K.T. acknowledges support from the 2025 Young Researchers Overseas Dispatch Program,
DAIKIN Advanced Interdisciplinary Research (AIR)-Vision Mobility Grant.
K.T. was also supported by JST BOOST NAIS Grant Number JPMJBS2418.
C.P. is supported by an NSF CAREER Award (IIS-2239780), DARPA grants DIAL-FP-038 and AIQ-HR00112520041, the Simons Collaboration on the Physics of Learning and Neural Computation, and the William F. Milton Fund from Harvard University. 
This work has been made possible in part by a gift from the Chan Zuckerberg Initiative Foundation to establish the Kempner Institute for the Study of Natural and Artificial Intelligence.
The authors declare no competing interests.
\end{ack}

\bibliography{ref}
\bibliographystyle{unsrtnat}

\newpage
\appendix
\begin{center}
\huge{\textbf{Appendix}}    
\end{center}

In this appendix, we present a systematic asymptotic derivation of the high-dimensional theory underlying our results.
Our analysis is based on standard cavity and resolvent methods for random matrices and high-dimensional random systems.
The derivation is not fully rigorous in the mathematical sense: several steps rely on self-averaging, leave-one-out replacements, and deterministic-equivalent substitutions that we do not justify here in complete detail.
However, these manipulations are standard in the random matrix literature, and they lead to a closed system of equations that is internally consistent.
Moreover, the resulting predictions are in excellent quantitative agreement with our numerical experiments, which provides strong evidence that the theory derived below correctly captures the asymptotic behavior of the model.

\section{Experimental justification of parameter simplification}
\label{appendix_justification_of_parameter_simplification}

In this appendix, we provide an empirical justification for the simplified parameterization used in Section~\ref{section:model}.
The main text fixes all blocks of \(\mathbf{W}\) and all non-\(\mathbf{A}\) blocks of \(\mathbf{V}\), leaving only \(\mathbf{A}\) as a learnable parameter.
Although this is a strong reduction of the full linear-attention parameterization, it is motivated by the structure that emerges when the corresponding full-parameter model is trained directly.

\begin{figure}[h]
    \centering
    \includegraphics[width=0.8\linewidth]{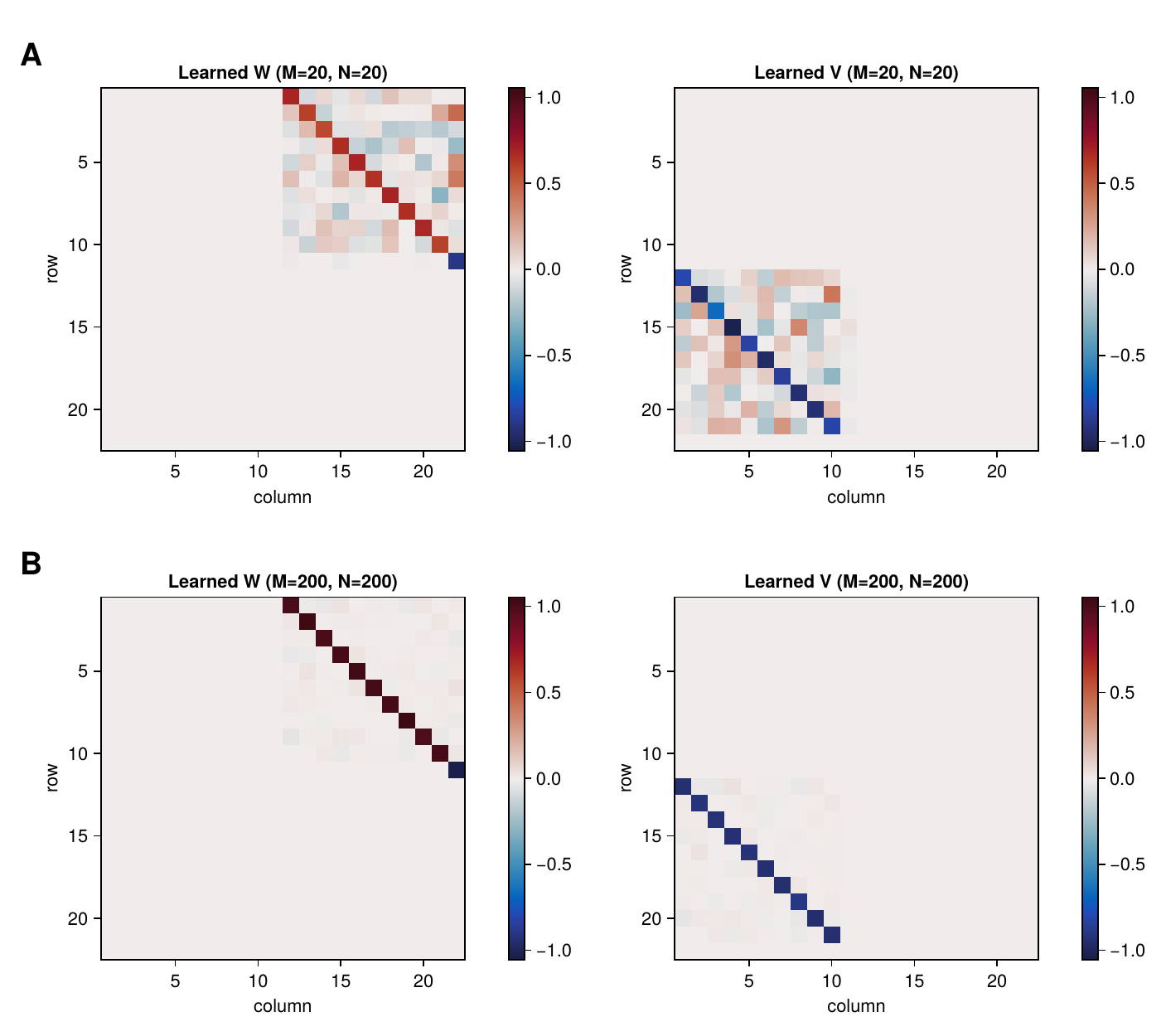}
    \caption{
    Heatmaps of the pretrained full-parameter matrices \(\mathbf{W}\) and \(\mathbf{V}\).
    Parameters: (A,B) $\lambda = 10^{-3}, \sigma = 0.1$ (A) \(M=N=20\), (B) \(M=N=200\).}
    \label{fig:placeholder}
\end{figure}

We train the full model, where all entries of \(\mathbf{W}\) and \(\mathbf{V}\) are learnable, and show the learned matrices in Figure~\ref{fig:placeholder}.
The results show that the learned \(\mathbf{W}\) develops the prescribed routing structure: the block mapping the parameter-estimate coordinates to the feature coordinates becomes close to an identity map, while the marker coordinate learns the corresponding sign structure.
Similarly, the learned \(\mathbf{V}\) concentrates its mass on the block that maps feature coordinates into the parameter-estimate coordinates, which is precisely the block represented by \(\mathbf{A}\) in the simplified model.
The remaining blocks stay close to zero.

This behavior becomes clearer as the number of training tasks and examples increases.
Thus, the simplified parameterization should not be viewed as imposing an arbitrary constraint. 
Rather, for analytical tractability, it fixes part of the model to the parameter structure that is naturally approached under full-parameter training, while keeping the essential component \(\mathbf{A}\), which governs the test-time update of the parameter estimate, learnable.
The detailed setup of the full-parameter model is given in Appendix~\ref{appendix:experimental_details}.

\section{Derivation of the readout formula}

\label{appendix:readout}

In this appendix, we derive the readout formula in Eq.~\eqref{eq:readout}.
First, recall that, for task $\mu$, the prompt embedding is
\begin{align}
    \mathbf{E}_\mu
    =
    \begin{bmatrix}
        \mathbf{X}_\mu & \vb{0}_{D} \\
        \vb{y}_\mu^\top & 0 \\
        0_{D\times L} & \hat{\vb{w}}_{\mu}^{\mathrm{init}} \\
        0_{1\times L} & 1
    \end{bmatrix}
    \in \mathbb{R}^{(2D+2)\times(L+1)},
\end{align}
where
\begin{align}
    \mathbf{X}_\mu \in \mathbb{R}^{D\times L},
    \qquad
    \vb{y}_\mu \in \mathbb{R}^{L},
    \qquad
    \hat{\vb{w}}_{\mu}^{\mathrm{init}} \in \mathbb{R}^{D}.
\end{align}
We partition all matrices according to the block structure $(D,1,D,1)$, corresponding respectively to the feature block, the response block, the parameter-estimate block, and the marker block.
The attention layer is
\begin{align}
    \textbf{Atten}(\mathbf{E};\theta)
    =
    \mathbf{E} + \frac{1}{L} \mathbf{V} \mathbf{E} \mathbf{E}^\top \mathbf{W} \mathbf{E},
\end{align}
and the readout extracts the parameter-estimate block from the final column:
\begin{align}
    \textbf{Read}(\mathbf{E})
    :=
    \mathbf{E}_{(D+2):(2D+1),(L+1)}
    \in \mathbb{R}^{D}.
\end{align}

To isolate the mechanism relevant to the parameter-estimate update, we retain only the blocks that are used in the parameterization of the main text and set the other potentially contributing blocks to zero. Specifically, we consider
\begin{align}
\mathbf{W}=
\begin{pmatrix}
  0 & 0 & W_{13} & 0\\
  0 & 0 & 0 & W_{24}\\
  0 & 0 & 0 & 0\\
  0 & 0 & 0 & 0
\end{pmatrix},
\qquad
\mathbf{V}=
\begin{pmatrix}
  0 & 0 & 0 & 0\\
  0 & 0 & 0 & 0\\
  V_{31} & 0 & 0 & 0\\
  0 & 0 & 0 & 0
\end{pmatrix},
\end{align}
where $V_{31}\in\mathbb{R}^{D\times D}$, $W_{13}\in\mathbb{R}^{D\times D}$, and $W_{24}\in\mathbb{R}$.

Since the first $L$ columns of \(\mathbf{E}_\mu\) have nonzero entries only in the first two block rows, while the last column has nonzero entries only in the third and fourth block rows, the only part of \(\mathbf{W}\) that affects the last column through the readout-relevant channel is
\begin{align}
    \left(\mathbf{W} \mathbf{E}_\mu\right)_{:,(L+1)}
    =
    \begin{bmatrix}
        W_{13}\hat{\vb{w}}_{\mu}^{\mathrm{init}} \\
        W_{24} \\
        0 \\
        0
    \end{bmatrix}.
\end{align}
The lower two blocks are immaterial in what follows, because they do not contribute to the first block of \(\mathbf{E}_\mu \mathbf{E}_\mu^\top \mathbf{W} \mathbf{E}_\mu\), which is the only part later selected by \(V_{31}\).
Multiplying by \(\mathbf{E}_\mu^\top\) from the left, we only need the contribution of the first two block rows of \(\left(\mathbf{W} \mathbf{E}_\mu\right)_{:,(L+1)}\). This gives
\begin{align}
    \left(\mathbf{E}_\mu^\top \mathbf{W} \mathbf{E}_\mu\right)_{:,(L+1)}
    &=
    \mathbf{E}_\mu^\top
    \begin{bmatrix}
        W_{13}\hat{\vb{w}}_{\mu}^{\mathrm{init}} \\
        W_{24} \\
        0 \\
        0
    \end{bmatrix} \\
    &=
    \begin{bmatrix}
        \mathbf{X}_\mu^\top W_{13}\hat{\vb{w}}_{\mu}^{\mathrm{init}} + \vb{y}_\mu W_{24} \\
        0
    \end{bmatrix}.
\end{align}
Multiplying by \(\mathbf{E}_\mu\) from the left, the first block of the last column becomes
\begin{align}
    \left(\mathbf{E}_\mu \mathbf{E}_\mu^\top \mathbf{W} \mathbf{E}_\mu\right)_{1,(L+1)}
    =
    \mathbf{X}_\mu \left(\mathbf{X}_\mu^\top W_{13}\hat{\vb{w}}_{\mu}^{\mathrm{init}} + \vb{y}_\mu W_{24}\right).
\end{align}
Since only the first block is needed by \(V_{31}\), this is the only term we keep.
Because only the \((3,1)\) block of \(\mathbf{V}\) contributes to the third row block, the update of the parameter-estimate slot is
\begin{align}
    \textbf{Read}(\textbf{Atten}(\mathbf{E}_\mu;\theta))
    =
    \hat{\vb{w}}_{\mu}^{\mathrm{init}}
    +
    \frac{1}{L}
    V_{31}
    \mathbf{X}_\mu
    \left(
        \mathbf{X}_\mu^\top W_{13}\hat{\vb{w}}_{\mu}^{\mathrm{init}}
        +
        \vb{y}_\mu W_{24}
    \right).
\end{align}
This is the general readout formula induced by the three relevant blocks $V_{31}$, $W_{13}$, and $W_{24}$.

Here, we specialize to the parameterization:
\begin{align}
    V_{31}=-D\mathbf{A},
    \qquad
    W_{13}=I_D,
    \qquad
    W_{24}=-1,
\end{align}
with \(\mathbf{A}\in\mathbb{R}^{D\times D}\). Substituting these values into the previous expression gives
\begin{align}
    \textbf{Read}(\textbf{Atten}(\mathbf{E}_\mu;\theta))
    &=
    \hat{\vb{w}}_{\mu}^{\mathrm{init}}
    -
    \frac{D}{L}
    \mathbf{A}
    \mathbf{X}_\mu
    \left(
        \mathbf{X}_\mu^\top \hat{\vb{w}}_{\mu}^{\mathrm{init}}
        -
        \vb{y}_\mu
    \right) \\
    &=
    \hat{\vb{w}}_{\mu}^{\mathrm{init}}
    -
    \frac{D}{L}
    \mathbf{A}
    \left(
        \mathbf{X}_\mu \mathbf{X}_\mu^\top \hat{\vb{w}}_{\mu}^{\mathrm{init}}
        -
        \mathbf{X}_\mu \vb{y}_\mu
    \right),
\end{align}
which is Eq.~\eqref{eq:readout}.
The choice $W_{13}=I_D$ means that the current estimate $\hat{\vb{w}}_{\mu}^{\mathrm{init}}$ is passed to the interaction term without distortion. 
Likewise, $W_{24}=-1$ attaches the response vector with the opposite sign, so that the quantity inside the parentheses becomes
\begin{align}
    \mathbf{X}_\mu^\top \hat{\vb{w}}_{\mu}^{\mathrm{init}} - \vb{y}_\mu,
\end{align}
which is exactly the vector of prediction residuals on the context examples. 
Therefore, with these two choices, the attention layer forms the standard regression error signal before applying the learned linear transform \(\mathbf{A}\).

The remaining block \(V_{31}=-D\mathbf{A}\) determines how this error signal is converted into an update of the parameter-estimate slot. 
The factor \(\mathbf{X}_\mu\) maps residuals back to parameter space, while \(\mathbf{A}\) acts as a learned preconditioner. 
In this sense, \(W_{13}=I_D\) and \(W_{24}=-1\) specify the canonical gradient-like structure, and the learnable content of the update is concentrated in \(\mathbf{A}\).

\paragraph{Connection to gradient descent.}
Under this specialization, the term
\begin{align}
    \mathbf{X}_\mu \mathbf{X}_\mu^\top \hat{\vb{w}}_{\mu}^{\mathrm{init}} - \mathbf{X}_\mu \vb{y}_\mu
\end{align}
is proportional to the empirical gradient of the squared loss
\begin{align}
    \mathcal{L}_\mu(\vb{w})
    :=
    \frac{1}{2L}\lVert \mathbf{X}_\mu^\top \vb{w} - \vb{y}_\mu\rVert_2^2,
\end{align}
since
\begin{align}
    \nabla \mathcal{L}_\mu(\vb{w})
    =
    \frac{1}{L}
    \left(
        \mathbf{X}_\mu \mathbf{X}_\mu^\top \vb{w} - \mathbf{X}_\mu \vb{y}_\mu
    \right).
\end{align}
Hence,
\begin{align}
    \textbf{Read}(\textbf{Atten}(\mathbf{E}_\mu;\theta))
    =
    \hat{\vb{w}}_{\mu}^{\mathrm{init}}
    -
    D \mathbf{A} \nabla \mathcal{L}_\mu(\hat{\vb{w}}_{\mu}^{\mathrm{init}}),
\end{align}
so the attention layer performs a preconditioned gradient step on the task parameter.

\section{Derivation of Result~\ref{result:main}}
\label{appendix:main_result}

In this appendix, we derive Result~\ref{result:main}, which characterizes the asymptotic generalization error.
The key object is the learned matrix \(\mathbf{A}^*=\mathbf{B}\mathbf{C}^{-1}\), whose randomness is inherited from the pre-training tasks.
Since the inference error depends on both the random matrix \(\mathbf{A}^*\) and the inference-time sample covariance \(\mathbf{S}=\mathbf{X}\mathbf{X}^\top\), the main challenge is to control their coupled effect in the high-dimensional limit.

Our derivation proceeds in four steps.
First, in Appendix~\ref{appendix:main_result_reduction}, we rewrite the generalization error as a finite linear combination of derivatives of a two-variable generating function \(F(u,v)\).
This step reduces the original prediction problem to the evaluation of a resolvent-type quantity associated with the inference dynamics.
Next, in Appendix~\ref{appendix:main_result_extended_matrix}, we introduce an extended block matrix \(\mathcal M(u,v)\) whose resolvent contains \(F(u,v)\) as the normalized trace of a single block.
This reformulation is essential because it removes the inverse \(\mathbf{C}^{-1}\) appearing in \(\mathbf{A}^*=\mathbf{B}\mathbf{C}^{-1}\), thereby converting the problem into a more tractable linear resolvent analysis.

The next task is to characterize the resolvent of \(\mathcal M(u,v)\) through a finite collection of scalar order parameters.
To this end, in Appendix~\ref{appendix:main_result_order_parameters}, we define the relevant normalized traces and derive the exact algebraic identities that follow from \(\mathbf{G}\mathcal M(u,v)=\mathbf{I}\), where \(\mathbf{G}=\mathcal M(u,v)^{-1}\).
These identities alone do not close, because they involve mixed traces with insertions of \(\mathbf{S}\), \(\mathbf{B}\), \(\mathbf{B}^\top\), and \(\mathbf{C}\).
We therefore derive cavity equations for these mixed quantities: Appendix~\ref{appendix:main_result_cavity_s} treats the \(\mathbf{S}\)-dependent terms by a leave-one-sample-out argument, while Appendices~\ref{appendix:main_result_cavity_b}, \ref{appendix:main_result_cavity_beta}, and \ref{appendix:main_result_cavity_c} treat the \(\mathbf{B}\)-, \(\mathbf{B}^\top\)-, and \(\mathbf{C}\)-dependent terms by leave-one-task-out arguments.
Combining these relations yields, in Appendix~\ref{appendix:main_result_closed_system}, a closed deterministic self-consistent system for the primary order parameters.

Finally, in Appendix~\ref{appendix:main_result_lambda_zero}, we solve this closed system explicitly in the limit \(\lambda\to0\).
The solution is expressed in terms of the scalar resolvent \(g(u)\) of a Wishart-type matrix, and in particular yields a closed-form expression for the block \(m_{13}(u,v)\).
Substituting this expression into the derivative formula obtained in Appendix~\ref{appendix:main_result_reduction} gives the claimed representation of the asymptotic generalization error.
In this way, the proof isolates the effect of pre-training through the finite-dimensional order-parameter system and makes the dependence of the final error on \(\alpha\), \(\tau\), and \(\sigma^2\) fully explicit.

We begin by restating the result.

\resultmain*

We now derive this formula in a sequence of reductions.

\subsection{Preliminaries}
For clarity, we first introduce the notation used throughout the appendix.
For each pre-training task \(\mu=1,\dots,M\), define
\begin{align}
  \vb{v}_\mu
  :=
  \frac{1}{\alpha}\mathbf{X}_\mu \vb{y}_\mu
  \in \mathbb R^D,
  \qquad
  \alpha=\frac{L}{D},
  \qquad
  \tau=\frac{M}{D},
\end{align}
and recall that, for \(k=0\), the empirical minimizer \(\mathbf{A}^*\) is given by
\begin{align}
  \mathbf{A}^*
  =
  \left[
    \frac{1}{M}\sum_{\mu=1}^M \vb{w}_\mu \vb{v}_\mu^\top
  \right]
  \left[
    \frac{1}{M}\sum_{\mu=1}^M \vb{v}_\mu \vb{v}_\mu^\top
    + \frac{\lambda}{2}\mathbf{I}
  \right]^{-1}.
\end{align}
Accordingly, throughout this appendix we write
\begin{align}
  \mathbf{B}:=\frac{1}{M}\sum_{\mu=1}^M \vb{w}_\mu \vb{v}_\mu^\top,
  \qquad
  \mathbf{C}:=\frac{1}{M}\sum_{\mu=1}^M \vb{v}_\mu \vb{v}_\mu^\top+\frac{\lambda}{2}\mathbf{I},
  \qquad
  \mathbf{A}:=\mathbf{B}\mathbf{C}^{-1},
\end{align}
so that \(\mathbf{A}=\mathbf{A}^*\).
For the inference-time task, we define
\begin{align}
  \mathbf{S}:=\mathbf{X}\mathbf{X}^\top=\sum_{\ell=1}^L \vb{x}_\ell \vb{x}_\ell^\top \in \mathbb R^{D\times D}.
\end{align}

In the large-\(D\) analysis below, we repeatedly use the effective second-order statistics of the pre-training pair \((\vb{w}_\mu,\vb{v}_\mu)\).
Since
\(
 \vb{v}_\mu=\alpha^{-1}\mathbf{X}_\mu\vb{y}_\mu
\)
and
\(
 \vb{y}_\mu=\mathbf{X}_\mu^\top \vb{w}_\mu+\vb{\epsilon}_\mu
\),
the isotropy of the Gaussian design implies
\begin{align}
  \frac{1}{D}\E\|\vb{w}_\mu\|^2 &= 1,
  \qquad
  \frac{1}{D}\E\!\left[\vb{w}_\mu^\top \vb{v}_\mu\right] = 1,
  \qquad
  \frac{1}{D}\E\|\vb{v}_\mu\|^2 \to c,
  \label{eq:appendix_wv_second_moments}
\end{align}
where
\begin{align}
  c:=1+\frac{1+\sigma^2}{\alpha}.
\end{align}
Equivalently, at the matrix level,
\begin{align}
  \E[\vb{w}_\mu \vb{w}_\mu^\top] = I_D,
  \qquad
  \E[\vb{w}_\mu \vb{v}_\mu^\top] = I_D,
  \qquad
  \frac{1}{D}\Tr \E[\vb{v}_\mu \vb{v}_\mu^\top] \to c.
  \label{eq:appendix_wv_covariances}
\end{align}
Therefore, whenever \(Q^{[\mu]}\in \mathbb R^{D\times D}\) is independent of \((\vb{w}_\mu,\vb{v}_\mu)\) and has bounded normalized trace norm, the corresponding quadratic forms self-average as
\begin{align}
  \frac{1}{D}\vb{w}_\mu^\top Q^{[\mu]}\vb{w}_\mu
  &\;\asymp\;
  \frac{1}{D}\Tr Q^{[\mu]},
  \nonumber\\
  \frac{1}{D}\vb{v}_\mu^\top Q^{[\mu]}\vb{w}_\mu
  &\;\asymp\;
  \frac{1}{D}\Tr Q^{[\mu]},
  \nonumber\\
  \frac{1}{D}\vb{v}_\mu^\top Q^{[\mu]}\vb{v}_\mu
  &\;\asymp\;
  c\,\frac{1}{D}\Tr Q^{[\mu]}.
  \label{eq:appendix_wv_quadratic_forms}
\end{align}
Later, \(Q^{[\mu]}\) will typically be a block of a leave-one-task-out resolvent \(\mathbf{G}^{[\mu]}\), so \eqref{eq:appendix_wv_quadratic_forms} is precisely the input used when replacing inner products such as
\(
\vb{v}_\mu^\top \mathbf{G}_{ab}^{[\mu]}\vb{w}_\mu
\)
and
\(
\vb{v}_\mu^\top \mathbf{G}_{ab}^{[\mu]}\vb{v}_\mu
\)
by normalized traces.

\subsection{Reduction of the generalization error to a generating function}
\label{appendix:main_result_reduction}

We first show that the generalization error can be expressed in terms of a two-variable generating function.
This representation is the starting point of the random matrix analysis.

When \(k=0\), the pre-training prompt contains only the initial estimate \(\hat{\vb{w}}_{\mu,0}=\vb{0}\), and the learned matrix \(\mathbf{A}=\mathbf{A}^*\) is used at inference time to update the parameter estimate by
\begin{align}
  \hat{\vb{w}}_{t+1}
  =
  \hat{\vb{w}}_t
  -
  \frac{1}{\alpha}
  \mathbf{A}\left(\mathbf{S}\hat{\vb{w}}_t - \mathbf{X}\vb{y}\right),
  \qquad
  \hat{\vb{w}}_0=\vb{0},
\end{align}
where \(\mathbf{S}=\mathbf{X}\mathbf{X}^\top\) and \(\vb{y}=\mathbf{X}^\top \vb{w}\) in the noiseless inference task.
Since \(\mathbf{X}\vb{y}=\mathbf{X}\mathbf{X}^\top \vb{w}=\mathbf{S}\vb{w}\), this recursion becomes
\begin{align}
  \hat{\vb{w}}_{t+1}
  =
  \hat{\vb{w}}_t
  -
  \frac{1}{\alpha}\mathbf{A} \mathbf{S}(\hat{\vb{w}}_t-\vb{w}).
\end{align}
Therefore, the estimation error \(\vb{e}_t:=\vb{w}-\hat{\vb{w}}_t\) satisfies the linear recursion
\begin{align}
  \vb{e}_{t+1}
  =
  \left(\mathbf{I}-\frac{1}{\alpha}\mathbf{A}\mathbf{S}\right)\vb{e}_t,
  \qquad
  \vb{e}_0=\vb{w}.
\end{align}
Iterating this relation yields
\begin{align}
  \vb{w}-\hat{\vb{w}}_t
  =
  \left(\mathbf{I}-\frac{1}{\alpha}\mathbf{A}\mathbf{S}\right)^t \vb{w}.
\end{align}

Using this identity, the generalization error can be written as
\begin{align}
  \mathcal E_t
  :=
  \frac{1}{D}\E\bigl[\|\vb{w}-\hat{\vb{w}}_t\|^2\bigr]
  =
  \frac{1}{D}\E\left[
    \vb{w}^\top
    \left(\mathbf{I}-\frac{1}{\alpha}\mathbf{S}\mathbf{A}^\top\right)^t
    \left(\mathbf{I}-\frac{1}{\alpha}\mathbf{A}\mathbf{S}\right)^t
    \vb{w}
  \right].
\end{align}
Since \(\vb{w}\sim \mathcal N(\vb{0},\mathbf{I}_D)\) is independent of the inference design matrix \(\mathbf{X}\), we may average over \(\vb{w}\) first and obtain
\begin{align}
  \mathcal E_t
  =
  \frac{1}{D}\E\Tr\left[
    \left(\mathbf{I}-\frac{1}{\alpha}\mathbf{A}\mathbf{S}\right)^t
    \left(\mathbf{I}-\frac{1}{\alpha}\mathbf{S}\mathbf{A}^\top\right)^t
  \right].
  \label{eq:appendix_generalization_trace}
\end{align}
Here and below, \(\E\) denotes the expectation over both the pre-training randomness defining \(\mathbf{A}\) and the inference-time randomness defining \(\mathbf{S}\).

To extract this quantity, we introduce the two-variable generating function
\begin{align}
  F(u,v)
  :=
  \frac{1}{D}\E\Tr\Bigl[
    (\mathbf{I}-u\mathbf{A}\mathbf{S})^{-1}(\mathbf{I}-v\mathbf{S}\mathbf{A}^\top)^{-1}
  \Bigr].
  \label{eq:appendix_F_definition}
\end{align}
Expanding both resolvents around \(u=v=0\), we have the absolutely formal power series
\begin{align}
  F(u,v) =
  \sum_{p,q\ge 0}
  u^p v^q\,
  \frac{1}{D}\E\Tr\bigl[(\mathbf{A}\mathbf{S})^p(\mathbf{S}\mathbf{A}^\top)^q\bigr].
  \label{eq:appendix_F_series}
\end{align}
On the other hand, expanding the powers in \eqref{eq:appendix_generalization_trace} by the binomial formula gives
\begin{align}
  \mathcal E_t
  &=
  \frac{1}{D}\E\Tr\left[
    \sum_{p=0}^t \binom{t}{p}\left(-\frac{1}{\alpha}\right)^p (\mathbf{A}\mathbf{S})^p
    \sum_{q=0}^t \binom{t}{q}\left(-\frac{1}{\alpha}\right)^q (\mathbf{S}\mathbf{A}^\top)^q
  \right] \notag\\
  &=
  \sum_{p=0}^t\sum_{q=0}^t
  \binom{t}{p}\binom{t}{q}
  \left(-\frac{1}{\alpha}\right)^{p+q}
  \frac{1}{D}\E\Tr\bigl[(\mathbf{A}\mathbf{S})^p(\mathbf{S}\mathbf{A}^\top)^q\bigr].
\end{align}
Comparing this with \eqref{eq:appendix_F_series}, we conclude that
\begin{align}
  \mathcal E_t
  =
  \sum_{p=0}^t\sum_{q=0}^t
  \binom{t}{p}\binom{t}{q}
  \left(-\frac{1}{\alpha}\right)^{p+q}
  \frac{1}{p!\,q!}
  \partial_u^p\partial_v^q F(u,v)\Big|_{u=v=0}.
  \label{eq:appendix_generalization_from_F}
\end{align}

Equation~\eqref{eq:appendix_generalization_from_F} shows that the asymptotic evaluation of \(\mathcal E_t\) reduces to the determination of the scalar function \(F(u,v)\).
The remaining task is therefore to compute the large-\(D\) limit of \(F(u,v)\).

\subsection{Extended matrix representation of the resolvent}
\label{appendix:main_result_extended_matrix}

We next rewrite \(F(u,v)\) as a block trace of the resolvent of a larger matrix.
This step is crucial because the matrix \(\mathbf{A}=\mathbf{B}\mathbf{C}^{-1}\) contains the inverse \(\mathbf{C}^{-1}\), which is inconvenient to manipulate directly.
The extended matrix formulation removes this inverse by embedding \(\mathbf{A}\) into a linear block system.

Recall that
\begin{align}
  \mathbf{A}=\mathbf{B}\mathbf{C}^{-1},
  \qquad
  \mathbf{A}^\top=\mathbf{C}^{-1}\mathbf{B}^\top,
\end{align}
where
\begin{align}
  \mathbf{B}=\frac{1}{M}\sum_{\mu=1}^M \vb{w}_\mu \vb{v}_\mu^\top,
  \qquad
  \mathbf{C}=\frac{1}{M}\sum_{\mu=1}^M \vb{v}_\mu \vb{v}_\mu^\top+\frac{\lambda}{2}\mathbf{I}.
\end{align}
We introduce the extended block matrix
\begin{align}
  \mathcal M(u,v)
  :=
  \begin{pmatrix}
    \mathbf{I} & -u\mathbf{B} & -\mathbf{I} & 0 \\
    -\mathbf{S} & \mathbf{C} & 0 & 0 \\
    0 & 0 & \mathbf{I} & -v\mathbf{S} \\
    0 & 0 & -\mathbf{B}^\top & \mathbf{C}
  \end{pmatrix}
  \in \mathbb R^{4D\times 4D},
\end{align}
and denote its inverse by
\begin{align}
  \mathbf{G}(u,v):=\mathcal M(u,v)^{-1}.
\end{align}
We write \(\mathbf{G}\) in \(D\times D\) block form as
\begin{align}
  \mathbf{G}=
  \begin{pmatrix}
    \mathbf{G}_{11} & \mathbf{G}_{12} & \mathbf{G}_{13} & \mathbf{G}_{14} \\
    \mathbf{G}_{21} & \mathbf{G}_{22} & \mathbf{G}_{23} & \mathbf{G}_{24} \\
    0 & 0 & \mathbf{G}_{33} & \mathbf{G}_{34} \\
    0 & 0 & \mathbf{G}_{43} & \mathbf{G}_{44}
  \end{pmatrix}.
\end{align}
The lower-left zero block follows from the upper block-triangular structure of \(\mathcal M(u,v)\).

To identify \(\mathbf{G}_{13}\), it is convenient to decompose \(\mathcal M(u,v)\) as
\begin{align}
  \mathcal M(u,v)
  =
  \begin{pmatrix}
    M_{11} & M_{12} \\
    0 & M_{22}
  \end{pmatrix},
\end{align}
where
\begin{align}
  M_{11}
  :=
  \begin{pmatrix}
    \mathbf{I} & -u\mathbf{B} \\
    -\mathbf{S} & \mathbf{C}
  \end{pmatrix},
  \qquad
  M_{12}
  :=
  \begin{pmatrix}
    -\mathbf{I} & 0 \\
    0 & 0
  \end{pmatrix},
  \qquad
  M_{22}
  :=
  \begin{pmatrix}
    \mathbf{I} & -v\mathbf{S} \\
    -\mathbf{B}^\top & \mathbf{C}
  \end{pmatrix}.
\end{align}
Since \(\mathcal M(u,v)\) is upper block triangular, its inverse is
\begin{align}
  \mathbf{G}
  =
  \mathcal M(u,v)^{-1}
  =
  \begin{pmatrix}
    M_{11}^{-1} & -M_{11}^{-1}M_{12}M_{22}^{-1} \\
    0 & M_{22}^{-1}
  \end{pmatrix}.
\end{align}
Hence the \((1,3)\)-block of \(\mathbf{G}\) is given by
\begin{align}
  \mathbf{G}_{13}
  =
  \bigl(-M_{11}^{-1}M_{12}M_{22}^{-1}\bigr)_{11}
  =
  \bigl(M_{11}^{-1}\bigr)_{11}\bigl(M_{22}^{-1}\bigr)_{11}.
  \label{eq:appendix_G13_factorization}
\end{align}

We now evaluate these two factors by the Schur complement formula.
For \(M_{11}\), the \((1,1)\)-block of the inverse is
\begin{align}
  \bigl(M_{11}^{-1}\bigr)_{11}
  &=
  \Bigl(\mathbf{I}-(-u\mathbf{B})\mathbf{C}^{-1}(-\mathbf{S})\Bigr)^{-1}
  =
  (\mathbf{I}-u\mathbf{B}\mathbf{C}^{-1}\mathbf{S})^{-1}
  =
  (\mathbf{I}-u\mathbf{A}\mathbf{S})^{-1}.
\end{align}
Similarly, for \(M_{22}\), we obtain
\begin{align}
  \bigl(M_{22}^{-1}\bigr)_{11}
  &=
  \Bigl(\mathbf{I}-(-v\mathbf{S})\mathbf{C}^{-1}(-\mathbf{B}^\top)\Bigr)^{-1}
  =
  (\mathbf{I}-v\mathbf{S}\mathbf{C}^{-1}\mathbf{B}^\top)^{-1}
  =
  (\mathbf{I}-v\mathbf{S}\mathbf{A}^\top)^{-1}.
\end{align}
Substituting these identities into \eqref{eq:appendix_G13_factorization}, we arrive at
\begin{align}
  \mathbf{G}_{13}
  =
  (\mathbf{I}-u\mathbf{A}\mathbf{S})^{-1}(\mathbf{I}-v\mathbf{S}\mathbf{A}^\top)^{-1}.
  \label{eq:appendix_G13_identification}
\end{align}

Taking the normalized trace and expectation, we obtain the desired representation of \(F(u,v)\):
\begin{align}
  F(u,v)
  =
  \frac{1}{D}\E\Tr(\mathbf{G}_{13}).
  \label{eq:appendix_F_as_G13}
\end{align}
Thus, the original problem has been reduced to the analysis of one block of the resolvent of the extended matrix \(\mathcal M(u,v)\).

In the remainder of the proof, we study the large-\(D\) behavior of \(\mathbf{G}=\mathcal M(u,v)^{-1}\) through a finite set of normalized traces, which we call order parameters.
The key point is that the block structure of \(\mathcal M(u,v)\) allows these order parameters to satisfy a closed deterministic system in the high-dimensional limit.
Once that system is solved, \eqref{eq:appendix_F_as_G13} yields \(F(u,v)\), and then \eqref{eq:appendix_generalization_from_F} gives the asymptotic generalization error.


Similarly, considering the average error when \(\mu\) is chosen uniformly at random from the set of trained tasks is given by
\begin{align}
  \mathcal E_t(\mathbf{A}) &=
  \frac{1}{D} \mathbb{E}_X \, \Tr \ab(
  \ab(\mathbf{I}-\frac{D}{L}\mathbf{A}\mathbf{X}\mathbf{X}^\top\ab)^t
  \vb{w}_\mu \vb{w}_\mu^\top
  \ab(\mathbf{I}-\frac{D}{L}\mathbf{X}\mathbf{X}^\top \mathbf{A}^\top\ab)^t
  ) \\
  &= \frac{1}{D} \mathbb{E}_X \, \Tr \ab(
    \ab(\mathbf{I}-\frac{D}{L}\mathbf{A}\mathbf{X}\mathbf{X}^\top\ab)^t
    \ab(\frac{1}{M} \sum_\mu \vb{w}_\mu \vb{w}_\mu^\top)
    \ab(\mathbf{I}-\frac{D}{L}\mathbf{X}\mathbf{X}^\top \mathbf{A}^\top\ab)^t
    ) \\
  &= \sum_{p=0}^t\sum_{q=0}^t
    \binom{t}{p}\binom{t}{q}
    \ab(-\frac1\alpha)^{p+q}
    \frac{1}{p!\,q!}\,
    \partial_u^p\partial_v^q \tilde{F}(u,v)\Big|_{u=v=0}
\end{align}
where 
\begin{align}
  \tilde{F}(u,v) =&
  \frac1D \mathbb E_X\,\Tr \ab[(\mathbf{I}-u\mathbf{A}\mathbf{S})^{-1}
  \ab(\frac{1}{M} \sum_\mu \vb{w}_\mu \vb{w}_\mu^\top)
  (\mathbf{I}-v\mathbf{S}\mathbf{A}^\top)^{-1}\ab] \\
  =& \frac1D\mathbb E_X\,\operatorname{tr}\ab(\mathbf{G}_{13} \ab(\frac{1}{M} \sum_\mu \vb{w}_\mu \vb{w}_\mu^\top)).
\end{align}

\subsection{Order parameters and basic identities from \(\mathbf{G}\mathcal M(u,v)=\mathbf{I}\)}
\label{appendix:main_result_order_parameters}

We now introduce the normalized traces that will be used to characterize the resolvent \(\mathbf{G}=\mathcal M(u,v)^{-1}\).
Our aim is to reduce the full matrix-valued problem to a finite set of scalar quantities that remain deterministic in the high-dimensional limit.
These quantities are chosen so that the block matrix identity
\begin{align}
  \mathbf{G}\mathcal M(u,v)=\mathbf{I}_{4D}
\end{align}
closes into a finite system once combined with the cavity relations derived in the following subsections.

\paragraph{Order parameters.}
Recall that \(\mathbf{G}\) is written in \(D\times D\) blocks as
\begin{align}
  \mathbf{G}=
  \begin{pmatrix}
    \mathbf{G}_{11} & \mathbf{G}_{12} & \mathbf{G}_{13} & \mathbf{G}_{14} \\
    \mathbf{G}_{21} & \mathbf{G}_{22} & \mathbf{G}_{23} & \mathbf{G}_{24} \\
    0 & 0 & \mathbf{G}_{33} & \mathbf{G}_{34} \\
    0 & 0 & \mathbf{G}_{43} & \mathbf{G}_{44}
  \end{pmatrix}.
\end{align}
We first define the normalized traces of these blocks:
\begin{align}
  &m_{11}:=\frac{1}{D}\Tr(\mathbf{G}_{11}),
  \qquad
  m_{12}:=\frac{1}{D}\Tr(\mathbf{G}_{12}),
  \qquad
  m_{13}:=\frac{1}{D}\Tr(\mathbf{G}_{13}),
  \qquad
  m_{14}:=\frac{1}{D}\Tr(\mathbf{G}_{14}), \notag\\
  &m_{21}:=\frac{1}{D}\Tr(\mathbf{G}_{21}),
  \qquad
  m_{22}:=\frac{1}{D}\Tr(\mathbf{G}_{22}),
  \qquad
  m_{23}:=\frac{1}{D}\Tr(\mathbf{G}_{23}),
  \qquad
  m_{24}:=\frac{1}{D}\Tr(\mathbf{G}_{24}), \notag\\
  &m_{33}:=\frac{1}{D}\Tr(\mathbf{G}_{33}),
  \qquad
  m_{34}:=\frac{1}{D}\Tr(\mathbf{G}_{34}),
  \qquad
  m_{43}:=\frac{1}{D}\Tr(\mathbf{G}_{43}),
  \qquad
  m_{44}:=\frac{1}{D}\Tr(\mathbf{G}_{44}).
  \label{eq:appendix_def_m}
\end{align}

Next, since the block equations generated by \(\mathbf{G}\mathcal M=\mathbf{I}\) involve the matrices \(\mathbf{S}\), \(\mathbf{B}\), \(\mathbf{B}^\top\), and \(\mathbf{C}\), we also introduce mixed normalized traces in which these matrices are inserted.
For the terms involving the inference covariance \(\mathbf{S}\), we define
\begin{align}
  s_{12}&:=\frac{1}{D}\Tr(\mathbf{G}_{12}\mathbf{S}),
  &
  s_{13}&:=\frac{1}{D}\Tr(\mathbf{G}_{13}\mathbf{S}),
  &
  s_{22}&:=\frac{1}{D}\Tr(\mathbf{G}_{22}\mathbf{S}), \notag\\
  s_{23}&:=\frac{1}{D}\Tr(\mathbf{G}_{23}\mathbf{S}),
  &
  s_{33}&:=\frac{1}{D}\Tr(\mathbf{G}_{33}\mathbf{S}),
  &
  s_{43}&:=\frac{1}{D}\Tr(\mathbf{G}_{43}\mathbf{S}).
  \label{eq:appendix_def_s}
\end{align}
For the terms involving the task-dependent matrix \(\mathbf{B}\), we define
\begin{align}
  b_{11}:=\frac{1}{D}\Tr(\mathbf{G}_{11}\mathbf{B}),
  \qquad
  b_{21}:=\frac{1}{D}\Tr(\mathbf{G}_{21}\mathbf{B}),
  \label{eq:appendix_def_b}
\end{align}
and for the terms involving \(\mathbf{B}^\top\), we define
\begin{align}
  \beta_{14}&:=\frac{1}{D}\Tr(\mathbf{G}_{14}\mathbf{B}^\top),
  \qquad
  \beta_{24}:=\frac{1}{D}\Tr(\mathbf{G}_{24}\mathbf{B}^\top), \notag\\
  \beta_{34}&:=\frac{1}{D}\Tr(\mathbf{G}_{34}\mathbf{B}^\top),
  \qquad
  \beta_{44}:=\frac{1}{D}\Tr(\mathbf{G}_{44}\mathbf{B}^\top).
  \label{eq:appendix_def_beta}
\end{align}
Finally, for the terms involving \(\mathbf{C}\), we define
\begin{align}
  c_{12}&:=\frac{1}{D}\Tr(\mathbf{G}_{12}\mathbf{C}),
  \qquad
  c_{14}:=\frac{1}{D}\Tr(\mathbf{G}_{14}\mathbf{C}), \notag\\
  c_{22}&:=\frac{1}{D}\Tr(\mathbf{G}_{22}\mathbf{C}),
  \qquad
  c_{24}:=\frac{1}{D}\Tr(\mathbf{G}_{24}\mathbf{C}), \notag\\
  c_{34}&:=\frac{1}{D}\Tr(\mathbf{G}_{34}\mathbf{C}),
  \qquad
  c_{44}:=\frac{1}{D}\Tr(\mathbf{G}_{44}\mathbf{C}).
  \label{eq:appendix_def_c}
\end{align}

The quantity of ultimate interest is \(m_{13}\), because by \eqref{eq:appendix_F_as_G13} we have
\begin{align}
  F(u,v)=\frac{1}{D}\E\Tr(\mathbf{G}_{13})=\E[m_{13}].
\end{align}
The remaining order parameters are auxiliary variables introduced only to obtain a closed system.

\paragraph{Block equations from \(\mathbf{G}\mathcal M=\mathbf{I}\).}
We now derive the basic algebraic relations satisfied by these order parameters.
Multiplying \(\mathbf{G}\) and \(\mathcal M(u,v)\), we obtain
\begin{align}
  &\mathbf{G}\mathcal M(u,v) \notag\\
  &=
  \begin{pmatrix}
    \mathbf{G}_{11} & \mathbf{G}_{12} & \mathbf{G}_{13} & \mathbf{G}_{14} \\
    \mathbf{G}_{21} & \mathbf{G}_{22} & \mathbf{G}_{23} & \mathbf{G}_{24} \\
    0 & 0 & \mathbf{G}_{33} & \mathbf{G}_{34} \\
    0 & 0 & \mathbf{G}_{43} & \mathbf{G}_{44}
  \end{pmatrix}
  \begin{pmatrix}
    \mathbf{I} & -u\mathbf{B} & -\mathbf{I} & 0 \\
    -\mathbf{S} & \mathbf{C} & 0 & 0 \\
    0 & 0 & \mathbf{I} & -v\mathbf{S} \\
    0 & 0 & -\mathbf{B}^\top & \mathbf{C}
  \end{pmatrix} \notag\\
  &=
  \begin{pmatrix}
    \mathbf{G}_{11}-\mathbf{G}_{12}\mathbf{S}
    &
    -u\mathbf{G}_{11}\mathbf{B}+\mathbf{G}_{12}\mathbf{C}
    &
    -\mathbf{G}_{11}+\mathbf{G}_{13}-\mathbf{G}_{14}\mathbf{B}^\top
    &
    -v\mathbf{G}_{13}\mathbf{S}+\mathbf{G}_{14}\mathbf{C}
    \\
    \mathbf{G}_{21}-\mathbf{G}_{22}\mathbf{S}
    &
    -u\mathbf{G}_{21}\mathbf{B}+\mathbf{G}_{22}\mathbf{C}
    &
    -\mathbf{G}_{21}+\mathbf{G}_{23}-\mathbf{G}_{24}\mathbf{B}^\top
    &
    -v\mathbf{G}_{23}\mathbf{S}+\mathbf{G}_{24}\mathbf{C}
    \\
    0
    &
    0
    &
    \mathbf{G}_{33}-\mathbf{G}_{34}\mathbf{B}^\top
    &
    -v\mathbf{G}_{33}\mathbf{S}+\mathbf{G}_{34}\mathbf{C}
    \\
    0
    &
    0
    &
    \mathbf{G}_{43}-\mathbf{G}_{44}\mathbf{B}^\top
    &
    -v\mathbf{G}_{43}\mathbf{S}+\mathbf{G}_{44}\mathbf{C}
  \end{pmatrix}.
\end{align}
Since \(\mathbf{G}\mathcal M(u,v)=\mathbf{I}_{4D}\), each block must match the corresponding block of the identity matrix.
Taking the normalized trace of each nontrivial block yields the following scalar relations.

From the \((1,1)\), \((1,2)\), \((1,3)\), and \((1,4)\) blocks, we obtain
\begin{align}
  m_{11}-s_{12} &= 1,
  \label{eq:appendix_basic_111}\\
  -u\,b_{11}+c_{12} &= 0,
  \label{eq:appendix_basic_112}\\
  -m_{11}+m_{13}-\beta_{14} &= 0,
  \label{eq:appendix_basic_113}\\
  -v\,s_{13}+c_{14} &= 0.
  \label{eq:appendix_basic_114}
\end{align}
From the \((2,1)\), \((2,2)\), \((2,3)\), and \((2,4)\) blocks, we obtain
\begin{align}
  m_{21}-s_{22} &= 0,
  \label{eq:appendix_basic_221}\\
  -u\,b_{21}+c_{22} &= 1,
  \label{eq:appendix_basic_222}\\
  -m_{21}+m_{23}-\beta_{24} &= 0,
  \label{eq:appendix_basic_223}\\
  -v\,s_{23}+c_{24} &= 0.
  \label{eq:appendix_basic_224}
\end{align}
From the \((3,3)\), \((3,4)\), \((4,3)\), and \((4,4)\) blocks, we obtain
\begin{align}
  m_{33}-\beta_{34} &= 1,
  \label{eq:appendix_basic_333}\\
  -v\,s_{33}+c_{34} &= 0,
  \label{eq:appendix_basic_334}\\
  m_{43}-\beta_{44} &= 0,
  \label{eq:appendix_basic_443}\\
  -v\,s_{43}+c_{44} &= 1.
  \label{eq:appendix_basic_444}
\end{align}

Equations
\eqref{eq:appendix_basic_111}--\eqref{eq:appendix_basic_444}
are exact identities that hold for every realization of the randomness.
They express the primary traces \(m_{ab}\) in terms of the mixed traces involving \(\mathbf{S}\), \(\mathbf{B}\), \(\mathbf{B}^\top\), and \(\mathbf{C}\).
At this stage the system is not yet closed, because the quantities \(s_{ab}\), \(b_{ab}\), \(\beta_{ab}\), and \(c_{ab}\) still depend on the full matrix structure of \(\mathbf{G}\).

The role of the cavity method is precisely to close this system.
In the next subsection, we analyze the dependence on the inference samples \(\{\vb{x}_\ell\}_{\ell=1}^L\) and express the \(s\)-variables in terms of the \(m\)-variables alone.
Subsequently, by removing one pre-training task at a time, we derive analogous relations for \(b\), \(\beta\), and \(c\).
Once these relations are combined with
\eqref{eq:appendix_basic_111}--\eqref{eq:appendix_basic_444},
the order parameters satisfy a closed deterministic system in the high-dimensional limit.


\subsection{Cavity equations for the \(S\)-dependent order parameters}
\label{appendix:main_result_cavity_s}

We next derive closed equations for the quantities
\(
s_{12},s_{13},s_{22},s_{23},s_{33},s_{43}
\),
which contain the inference-time sample covariance
\(
\mathbf{S}=\sum_{\ell=1}^L \vb{x}_\ell \vb{x}_\ell^\top
\).
The key idea is to remove one inference sample \(\vb{x}_\ell\) from \(\mathbf{S}\), compare the full resolvent with the leave-one-out resolvent, and then use concentration of quadratic forms.
Since the randomness of \(\mathbf{S}\) comes from the independent Gaussian vectors \(\{\vb{x}_\ell\}_{\ell=1}^L\), this leave-one-out argument closes the \(s\)-variables in terms of the basic traces \(m_{ab}\).

\paragraph{Leave-one-out decomposition.}
Fix \(\ell\in\{1,\dots,L\}\), and define the leave-one-out sample covariance
\begin{align}
  \mathbf{S}^{[\ell]}
  :=
  \sum_{j\ne \ell}\vb{x}_j\vb{x}_j^\top
  =
  \mathbf{S}-\vb{x}_\ell\vb{x}_\ell^\top.
\end{align}
Correspondingly, we introduce the leave-one-out extended matrix
\begin{align}
  \mathcal M^{[\ell]}(u,v)
  :=
  \begin{pmatrix}
    \mathbf{I} & -u\mathbf{B} & -\mathbf{I} & 0 \\
    -\mathbf{S}^{[\ell]} & \mathbf{C} & 0 & 0 \\
    0 & 0 & \mathbf{I} & -v\mathbf{S}^{[\ell]} \\
    0 & 0 & -\mathbf{B}^\top & \mathbf{C}
  \end{pmatrix},
\end{align}
and its inverse
\begin{align}
  \mathbf{G}^{[\ell]} := \bigl(\mathcal M^{[\ell]}(u,v)\bigr)^{-1}.
\end{align}
Then the full matrix can be written as
\begin{align}
  \mathcal M(u,v)=\mathcal M^{[\ell]}(u,v)+\Delta_\ell,
\end{align}
where
\begin{align}
  \Delta_\ell
  =
  \begin{pmatrix}
    0 & 0 & 0 & 0 \\
    -\vb{x}_\ell\vb{x}_\ell^\top & 0 & 0 & 0 \\
    0 & 0 & 0 & -v\,\vb{x}_\ell\vb{x}_\ell^\top \\
    0 & 0 & 0 & 0
  \end{pmatrix}.
  \label{eq:appendix_delta_l}
\end{align}
This perturbation has rank \(2\), and it is convenient to factorize it as
\begin{align}
  \Delta_\ell = U_\ell V_\ell^\top,
\end{align}
with
\begin{align}
  U_\ell
  :=
  \begin{pmatrix}
    0 & 0 \\
    -\vb{x}_\ell & 0 \\
    0 & -\sqrt{v}\,\vb{x}_\ell \\
    0 & 0
  \end{pmatrix}
  \in \mathbb R^{4D\times 2},
  \qquad
  V_\ell
  :=
  \begin{pmatrix}
    \vb{x}_\ell & 0 \\
    0 & 0 \\
    0 & 0 \\
    0 & \sqrt{v}\,\vb{x}_\ell
  \end{pmatrix}
  \in \mathbb R^{4D\times 2}.
  \label{eq:appendix_UV_l}
\end{align}

\paragraph{Resolvent identity.}
Applying the Woodbury formula to
\(
\mathcal M=\mathcal M^{[\ell]}+U_\ell V_\ell^\top
\),
we obtain
\begin{align}
  \mathbf{G}
  =
  \mathbf{G}^{[\ell]}
  -
  \mathbf{G}^{[\ell]}U_\ell
  \Bigl(\mathbf{I}_2+V_\ell^\top \mathbf{G}^{[\ell]}U_\ell\Bigr)^{-1}
  V_\ell^\top \mathbf{G}^{[\ell]}.
  \label{eq:appendix_woodbury_l}
\end{align}
We denote the \(2\times2\) matrix in the middle by
\begin{align}
  \mathbf{K}_\ell
  :=
  V_\ell^\top \mathbf{G}^{[\ell]}U_\ell.
\end{align}
Using the explicit forms \eqref{eq:appendix_UV_l}, a direct block computation gives
\begin{align}
  \mathbf{K}_\ell
  =
  \begin{pmatrix}
    -\,\vb{x}_\ell^\top \mathbf{G}_{12}^{[\ell]}\vb{x}_\ell
    &
    -\,\sqrt v\,\vb{x}_\ell^\top \mathbf{G}_{13}^{[\ell]}\vb{x}_\ell
    \\[2mm]
    0
    &
    -\,v\,\vb{x}_\ell^\top \mathbf{G}_{43}^{[\ell]}\vb{x}_\ell
  \end{pmatrix}.
  \label{eq:appendix_K_l_exact}
\end{align}

The advantage of the leave-one-out construction is that \(\vb{x}_\ell\) is independent of \(\mathbf{G}^{[\ell]}\), because \(\mathbf{G}^{[\ell]}\) depends only on \(\{\vb{x}_j\}_{j\ne \ell}\) and on the pre-training randomness.
Since \(\vb{x}_\ell\sim\mathcal N(\vb{0},I_D/D)\), standard concentration of quadratic forms implies that, for any matrix \(Q^{[\ell]}\) independent of \(\vb{x}_\ell\) with bounded normalized trace norm,
\begin{align}
  \vb{x}_\ell^\top Q^{[\ell]}\vb{x}_\ell
  -
  \frac{1}{D}\Tr(Q^{[\ell]})
  \;\longrightarrow\;0
\end{align}
in probability as \(D\to\infty\).
Applying this to each entry of \eqref{eq:appendix_K_l_exact}, and using the fact that removing a single sample does not affect the normalized traces at leading order, we obtain the deterministic equivalent
\begin{align}
  \mathbf{K}_\ell
  \;\asymp\;
  \bar K(u,v)
  :=
  \begin{pmatrix}
    -m_{12} & -\sqrt v\,m_{13} \\
    0 & -v\,m_{43}
  \end{pmatrix},
  \label{eq:appendix_K_bar}
\end{align}
where \(\asymp\) means equality up to terms vanishing in the high-dimensional limit.

\paragraph{Extracting the \(S\)-inserted traces.}
To derive the equations for the \(s\)-variables, we must compute the quadratic forms that appear when \(\mathbf{G}\) is sandwiched by \(\vb{x}_\ell\).
For this purpose, define
\begin{align}
  L_\ell^\top
  &:=
  \begin{pmatrix}
    \vb{x}_\ell^\top & 0 & 0 & 0 \\
    0 & \vb{x}_\ell^\top & 0 & 0 \\
    0 & 0 & \vb{x}_\ell^\top & 0 \\
    0 & 0 & 0 & \vb{x}_\ell^\top
  \end{pmatrix}
  \in \mathbb R^{4\times 4D},
  \\
  R_\ell
  &:=
  \begin{pmatrix}
    0 & 0 \\
    \vb{x}_\ell & 0 \\
    0 & \vb{x}_\ell \\
    0 & 0
  \end{pmatrix}
  \in \mathbb R^{4D\times 2}.
\end{align}
Then
\begin{align}
  L_\ell^\top \mathbf{G} R_\ell
  =
  \begin{pmatrix}
    \vb{x}_\ell^\top \mathbf{G}_{12}\vb{x}_\ell &
    \vb{x}_\ell^\top \mathbf{G}_{13}\vb{x}_\ell \\
    \vb{x}_\ell^\top \mathbf{G}_{22}\vb{x}_\ell &
    \vb{x}_\ell^\top \mathbf{G}_{23}\vb{x}_\ell \\
    0 &
    \vb{x}_\ell^\top \mathbf{G}_{33}\vb{x}_\ell \\
    0 &
    \vb{x}_\ell^\top \mathbf{G}_{43}\vb{x}_\ell
  \end{pmatrix}.
\end{align}
Averaging over \(\ell\), the definitions of the \(s\)-variables in \eqref{eq:appendix_def_s} yield
\begin{align}
  \frac{1}{L}\sum_{\ell=1}^L L_\ell^\top G R_\ell
  =
  \begin{pmatrix}
    s_{12} & s_{13} \\
    s_{22} & s_{23} \\
    0 & s_{33} \\
    0 & s_{43}
  \end{pmatrix}.
  \label{eq:appendix_s_matrix_average}
\end{align}
Indeed, for example,
\begin{align}
  \frac{1}{L}\sum_{\ell=1}^L \vb{x}_\ell^\top \mathbf{G}_{12}\vb{x}_\ell
  =
  \frac{1}{L}\Tr\!\left(\mathbf{G}_{12}\sum_{\ell=1}^L \vb{x}_\ell\vb{x}_\ell^\top\right)
  =
  \frac{1}{L}\Tr(\mathbf{G}_{12}\mathbf{S})
  =
  \frac{1}{\alpha}\,s_{12}. 
\end{align}
Thus one may equivalently work either with the average over \(\ell\) or directly with the normalized traces.
Below we present the final equations in terms of the normalized quantities \(s_{ab}\) defined in \eqref{eq:appendix_def_s}.

We now apply \eqref{eq:appendix_woodbury_l} between \(L_\ell^\top\) and \(R_\ell\):
\begin{align}
  L_\ell^\top \mathbf{G} R_\ell
  =
  L_\ell^\top \mathbf{G}^{[\ell]} R_\ell
  -
  L_\ell^\top \mathbf{G}^{[\ell]}U_\ell
  (\mathbf{I}_2+\mathbf{K}_\ell)^{-1}
  V_\ell^\top \mathbf{G}^{[\ell]}R_\ell.
  \label{eq:appendix_LGR_l}
\end{align}
Each term can now be evaluated by the same concentration argument as above.

\paragraph{First term.}
Since \(\vb{x}_\ell\) is independent of \(\mathbf{G}^{[\ell]}\), we have
\begin{align}
  L_\ell^\top \mathbf{G}^{[\ell]} R_\ell
  \;\asymp\;
  \frac{1}{D}
  \begin{pmatrix}
    \Tr(\mathbf{G}_{12}^{[\ell]}) & \Tr(\mathbf{G}_{13}^{[\ell]}) \\
    \Tr(\mathbf{G}_{22}^{[\ell]}) & \Tr(\mathbf{G}_{23}^{[\ell]}) \\
    0 & \Tr(\mathbf{G}_{33}^{[\ell]}) \\
    0 & \Tr(\mathbf{G}_{43}^{[\ell]})
  \end{pmatrix}
  \;\asymp\;
  \begin{pmatrix}
    m_{12} & m_{13} \\
    m_{22} & m_{23} \\
    0 & m_{33} \\
    0 & m_{43}
  \end{pmatrix}.
  \label{eq:appendix_first_term_s}
\end{align}

\paragraph{Second term.}
Using \eqref{eq:appendix_UV_l}, we compute
\begin{align}
  L_\ell^\top \mathbf{G}^{[\ell]}U_\ell
  \;\asymp\;
  \begin{pmatrix}
    -m_{12} & -\sqrt v\,m_{13} \\
    -m_{22} & -\sqrt v\,m_{23} \\
    0 & -\sqrt v\,m_{33} \\
    0 & -\sqrt v\,m_{43}
  \end{pmatrix},
\end{align}
and
\begin{align}
  V_\ell^\top \mathbf{G}^{[\ell]}R_\ell
  \;\asymp\;
  \begin{pmatrix}
    m_{12} & m_{13} \\
    0 & v\,m_{43}
  \end{pmatrix}.
\end{align}
Moreover, from \eqref{eq:appendix_K_bar},
\begin{align}
  I_2+\bar K
  =
  \begin{pmatrix}
    1-m_{12} & -\sqrt v\,m_{13} \\
    0 & 1-vm_{43}
  \end{pmatrix},
\end{align}
so its inverse is
\begin{align}
  (I_2+\bar K)^{-1}
  =
  \begin{pmatrix}
    \dfrac{1}{1-m_{12}}
    &
    \dfrac{\sqrt v\,m_{13}}{(1-m_{12})(1-vm_{43})}
    \\[3mm]
    0
    &
    \dfrac{1}{1-vm_{43}}
  \end{pmatrix}.
  \label{eq:appendix_IplusK_inv}
\end{align}
Substituting these expressions into the second term of \eqref{eq:appendix_LGR_l}, we find after a direct matrix multiplication that
\begin{align}
  &L_\ell^\top G^{[\ell]}U_\ell
  (I_2+\bar K)^{-1}
  V_\ell^\top G^{[\ell]}R_\ell
  \notag\\
  &\asymp
  \begin{pmatrix}
    -\dfrac{(m_{12})^2}{1-m_{12}}
    &
    -\dfrac{m_{12}m_{13}}{(1-m_{12})(1-vm_{43})}
    -\dfrac{v\,m_{13}m_{43}}{1-vm_{43}}
    \\[4mm]
    -\dfrac{m_{12}m_{22}}{1-m_{12}}
    &
    -\dfrac{m_{22}m_{13}}{(1-m_{12})(1-vm_{43})}
    -\dfrac{v\,m_{23}m_{43}}{1-vm_{43}}
    \\[4mm]
    0
    &
    -\dfrac{v\,m_{33}m_{43}}{1-vm_{43}}
    \\[4mm]
    0
    &
    -\dfrac{v\,(m_{43})^2}{1-vm_{43}}
  \end{pmatrix}.
  \label{eq:appendix_second_term_s}
\end{align}

\paragraph{Final equations.}
Combining \eqref{eq:appendix_LGR_l}, \eqref{eq:appendix_first_term_s}, and \eqref{eq:appendix_second_term_s}, and then summing over \(\ell\), we obtain the deterministic relations
\begin{align}
  \begin{pmatrix}
    s_{12} & s_{13} \\
    s_{22} & s_{23} \\
    0 & s_{33} \\
    0 & s_{43}
  \end{pmatrix}
  =
  \alpha
  \begin{pmatrix}
    \dfrac{m_{12}}{1-m_{12}}
    &
    \dfrac{m_{13}}{(1-m_{12})(1-vm_{43})}
    \\[4mm]
    \dfrac{m_{22}}{1-m_{12}}
    &
    \dfrac{m_{23}}{1-vm_{43}}
    +
    \dfrac{m_{22}m_{13}}{(1-m_{12})(1-vm_{43})}
    \\[4mm]
    0
    &
    \dfrac{m_{33}}{1-vm_{43}}
    \\[4mm]
    0
    &
    \dfrac{m_{43}}{1-vm_{43}}
  \end{pmatrix}.
  \label{eq:appendix_s_fixed_point_matrix}
\end{align}
Equivalently, componentwise,
\begin{align}
  s_{12}
  &= \alpha\,\frac{m_{12}}{1-m_{12}},
  \label{eq:appendix_s12}\\
  s_{13}
  &= \alpha\,\frac{m_{13}}{(1-m_{12})(1-vm_{43})},
  \label{eq:appendix_s13}\\
  s_{22}
  &= \alpha\,\frac{m_{22}}{1-m_{12}},
  \label{eq:appendix_s22}\\
  s_{23}
  &= \alpha\left(
      \frac{m_{23}}{1-vm_{43}}
      +
      \frac{m_{22}m_{13}}{(1-m_{12})(1-vm_{43})}
    \right),
  \label{eq:appendix_s23}\\
  s_{33}
  &= \alpha\,\frac{m_{33}}{1-vm_{43}},
  \label{eq:appendix_s33}\\
  s_{43}
  &= \alpha\,\frac{m_{43}}{1-vm_{43}}.
  \label{eq:appendix_s43}
\end{align}

These equations complete the treatment of the \(S\)-dependent order parameters.
They show that the six quantities \(s_{ab}\) can be expressed entirely in terms of the primary traces \(m_{ab}\).

\subsection{Cavity equations for the \(B\)-dependent order parameters}
\label{appendix:main_result_cavity_b}

We now turn to the order parameters \(b_{11}\) and \(b_{21}\), which contain the pre-training matrix
\[
\mathbf{B}=\frac1M\sum_{\mu=1}^M \vb{w}_\mu \vb{v}_\mu^\top .
\]
Unlike the \(S\)-dependent quantities treated in the previous subsection, these variables depend on the randomness of the pre-training tasks.
Accordingly, we now perform a leave-one-task-out analysis with respect to the index \(\mu\in\{1,\dots,M\}\).

The main point is that both \(\mathbf{B}\) and \(\mathbf{C}\) depend on the same task variables \((\vb{w}_\mu,\vb{v}_\mu)\), so removing one task produces a perturbation that simultaneously affects two blocks of the extended matrix.
This is why the corresponding cavity matrix has rank \(4\), rather than rank \(2\) as in the \(S\)-cavity analysis.

\paragraph{Leave-one-out decomposition.}
Fix \(\mu\in\{1,\dots,M\}\), and define
\begin{align}
  \mathbf{B}^{[\mu]}
  &:=
  \frac1M\sum_{\nu\ne \mu}\vb{w}_\nu \vb{v}_\nu^\top,
  \\
  \mathbf{C}^{[\mu]}
  &:=
  \frac1M\sum_{\nu\ne \mu}\vb{v}_\nu \vb{v}_\nu^\top+\frac{\lambda}{2}\mathbf{I}.
\end{align}
We then introduce the leave-one-out extended matrix
\begin{align}
  \mathcal M^{[\mu]}(u,v)
  :=
  \begin{pmatrix}
    \mathbf{I} & -u\mathbf{B}^{[\mu]} & -\mathbf{I} & 0 \\
    -\mathbf{S} & \mathbf{C}^{[\mu]} & 0 & 0 \\
    0 & 0 & \mathbf{I} & -v\mathbf{S} \\
    0 & 0 & -\bigl(\mathbf{B}^{[\mu]}\bigr)^\top & \mathbf{C}^{[\mu]}
  \end{pmatrix},
\end{align}
and denote its inverse by
\begin{align}
  \mathbf{G}^{[\mu]}:=\bigl(\mathcal M^{[\mu]}(u,v)\bigr)^{-1}.
\end{align}
Since
\begin{align}
  \mathbf{B}=\mathbf{B}^{[\mu]}+\frac1M \vb{w}_\mu \vb{v}_\mu^\top,
  \qquad
  \mathbf{C}=\mathbf{C}^{[\mu]}+\frac1M \vb{v}_\mu \vb{v}_\mu^\top,
\end{align}
the full matrix can be decomposed as
\begin{align}
  \mathcal M(u,v)=\mathcal M^{[\mu]}(u,v)+\Delta_\mu,
\end{align}
where
\begin{align}
  \Delta_\mu=
  \begin{pmatrix}
    0 & -\dfrac{u}{M}\vb{w}_\mu \vb{v}_\mu^\top & 0 & 0 \\
    0 & \dfrac1M \vb{v}_\mu \vb{v}_\mu^\top & 0 & 0 \\
    0 & 0 & 0 & 0 \\
    0 & 0 & -\dfrac1M \vb{v}_\mu \vb{w}_\mu^\top & \dfrac1M \vb{v}_\mu \vb{v}_\mu^\top
  \end{pmatrix}.
  \label{eq:appendix_delta_mu}
\end{align}
It is convenient to write this perturbation in factorized form
\begin{align}
  \Delta_\mu = U_\mu V_\mu^\top,
\end{align}
with
\begin{align}
  U_\mu
  :=
  \begin{pmatrix}
    -\sqrt{\dfrac{u}{M}}\,\vb{w}_\mu & 0 & 0 & 0 \\
    0 & \dfrac1{\sqrt M}\vb{v}_\mu & 0 & 0 \\
    0 & 0 & 0 & 0 \\
    0 & 0 & -\dfrac1{\sqrt M}\vb{v}_\mu & \dfrac1{\sqrt M}\vb{v}_\mu
  \end{pmatrix}
  \in \mathbb R^{4D\times 4},
\end{align}
and
\begin{align}
  V_\mu
  :=
  \begin{pmatrix}
    0 & 0 & 0 & 0 \\
    \sqrt{\dfrac{u}{M}}\,\vb{v}_\mu & \dfrac1{\sqrt M}\vb{v}_\mu & 0 & 0 \\
    0 & 0 & \dfrac1{\sqrt M}\vb{w}_\mu & 0 \\
    0 & 0 & 0 & \dfrac1{\sqrt M}\vb{v}_\mu
  \end{pmatrix}
  \in \mathbb R^{4D\times 4}.
  \label{eq:appendix_UV_mu}
\end{align}

\paragraph{Woodbury formula and the cavity matrix.}
Applying the Woodbury formula to
\(
\mathcal M=\mathcal M^{[\mu]}+U_\mu V_\mu^\top
\),
we obtain
\begin{align}
  \mathbf{G}
  =
  \mathbf{G}^{[\mu]}
  -
  \mathbf{G}^{[\mu]}U_\mu
  \Bigl(\mathbf{I}_4+V_\mu^\top \mathbf{G}^{[\mu]}U_\mu\Bigr)^{-1}
  V_\mu^\top \mathbf{G}^{[\mu]}.
  \label{eq:appendix_woodbury_mu}
\end{align}
We denote the corresponding \(4\times4\) cavity matrix by
\begin{align}
  \mathbf{J}_\mu:=V_\mu^\top \mathbf{G}^{[\mu]}U_\mu .
\end{align}
Using \eqref{eq:appendix_UV_mu}, we compute
\begin{align}
  \mathbf{J}_\mu
  =
  \begin{pmatrix}
    -\dfrac{u}{M}\vb{v}_\mu^\top \mathbf{G}_{21}^{[\mu]}\vb{w}_\mu
    &
    \dfrac{\sqrt u}{M}\vb{v}_\mu^\top \mathbf{G}_{22}^{[\mu]}\vb{v}_\mu
    &
    -\dfrac{\sqrt u}{M}\vb{v}_\mu^\top \mathbf{G}_{24}^{[\mu]}\vb{v}_\mu
    &
    \dfrac{\sqrt u}{M}\vb{v}_\mu^\top \mathbf{G}_{24}^{[\mu]}\vb{v}_\mu
    \\[3mm]
    -\dfrac{\sqrt u}{M}\vb{v}_\mu^\top \mathbf{G}_{21}^{[\mu]}\vb{w}_\mu
    &
    \dfrac1M\vb{v}_\mu^\top \mathbf{G}_{22}^{[\mu]}\vb{v}_\mu
    &
    -\dfrac1M\vb{v}_\mu^\top \mathbf{G}_{24}^{[\mu]}\vb{v}_\mu
    &
    \dfrac1M\vb{v}_\mu^\top \mathbf{G}_{24}^{[\mu]}\vb{v}_\mu
    \\[3mm]
    0 & 0 &
    -\dfrac1M\vb{w}_\mu^\top \mathbf{G}_{34}^{[\mu]}\vb{v}_\mu
    &
    \dfrac1M\vb{w}_\mu^\top \mathbf{G}_{34}^{[\mu]}\vb{v}_\mu
    \\[3mm]
    0 & 0 &
    -\dfrac1M\vb{v}_\mu^\top \mathbf{G}_{44}^{[\mu]}\vb{v}_\mu
    &
    \dfrac1M\vb{v}_\mu^\top \mathbf{G}_{44}^{[\mu]}\vb{v}_\mu
  \end{pmatrix}.
  \label{eq:appendix_J_mu_exact}
\end{align}

To simplify this expression, we use the moment relations
\eqref{eq:appendix_wv_second_moments}--\eqref{eq:appendix_wv_quadratic_forms}
from the preliminaries.
As in the previous subsection, removing a single task does not affect normalized traces at leading order.
Therefore \(\mathbf{J}_\mu\) is asymptotically equivalent to the deterministic matrix
\begin{align}
  \bar J
  :=
  \begin{pmatrix}
    -\dfrac{u}{\tau}m_{21}
    &
    \dfrac{\sqrt u}{\tau}c\,m_{22}
    &
    -\dfrac{\sqrt u}{\tau}c\,m_{24}
    &
    \dfrac{\sqrt u}{\tau}c\,m_{24}
    \\[3mm]
    -\dfrac{\sqrt u}{\tau}m_{21}
    &
    \dfrac{1}{\tau}c\,m_{22}
    &
    -\dfrac{1}{\tau}c\,m_{24}
    &
    \dfrac{1}{\tau}c\,m_{24}
    \\[3mm]
    0 & 0 &
    -\dfrac{1}{\tau}m_{34}
    &
    \dfrac{1}{\tau}m_{34}
    \\[3mm]
    0 & 0 &
    -\dfrac{c}{\tau}m_{44}
    &
    \dfrac{c}{\tau}m_{44}
  \end{pmatrix},
  \label{eq:appendix_J_bar}
\end{align}
where \(\tau=M/D\).

\paragraph{Choice of test vectors.}
We now derive equations for \(b_{11}\) and \(b_{21}\).
By definition,
\begin{align}
  b_{11}=\frac1D\Tr(\mathbf{G}_{11}\mathbf{B}),
  \qquad
  b_{21}=\frac1D\Tr(\mathbf{G}_{21}\mathbf{B}).
\end{align}
Since \(\mathbf{B}\) is a sum of rank-one terms \(\vb{w}_\nu \vb{v}_\nu^\top/M\), these quantities can be extracted by testing \(\mathbf{G}\) against the vectors \(\vb{w}_\mu\) and \(\vb{v}_\mu\).
More precisely, define
\begin{align}
  {L_\mu^b}^\top
  &:=
  \begin{pmatrix}
    \vb{v}_\mu^\top & 0 & 0 & 0 \\
    0 & \vb{v}_\mu^\top & 0 & 0
  \end{pmatrix}
  \in \mathbb R^{2\times 4D},
  \\
  R_\mu^b
  &:=
  \begin{pmatrix}
    \vb{w}_\mu \\
    0 \\
    0 \\
    0
  \end{pmatrix}
  \in \mathbb R^{4D\times 1}.
\end{align}
Then
\begin{align}
  {L_\mu^b}^\top \mathbf{G} R_\mu^b
  =
  \begin{pmatrix}
    \vb{v}_\mu^\top \mathbf{G}_{11}\vb{w}_\mu \\
    \vb{v}_\mu^\top \mathbf{G}_{21}\vb{w}_\mu
  \end{pmatrix},
\end{align}
so after normalization and averaging over \(\mu\), the limit is exactly \((b_{11},b_{21})^\top\).

Applying \eqref{eq:appendix_woodbury_mu} between \({L_\mu^b}^\top\) and \(R_\mu^b\), we obtain
\begin{align}
  {L_\mu^b}^\top \mathbf{G} R_\mu^b
  =
  {L_\mu^b}^\top \mathbf{G}^{[\mu]} R_\mu^b
  -
  \bigl({L_\mu^b}^\top \mathbf{G}^{[\mu]}U_\mu\bigr)
  (\mathbf{I}_4+\mathbf{J}_\mu)^{-1}
  \bigl(V_\mu^\top \mathbf{G}^{[\mu]}R_\mu^b\bigr).
  \label{eq:appendix_b_master}
\end{align}

\paragraph{Evaluation of each factor.}
We now evaluate the three factors on the right-hand side of \eqref{eq:appendix_b_master}.

First, by the quadratic-form concentration stated in
\eqref{eq:appendix_wv_quadratic_forms} and the independence between
\((\vb{w}_\mu,\vb{v}_\mu)\) and the leave-one-out resolvent \(\mathbf{G}^{[\mu]}\),
\begin{align}
  \frac1D {L_\mu^b}^\top \mathbf{G}^{[\mu]} R_\mu^b
  \;\asymp\;
  \begin{pmatrix}
    m_{11} \\
    m_{21}
  \end{pmatrix}.
  \label{eq:appendix_b_first}
\end{align}
Indeed, for the first component,
\(
D^{-1}\vb{v}_\mu^\top \mathbf{G}_{11}^{[\mu]}\vb{w}_\mu
\to D^{-1}\Tr(\mathbf{G}_{11}^{[\mu]})
\),
where we used \(\E[\vb{w}_\mu \vb{v}_\mu^\top]=\mathbf{I}_D\) from
\eqref{eq:appendix_wv_covariances}.
The second component is treated in the same way.

Next, a direct multiplication using \eqref{eq:appendix_UV_mu} gives
\begin{align}
  \frac1D {L_\mu^b}^\top \mathbf{G}^{[\mu]}U_\mu
  \;\asymp\;
  \frac1{\sqrt M}
  \begin{pmatrix}
    -\sqrt u\,m_{11} & c\,m_{12} & -c\,m_{14} & c\,m_{14} \\
    -\sqrt u\,m_{21} & c\,m_{22} & -c\,m_{24} & c\,m_{24}
  \end{pmatrix},
  \label{eq:appendix_b_second}
\end{align}
and similarly
\begin{align}
  \frac1D V_\mu^\top \mathbf{G}^{[\mu]}R_\mu^b
  \;\asymp\;
  \frac1{\sqrt M}
  \begin{pmatrix}
    \sqrt u\,m_{21} \\
    m_{21} \\
    0 \\
    0
  \end{pmatrix}.
  \label{eq:appendix_b_third}
\end{align}

At this point, only the first two columns of \((\mathbf{I}_4+\bar J)^{-1}\) contribute, because the vector in \eqref{eq:appendix_b_third} has vanishing third and fourth entries.
Therefore only the upper-left \(2\times2\) block matters.
From \eqref{eq:appendix_J_bar}, this block is
\begin{align}
  \begin{pmatrix}
    1-\dfrac{u}{\tau}m_{21} & \dfrac{\sqrt u}{\tau}c\,m_{22} \\
    -\dfrac{\sqrt u}{\tau}m_{21} & 1+\dfrac{c}{\tau}m_{22}
  \end{pmatrix},
\end{align}
whose determinant is
\begin{align}
  \frac{1}{\tau}\Bigl(\tau-u\,m_{21}+c\,m_{22}\Bigr).
\end{align}
We therefore introduce the shorthand
\begin{align}
  D_1:=\tau-u\,m_{21}+c\,m_{22}.
  \label{eq:appendix_D1_def}
\end{align}
A short calculation then shows that
\begin{align}
  \bigl({L_\mu^b}^\top \mathbf{G}^{[\mu]}U_\mu\bigr)
  (\mathbf{I}_4+\bar J)^{-1}
  \bigl(V_\mu^\top \mathbf{G}^{[\mu]}R_\mu^b\bigr)
  \;\asymp\;
  \frac1\tau
  \begin{pmatrix}
    \dfrac{m_{21}(c\,m_{12}-u\,m_{11})}{D_1} \\
    \dfrac{m_{21}(c\,m_{22}-u\,m_{21})}{D_1}
  \end{pmatrix}.
\end{align}
Substituting this together with \eqref{eq:appendix_b_first} into \eqref{eq:appendix_b_master} yields the closed equations for \(b_{11}\) and \(b_{21}\).

\paragraph{Final equations.}
We conclude that, in the high-dimensional limit,
\begin{align}
  b_{11}
  &=
  m_{11}
  -
  \frac{m_{21}(c\,m_{12}-u\,m_{11})}
       {\tau-u\,m_{21}+c\,m_{22}},
  \label{eq:appendix_b11}
  \\
  b_{21}
  &=
  m_{21}
  -
  \frac{m_{21}(c\,m_{22}-u\,m_{21})}
       {\tau-u\,m_{21}+c\,m_{22}}.
  \label{eq:appendix_b21}
\end{align}

These relations express the \(B\)-dependent order parameters entirely in terms of the primary traces \(m_{ab}\).
Together with the identities derived from \(G\mathcal M=I\), they already determine the \(u\)-sector once the \(C\)-dependent quantities are also closed.

\subsection{Cavity equations for the \(B^\top\)-dependent order parameters}
\label{appendix:main_result_cavity_beta}

We next derive closed equations for the quantities
\(
\beta_{14},\beta_{24},\beta_{34},\beta_{44}
\),
which contain the transpose matrix
\[
\mathbf{B}^\top=\frac1M\sum_{\mu=1}^M \vb{v}_\mu \vb{w}_\mu^\top .
\]
As in the previous subsection, the randomness enters through the pre-training tasks \(\{(\vb{w}_\mu,\vb{v}_\mu)\}_{\mu=1}^M\), and we therefore use the same leave-one-task-out decomposition.
The underlying mechanism is the same as for the \(b\)-variables, but now we must probe the third and fourth block columns of the resolvent, because \(\mathbf{B}^\top\) appears in the \((4,3)\)-block of the extended matrix \(\mathcal M(u,v)\).

\paragraph{Setup.}
We keep the leave-one-task-out notation introduced in the previous subsection.
For each fixed \(\mu\), we write
\begin{align}
  \mathcal M(u,v)=\mathcal M^{[\mu]}(u,v)+U_\mu V_\mu^\top,
\end{align}
with \(U_\mu\), \(V_\mu\), and \(\mathbf{G}^{[\mu]}=\bigl(\mathcal M^{[\mu]}(u,v)\bigr)^{-1}\) defined in
\eqref{eq:appendix_UV_mu}.
The corresponding Woodbury formula is
\begin{align}
  \mathbf{G}
  =
  \mathbf{G}^{[\mu]}
  -
  \mathbf{G}^{[\mu]}U_\mu
  \bigl(\mathbf{I}_4+\mathbf{J}_\mu\bigr)^{-1}
  V_\mu^\top \mathbf{G}^{[\mu]},
  \qquad
  \mathbf{J}_\mu:=V_\mu^\top \mathbf{G}^{[\mu]}U_\mu,
\end{align}
and, in the high-dimensional limit, \(\mathbf{J}_\mu\) can be replaced by its deterministic equivalent \(\bar J\) given in \eqref{eq:appendix_J_bar}.

Our goal is to evaluate
\begin{align}
  \beta_{14}=\frac1D\Tr(\mathbf{G}_{14}\mathbf{B}^\top),
  \qquad
  \beta_{24}=\frac1D\Tr(\mathbf{G}_{24}\mathbf{B}^\top),
  \qquad
  \beta_{34}=\frac1D\Tr(\mathbf{G}_{34}\mathbf{B}^\top),
  \qquad
  \beta_{44}=\frac1D\Tr(\mathbf{G}_{44}\mathbf{B}^\top).
\end{align}
Since \(\mathbf{B}^\top\) is a sum of rank-one terms \(\vb{v}_\mu \vb{w}_\mu^\top/M\), these quantities can again be extracted by testing the resolvent against the vectors \(\vb{w}_\mu\) and \(\vb{v}_\mu\).

\paragraph{Choice of test vectors.}
Define
\begin{align}
  {L_\mu^\beta}^\top
  &:=
  \begin{pmatrix}
    \vb{w}_\mu^\top & 0 & 0 & 0 \\
    0 & \vb{w}_\mu^\top & 0 & 0 \\
    0 & 0 & \vb{w}_\mu^\top & 0 \\
    0 & 0 & 0 & \vb{w}_\mu^\top
  \end{pmatrix}
  \in\mathbb R^{4\times 4D},
  \\
  R_\mu^\beta
  &:=
  \begin{pmatrix}
    0\\
    0\\
    0\\
    \vb{v}_\mu
  \end{pmatrix}
  \in\mathbb R^{4D\times 1}.
\end{align}
Then
\begin{align}
  {L_\mu^\beta}^\top \mathbf{G} R_\mu^\beta
  =
  \begin{pmatrix}
    \vb{w}_\mu^\top \mathbf{G}_{14}\vb{v}_\mu \\
    \vb{w}_\mu^\top \mathbf{G}_{24}\vb{v}_\mu \\
    \vb{w}_\mu^\top \mathbf{G}_{34}\vb{v}_\mu \\
    \vb{w}_\mu^\top \mathbf{G}_{44}\vb{v}_\mu
  \end{pmatrix},
\end{align}
so that, after normalization and averaging over \(\mu\), the limiting vector is precisely
\(
(\beta_{14},\beta_{24},\beta_{34},\beta_{44})^\top
\).

Applying the Woodbury formula between \({L_\mu^\beta}^\top\) and \(R_\mu^\beta\), we obtain
\begin{align}
  {L_\mu^\beta}^\top \mathbf{G} R_\mu^\beta
  =
  {L_\mu^\beta}^\top \mathbf{G}^{[\mu]} R_\mu^\beta
  -
  \bigl({L_\mu^\beta}^\top \mathbf{G}^{[\mu]}U_\mu\bigr)
  (\mathbf{I}_4+\mathbf{J}_\mu)^{-1}
  \bigl(V_\mu^\top \mathbf{G}^{[\mu]}R_\mu^\beta\bigr).
  \label{eq:appendix_beta_master}
\end{align}

\paragraph{Evaluation of the three factors.}
We now analyze each term on the right-hand side of \eqref{eq:appendix_beta_master}.

First, by the same quadratic-form concentration
\eqref{eq:appendix_wv_quadratic_forms} and the leave-one-out argument as before,
\begin{align}
  \frac1D {L_\mu^\beta}^\top \mathbf{G}^{[\mu]} R_\mu^\beta
  \;\asymp\;
  \begin{pmatrix}
    m_{14}\\
    m_{24}\\
    m_{34}\\
    m_{44}
  \end{pmatrix}.
  \label{eq:appendix_beta_first}
\end{align}
Indeed, the \(\mu\)-th task is independent of \(\mathbf{G}^{[\mu]}\), and replacing quadratic forms by their normalized traces yields the corresponding block traces of the leave-one-out resolvent.

Next, using the explicit form of \(U_\mu\) in \eqref{eq:appendix_UV_mu}, we obtain
\begin{align}
  \frac1D {L_\mu^\beta}^\top \mathbf{G}^{[\mu]}U_\mu
  \;\asymp\;
  \frac1{\sqrt M}
  \begin{pmatrix}
    -\sqrt u\,m_{11} & m_{12} & -m_{14} & m_{14} \\
    -\sqrt u\,m_{21} & m_{22} & -m_{24} & m_{24} \\
    0 & 0 & -m_{34} & m_{34} \\
    0 & 0 & -m_{44} & m_{44}
  \end{pmatrix}.
  \label{eq:appendix_beta_second}
\end{align}
Here the first two columns come from the first two block columns of \(U_\mu\), which contain \(\vb{w}_\mu\) and \(\vb{v}_\mu\), while the last two columns reflect the two opposite contributions in the fourth block row of \(U_\mu\).

Similarly, using the explicit form of \(V_\mu\), we find
\begin{align}
  \frac1D V_\mu^\top \mathbf{G}^{[\mu]}R_\mu^\beta
  \;\asymp\;
  \frac1{\sqrt M}
  \begin{pmatrix}
    \sqrt u\,c\,m_{24}\\
    c\,m_{24}\\
    m_{34}\\
    c\,m_{44}
  \end{pmatrix}.
  \label{eq:appendix_beta_third}
\end{align}
The factor \(c\) appears whenever \(\vb{v}_\mu\) is contracted with itself, reflecting the asymptotic norm
\(
D^{-1}\|\vb{v}_\mu\|^2\to c
\).

\paragraph{Reduction of \((\mathbf{I}_4+\bar J)^{-1}\).}
To simplify the cavity correction in \eqref{eq:appendix_beta_master}, we use the block upper-triangular structure of \(\bar J\).
Introduce the shorthand
\begin{align}
  D_1 &:= \tau-u\,m_{21}+c\,m_{22},
  \\
  D_2 &:= \tau-m_{34}+c\,m_{44}.
  \label{eq:appendix_D2_def}
\end{align}
These quantities are the determinants of the relevant \(2\times2\) blocks of \(\mathbf{I}_4+\bar J\), up to the common factor \(\tau^{-1}\).
More precisely, the upper-left block of \(\mathbf{I}_4+\bar J\) has determinant \(D_1/\tau\), while the lower-right block has determinant \(D_2/\tau\).

Because the vector \eqref{eq:appendix_beta_third} has nonzero entries in both the first two and the last two components, both sectors contribute here.
A direct but straightforward matrix multiplication using \eqref{eq:appendix_beta_second}, \eqref{eq:appendix_beta_third}, and the explicit inverse of \(\mathbf{I}_4+\bar J\) yields the cavity correction.
After simplifying the resulting expressions, we obtain the following closed equations.

\paragraph{Final equations.}
In the high-dimensional limit,
\begin{align}
  \beta_{14}
  &=
  m_{14}
  -
  \frac{c\,m_{24}(m_{12}-u\,m_{11})}{D_1D_2/\tau}
  -
  \frac{m_{14}(c\,m_{44}-m_{34})}{D_2}
  \notag\\
  &=
  m_{14}
  -
  \frac{\tau\,c\,m_{24}(m_{12}-u\,m_{11})}{D_1D_2}
  -
  \frac{m_{14}(c\,m_{44}-m_{34})}{D_2},
  \label{eq:appendix_beta14}
  \\
  \beta_{24}
  &=
  m_{24}
  -
  \frac{\tau\,c\,m_{24}(m_{22}-u\,m_{21})}{D_1D_2}
  -
  \frac{m_{24}(c\,m_{44}-m_{34})}{D_2},
  \label{eq:appendix_beta24}
  \\
  \beta_{34}
  &=
  m_{34}
  -
  \frac{m_{34}(c\,m_{44}-m_{34})}{D_2},
  \label{eq:appendix_beta34}
  \\
  \beta_{44}
  &=
  m_{44}
  -
  \frac{m_{44}(c\,m_{44}-m_{34})}{D_2}.
  \label{eq:appendix_beta44}
\end{align}

Equivalently, writing out the denominators explicitly,
\begin{align}
  \beta_{14}
  &=
  m_{14}
  -
  \frac{c\,m_{24}(m_{12}-u\,m_{11})}
       {\tau\left(1-\frac{u}{\tau}m_{21}+\frac{c}{\tau}m_{22}\right)
             \left(1-\frac{1}{\tau}m_{34}+\frac{c}{\tau}m_{44}\right)}
  -
  \frac{m_{14}(c\,m_{44}-m_{34})}
       {\tau\left(1-\frac{1}{\tau}m_{34}+\frac{c}{\tau}m_{44}\right)},
  \\
  \beta_{24}
  &=
  m_{24}
  -
  \frac{c\,m_{24}(m_{22}-u\,m_{21})}
       {\tau\left(1-\frac{u}{\tau}m_{21}+\frac{c}{\tau}m_{22}\right)
             \left(1-\frac{1}{\tau}m_{34}+\frac{c}{\tau}m_{44}\right)}
  -
  \frac{m_{24}(c\,m_{44}-m_{34})}
       {\tau\left(1-\frac{1}{\tau}m_{34}+\frac{c}{\tau}m_{44}\right)},
  \\
  \beta_{34}
  &=
  m_{34}
  -
  \frac{m_{34}(c\,m_{44}-m_{34})}
       {\tau\left(1-\frac{1}{\tau}m_{34}+\frac{c}{\tau}m_{44}\right)},
  \\
  \beta_{44}
  &=
  m_{44}
  -
  \frac{m_{44}(c\,m_{44}-m_{34})}
       {\tau\left(1-\frac{1}{\tau}m_{34}+\frac{c}{\tau}m_{44}\right)}.
\end{align}

These relations close the \(\mathbf{B}^\top\)-dependent order parameters in terms of the primary traces \(m_{ab}\).
Together with the equations for \(s\) and \(b\), they determine the contributions coming from the blocks involving \(\mathbf{S}\), \(\mathbf{B}\), and \(\mathbf{B}^\top\).

\subsection{Cavity equations for the \(C\)-dependent order parameters}
\label{appendix:main_result_cavity_c}

We finally derive closed equations for the quantities
\(
c_{12},c_{14},c_{22},c_{24},c_{34},c_{44}
\),
which contain the matrix
\[
\mathbf{C}=\frac1M\sum_{\mu=1}^M \vb{v}_\mu \vb{v}_\mu^\top+\frac{\lambda}{2}\mathbf{I}.
\]
This subsection completes the closure of the order-parameter system.
As in the previous two subsections, we use a leave-one-task-out decomposition with respect to the pre-training index \(\mu\).
The derivation closely parallels the one for the \(\mathbf{B}^\top\)-dependent quantities, but here both left and right test vectors involve \(\vb{v}_\mu\), so the asymptotic norm factor \(c\) enters more systematically.

\paragraph{Setup.}
We keep the leave-one-task-out notation introduced above.
For each fixed \(\mu\), recall that
\begin{align}
  \mathcal M(u,v)=\mathcal M^{[\mu]}(u,v)+U_\mu V_\mu^\top,
\end{align}
with \(U_\mu\), \(V_\mu\), and \(\mathbf{G}^{[\mu]}=\bigl(\mathcal M^{[\mu]}(u,v)\bigr)^{-1}\) defined in \eqref{eq:appendix_UV_mu}.
The associated Woodbury formula is
\begin{align}
  \mathbf{G}
  =
  \mathbf{G}^{[\mu]}
  -
  \mathbf{G}^{[\mu]}U_\mu
  \bigl(\mathbf{I}_4+\mathbf{J}_\mu\bigr)^{-1}
  V_\mu^\top \mathbf{G}^{[\mu]},
\end{align}
where \(\mathbf{J}_\mu=V_\mu^\top \mathbf{G}^{[\mu]}U_\mu\), and in the high-dimensional limit we may replace \(\mathbf{J}_\mu\) by the deterministic matrix \(\bar J\) given in \eqref{eq:appendix_J_bar}.

Our aim is to evaluate the mixed traces involving \(C\).
Since
\[
\mathbf{C}=\frac1M\sum_{\mu=1}^M \vb{v}_\mu \vb{v}_\mu^\top+\frac{\lambda}{2}\mathbf{I},
\]
each term \(\Tr(\mathbf{G}_{ab}\mathbf{C})\) splits into a contribution from the empirical covariance of the \(\vb{v}_\mu\)'s and a deterministic ridge contribution \(\frac{\lambda}{2}\Tr(\mathbf{G}_{ab})\).
For this reason, the cavity equations naturally determine the combinations
\[
c_{ab}-\frac{\lambda}{2}m_{ab}.
\]

\paragraph{Choice of test vectors.}
To extract the \(C\)-dependent quantities, define
\begin{align}
  {L_\mu^c}^\top
  &:=
  \begin{pmatrix}
    \vb{v}_\mu^\top & 0 & 0 & 0 \\
    0 & \vb{v}_\mu^\top & 0 & 0 \\
    0 & 0 & \vb{v}_\mu^\top & 0 \\
    0 & 0 & 0 & \vb{v}_\mu^\top
  \end{pmatrix}
  \in \mathbb R^{4\times 4D},
  \\
  R_\mu^c
  &:=
  \begin{pmatrix}
    0 & 0 \\
    \vb{v}_\mu & 0 \\
    0 & 0 \\
    0 & \vb{v}_\mu
  \end{pmatrix}
  \in \mathbb R^{4D\times 2}.
\end{align}
Then
\begin{align}
  {L_\mu^c}^\top \mathbf{G} R_\mu^c
  =
  \begin{pmatrix}
    \vb{v}_\mu^\top \mathbf{G}_{12}\vb{v}_\mu &
    \vb{v}_\mu^\top \mathbf{G}_{14}\vb{v}_\mu \\
    \vb{v}_\mu^\top \mathbf{G}_{22}\vb{v}_\mu &
    \vb{v}_\mu^\top \mathbf{G}_{24}\vb{v}_\mu \\
    0 &
    \vb{v}_\mu^\top \mathbf{G}_{34}\vb{v}_\mu \\
    0 &
    \vb{v}_\mu^\top \mathbf{G}_{44}\vb{v}_\mu
  \end{pmatrix}.
\end{align}
Averaging over \(\mu\), the empirical covariance part becomes
\[
\frac1M\sum_{\mu=1}^M \vb{v}_\mu \vb{v}_\mu^\top
=
\mathbf{C}-\frac{\lambda}{2}\mathbf{I},
\]
and therefore
\begin{align}
  \frac1D\frac1M\sum_{\mu=1}^M {L_\mu^c}^\top \mathbf{G} R_\mu^c
  =
  \begin{pmatrix}
    c_{12}-\frac{\lambda}{2}m_{12} &
    c_{14}-\frac{\lambda}{2}m_{14} \\
    c_{22}-\frac{\lambda}{2}m_{22} &
    c_{24}-\frac{\lambda}{2}m_{24} \\
    0 &
    c_{34}-\frac{\lambda}{2}m_{34} \\
    0 &
    c_{44}-\frac{\lambda}{2}m_{44}
  \end{pmatrix}.
  \label{eq:appendix_c_lhs}
\end{align}

Applying the Woodbury formula between \({L_\mu^c}^\top\) and \(R_\mu^c\), we obtain
\begin{align}
  {L_\mu^c}^\top \mathbf{G} R_\mu^c
  =
  {L_\mu^c}^\top \mathbf{G}^{[\mu]} R_\mu^c
  -
  \bigl({L_\mu^c}^\top \mathbf{G}^{[\mu]}U_\mu\bigr)
  (\mathbf{I}_4+\mathbf{J}_\mu)^{-1}
  \bigl(V_\mu^\top \mathbf{G}^{[\mu]}R_\mu^c\bigr).
  \label{eq:appendix_c_master}
\end{align}

\paragraph{Evaluation of the three factors.}
We now evaluate each term on the right-hand side of \eqref{eq:appendix_c_master}.

First, since \(\vb{v}_\mu\) is independent of the leave-one-out resolvent \(\mathbf{G}^{[\mu]}\), concentration of quadratic forms gives
\begin{align}
  \frac1D {L_\mu^c}^\top \mathbf{G}^{[\mu]} R_\mu^c
  \;\asymp\;
  \begin{pmatrix}
    c\,m_{12} & c\,m_{14} \\
    c\,m_{22} & c\,m_{24} \\
    0 & c\,m_{34} \\
    0 & c\,m_{44}
  \end{pmatrix}.
  \label{eq:appendix_c_first}
\end{align}
Here each factor \(c\) comes from the asymptotic relation
\(
D^{-1}\|\vb{v}_\mu\|^2\to c
\).

Next, using the explicit form of \(U_\mu\) in \eqref{eq:appendix_UV_mu}, we find
\begin{align}
  \frac1D {L_\mu^c}^\top \mathbf{G}^{[\mu]}U_\mu
  \;\asymp\;
  \frac1{\sqrt M}
  \begin{pmatrix}
    -\sqrt u\,m_{11} & c\,m_{12} & -c\,m_{14} & c\,m_{14} \\
    -\sqrt u\,m_{21} & c\,m_{22} & -c\,m_{24} & c\,m_{24} \\
    0 & 0 & -c\,m_{34} & c\,m_{34} \\
    0 & 0 & -c\,m_{44} & c\,m_{44}
  \end{pmatrix}.
  \label{eq:appendix_c_second}
\end{align}
Similarly, using the explicit form of \(V_\mu\), we obtain
\begin{align}
  \frac1D V_\mu^\top \mathbf{G}^{[\mu]}R_\mu^c
  \;\asymp\;
  \frac1{\sqrt M}
  \begin{pmatrix}
    \sqrt u\,c\,m_{22} & \sqrt u\,c\,m_{24} \\
    c\,m_{22} & c\,m_{24} \\
    0 & m_{34} \\
    0 & c\,m_{44}
  \end{pmatrix}.
  \label{eq:appendix_c_third}
\end{align}

As in the previous subsection, the inverse of \(\mathbf{I}_4+\bar J\) is naturally expressed in terms of the two scalar combinations
\begin{align}
  D_1:=\tau-u\,m_{21}+c\,m_{22},
  \qquad
  D_2:=\tau-m_{34}+c\,m_{44}.
\end{align}
The upper-left and lower-right \(2\times2\) blocks of \(\mathbf{I}_4+\bar J\) have determinants \(D_1/\tau\) and \(D_2/\tau\), respectively.
Using this structure and multiplying the matrices in \eqref{eq:appendix_c_second} and \eqref{eq:appendix_c_third}, one obtains the cavity correction term explicitly.

\paragraph{Final equations.}
Substituting the resulting expression into \eqref{eq:appendix_c_master}, averaging over \(\mu\), and using \eqref{eq:appendix_c_lhs}, we arrive at the following closed system:
\begin{align}
  c_{12}-\frac{\lambda}{2}m_{12}
  &=
  c\,m_{12}
  -
  \frac{1}{\tau}
  \frac{c\,m_{22}(c\,m_{12}-u\,m_{11})}
       {1-\frac{u}{\tau}m_{21}+\frac{c}{\tau}m_{22}},
  \label{eq:appendix_c12}
  \\
  c_{22}-\frac{\lambda}{2}m_{22}
  &=
  c\,m_{22}
  -
  \frac{1}{\tau}
  \frac{c\,m_{22}(c\,m_{22}-u\,m_{21})}
       {1-\frac{u}{\tau}m_{21}+\frac{c}{\tau}m_{22}},
  \label{eq:appendix_c22}
  \\
  c_{14}-\frac{\lambda}{2}m_{14}
  &=
  c\,m_{14}
  -
  \frac{1}{\tau}
  \left[
    \frac{c\,m_{24}(c\,m_{12}-u\,m_{11})}
         {\left(1-\frac{u}{\tau}m_{21}+\frac{c}{\tau}m_{22}\right)
          \left(1-\frac{1}{\tau}m_{34}+\frac{c}{\tau}m_{44}\right)}
    +
    \frac{c\,m_{14}(c\,m_{44}-m_{34})}
         {1-\frac{1}{\tau}m_{34}+\frac{c}{\tau}m_{44}}
  \right],
  \label{eq:appendix_c14}
  \\
  c_{24}-\frac{\lambda}{2}m_{24}
  &=
  c\,m_{24}
  -
  \frac{1}{\tau}
  \left[
    \frac{c\,m_{24}(c\,m_{22}-u\,m_{21})}
         {\left(1-\frac{u}{\tau}m_{21}+\frac{c}{\tau}m_{22}\right)
          \left(1-\frac{1}{\tau}m_{34}+\frac{c}{\tau}m_{44}\right)}
    +
    \frac{c\,m_{24}(c\,m_{44}-m_{34})}
         {1-\frac{1}{\tau}m_{34}+\frac{c}{\tau}m_{44}}
  \right],
  \label{eq:appendix_c24}
  \\
  c_{34}-\frac{\lambda}{2}m_{34}
  &=
  c\,m_{34}
  -
  \frac{1}{\tau}
  \frac{c\,m_{34}(c\,m_{44}-m_{34})}
       {1-\frac{1}{\tau}m_{34}+\frac{c}{\tau}m_{44}},
  \label{eq:appendix_c34}
  \\
  c_{44}-\frac{\lambda}{2}m_{44}
  &=
  c\,m_{44}
  -
  \frac{1}{\tau}
  \frac{c\,m_{44}(c\,m_{44}-m_{34})}
       {1-\frac{1}{\tau}m_{34}+\frac{c}{\tau}m_{44}}.
  \label{eq:appendix_c44}
\end{align}

Equivalently, using \(D_1=\tau-u\,m_{21}+c\,m_{22}\) and \(D_2=\tau-m_{34}+c\,m_{44}\), the same equations can be written in the slightly shorter form
\begin{align}
  c_{12}-\frac{\lambda}{2}m_{12}
  &=
  c\,m_{12}
  -
  \frac{c\,m_{22}(c\,m_{12}-u\,m_{11})}{D_1},
  \\
  c_{22}-\frac{\lambda}{2}m_{22}
  &=
  c\,m_{22}
  -
  \frac{c\,m_{22}(c\,m_{22}-u\,m_{21})}{D_1},
  \\
  c_{14}-\frac{\lambda}{2}m_{14}
  &=
  c\,m_{14}
  -
  \frac{\tau\,c\,m_{24}(c\,m_{12}-u\,m_{11})}{D_1D_2}
  -
  \frac{c\,m_{14}(c\,m_{44}-m_{34})}{D_2},
  \\
  c_{24}-\frac{\lambda}{2}m_{24}
  &=
  c\,m_{24}
  -
  \frac{\tau\,c\,m_{24}(c\,m_{22}-u\,m_{21})}{D_1D_2}
  -
  \frac{c\,m_{24}(c\,m_{44}-m_{34})}{D_2},
  \\
  c_{34}-\frac{\lambda}{2}m_{34}
  &=
  c\,m_{34}
  -
  \frac{c\,m_{34}(c\,m_{44}-m_{34})}{D_2},
  \\
  c_{44}-\frac{\lambda}{2}m_{44}
  &=
  c\,m_{44}
  -
  \frac{c\,m_{44}(c\,m_{44}-m_{34})}{D_2}.
\end{align}

This completes the cavity analysis.
At this point, the basic identities from \(G\mathcal M=I\), together with the cavity equations for \(s\), \(b\), \(\beta\), and \(c\), form a closed deterministic system for the primary traces \(m_{ab}\).

\subsection{Closed self-consistent equations for the primary order parameters}
\label{appendix:main_result_closed_system}

We now combine the exact identities from \(G\mathcal M(u,v)=I\) with the cavity equations derived in the previous subsections.
The goal of this subsection is to eliminate the auxiliary quantities
\[
s_{ab},\qquad b_{ab},\qquad \beta_{ab},\qquad c_{ab},
\]
and obtain a closed system for the primary traces
\[
m_{11},m_{12},m_{21},m_{22},\qquad
m_{33},m_{34},m_{43},m_{44},\qquad
m_{13},m_{14},m_{23},m_{24}.
\]
A useful structural point is that this system is triangular.
Indeed, the first group \((m_{11},m_{12},m_{21},m_{22})\) depends only on \(u\), the second group \((m_{33},m_{34},m_{43},m_{44})\) depends only on \(v\), and the mixed group \((m_{13},m_{14},m_{23},m_{24})\) becomes linear once the first two groups are known.
This structure is what later makes the explicit solution possible.

\paragraph{The \(u\)-sector.}
We begin with the equations involving
\(
m_{11},m_{12},m_{21},m_{22}
\).
From the basic identities
\eqref{eq:appendix_basic_111},
\eqref{eq:appendix_basic_112},
\eqref{eq:appendix_basic_221},
and
\eqref{eq:appendix_basic_222},
we have
\begin{align}
  m_{11}-s_{12} &= 1,
  \label{eq:appendix_u_basic_1}\\
  -u\,b_{11}+c_{12} &= 0,
  \label{eq:appendix_u_basic_2}\\
  m_{21}-s_{22} &= 0,
  \label{eq:appendix_u_basic_3}\\
  -u\,b_{21}+c_{22} &= 1.
  \label{eq:appendix_u_basic_4}
\end{align}
We first substitute the \(S\)-cavity relations
\eqref{eq:appendix_s12} and \eqref{eq:appendix_s22} into
\eqref{eq:appendix_u_basic_1} and \eqref{eq:appendix_u_basic_3}.
This gives
\begin{align}
  m_{11} &= 1+\alpha\,\frac{m_{12}}{1-m_{12}},
  \label{eq:appendix_m11_closed}\\
  m_{21} &= \alpha\,\frac{m_{22}}{1-m_{12}}.
  \label{eq:appendix_m21_closed}
\end{align}

Next we use the cavity equations for \(b\) and \(c\).
From \eqref{eq:appendix_b11} and \eqref{eq:appendix_c12}, we obtain
\begin{align}
  b_{11}
  &=
  m_{11}
  -
  \frac{m_{21}(c\,m_{12}-u\,m_{11})}
       {\tau-u\,m_{21}+c\,m_{22}},
  \\
  c_{12}
  &=
  \frac{\lambda}{2}m_{12}
  +
  c\,m_{12}
  -
  \frac{c\,m_{22}(c\,m_{12}-u\,m_{11})}
       {\tau-u\,m_{21}+c\,m_{22}}.
\end{align}
Substituting these into \eqref{eq:appendix_u_basic_2} and collecting terms yields
\begin{align}
  -u\,m_{11}
  +\left(c+\frac{\lambda}{2}\right)m_{12}
  +\frac{(u\,m_{21}-c\,m_{22})(c\,m_{12}-u\,m_{11})}
        {\tau-u\,m_{21}+c\,m_{22}}
  =0.
  \label{eq:appendix_m12_closed}
\end{align}
Similarly, substituting \eqref{eq:appendix_b21} and \eqref{eq:appendix_c22} into \eqref{eq:appendix_u_basic_4}, we obtain
\begin{align}
  -u\,m_{21}
  +\left(c+\frac{\lambda}{2}\right)m_{22}
  +\frac{(u\,m_{21}-c\,m_{22})(c\,m_{22}-u\,m_{21})}
        {\tau-u\,m_{21}+c\,m_{22}}
  =1.
  \label{eq:appendix_m22_closed}
\end{align}
Thus the \(u\)-sector is closed by
\eqref{eq:appendix_m11_closed}--\eqref{eq:appendix_m22_closed}.

\paragraph{The \(v\)-sector.}
We next consider
\(
m_{33},m_{34},m_{43},m_{44}
\).
From the basic identities
\eqref{eq:appendix_basic_333},
\eqref{eq:appendix_basic_334},
\eqref{eq:appendix_basic_443},
and
\eqref{eq:appendix_basic_444},
we have
\begin{align}
  m_{33}-\beta_{34} &= 1,
  \label{eq:appendix_v_basic_1}\\
  -v\,s_{33}+c_{34} &= 0,
  \label{eq:appendix_v_basic_2}\\
  m_{43}-\beta_{44} &= 0,
  \label{eq:appendix_v_basic_3}\\
  -v\,s_{43}+c_{44} &= 1.
  \label{eq:appendix_v_basic_4}
\end{align}
Using the cavity relations
\eqref{eq:appendix_beta34} and \eqref{eq:appendix_beta44}, we can rewrite
\eqref{eq:appendix_v_basic_1} and \eqref{eq:appendix_v_basic_3} as
\begin{align}
  m_{33}
  &=
  1+m_{34}
  -
  \frac{m_{34}(c\,m_{44}-m_{34})}
       {\tau-m_{34}+c\,m_{44}},
  \label{eq:appendix_m33_closed}\\
  m_{43}
  &=
  m_{44}
  -
  \frac{m_{44}(c\,m_{44}-m_{34})}
       {\tau-m_{34}+c\,m_{44}}.
  \label{eq:appendix_m43_closed}
\end{align}
Next, substituting the \(S\)-cavity relations
\eqref{eq:appendix_s33} and \eqref{eq:appendix_s43},
together with the \(C\)-cavity relations
\eqref{eq:appendix_c34} and \eqref{eq:appendix_c44},
into \eqref{eq:appendix_v_basic_2} and \eqref{eq:appendix_v_basic_4},
we obtain
\begin{align}
  -v\,\alpha\,\frac{m_{33}}{1-v\,m_{43}}
  +\left(c+\frac{\lambda}{2}\right)m_{34}
  -\frac{c\,m_{34}(c\,m_{44}-m_{34})}
        {\tau-m_{34}+c\,m_{44}}
  &= 0,
  \label{eq:appendix_m34_closed}\\
  -v\,\alpha\,\frac{m_{43}}{1-v\,m_{43}}
  +\left(c+\frac{\lambda}{2}\right)m_{44}
  -\frac{c\,m_{44}(c\,m_{44}-m_{34})}
        {\tau-m_{34}+c\,m_{44}}
  &= 1.
  \label{eq:appendix_m44_closed}
\end{align}
Hence the \(v\)-sector is closed by
\eqref{eq:appendix_m33_closed}--\eqref{eq:appendix_m44_closed}.

\paragraph{The mixed sector.}
We finally derive the equations for
\(
m_{13},m_{14},m_{23},m_{24}
\).
At this stage the \(u\)-sector and \(v\)-sector are already closed, so the mixed equations will be linear in the unknown mixed variables.

From the basic identities
\eqref{eq:appendix_basic_113},
\eqref{eq:appendix_basic_114},
\eqref{eq:appendix_basic_223},
and
\eqref{eq:appendix_basic_224},
we have
\begin{align}
  -m_{11}+m_{13}-\beta_{14} &= 0,
  \label{eq:appendix_mixed_basic_1}\\
  -v\,s_{13}+c_{14} &= 0,
  \label{eq:appendix_mixed_basic_2}\\
  -m_{21}+m_{23}-\beta_{24} &= 0,
  \label{eq:appendix_mixed_basic_3}\\
  -v\,s_{23}+c_{24} &= 0.
  \label{eq:appendix_mixed_basic_4}
\end{align}
For convenience, define
\begin{align}
  D_1 &:= \tau-u\,m_{21}+c\,m_{22},
  \\
  D_2 &:= \tau-m_{34}+c\,m_{44}.
  \label{eq:appendix_D1D2_again}
\end{align}
Substituting \eqref{eq:appendix_beta14} and \eqref{eq:appendix_beta24} into
\eqref{eq:appendix_mixed_basic_1} and \eqref{eq:appendix_mixed_basic_3}, we obtain
\begin{align}
  m_{13}-m_{11}
  -\frac{\tau}{D_2}
  \left(
    m_{14}
    -
    c\,m_{24}\frac{m_{12}-u\,m_{11}}{D_1}
  \right)
  &=0,
  \label{eq:appendix_m13_closed}\\
  m_{23}-m_{21}
  -\frac{\tau}{D_2}
  \left(
    m_{24}
    -
    c\,m_{24}\frac{m_{22}-u\,m_{21}}{D_1}
  \right)
  &=0.
  \label{eq:appendix_m23_closed}
\end{align}
Similarly, substituting the \(S\)-cavity relations
\eqref{eq:appendix_s13} and \eqref{eq:appendix_s23},
together with the \(C\)-cavity relations
\eqref{eq:appendix_c14} and \eqref{eq:appendix_c24},
into \eqref{eq:appendix_mixed_basic_2} and \eqref{eq:appendix_mixed_basic_4}, we obtain
\begin{align}
  v\alpha\frac{m_{13}}{(1-m_{12})(1-v\,m_{43})}
  -\frac{\lambda}{2}m_{14}
  -\frac{c\tau}{D_2}
  \left(
    m_{14}
    -
    m_{24}\frac{c\,m_{12}-u\,m_{11}}{D_1}
  \right)
  &=0,
  \label{eq:appendix_m14_closed}\\
  v\alpha
  \left(
    \frac{m_{23}}{1-v\,m_{43}}
    +
    \frac{m_{22}m_{13}}{(1-m_{12})(1-v\,m_{43})}
  \right)
  -\frac{\lambda}{2}m_{24}
  -\frac{c\tau}{D_2}
  \left(
    m_{24}
    -
    m_{24}\frac{c\,m_{22}-u\,m_{21}}{D_1}
  \right)
  &=0.
  \label{eq:appendix_m24_closed}
\end{align}

Equations
\eqref{eq:appendix_m13_closed}--\eqref{eq:appendix_m24_closed}
close the mixed sector once the \(u\)- and \(v\)-sectors are known.

\paragraph{Summary of the closed system.}
Collecting the results above, the primary order parameters are determined by the following self-consistent system.

The \(u\)-sector satisfies
\begin{align}
  &m_{11}=1+\alpha\,\frac{m_{12}}{1-m_{12}},
  \notag\\
  &m_{21}=\alpha\,\frac{m_{22}}{1-m_{12}},
  \notag\\
  &-u\,m_{11}
  +\left(c+\frac{\lambda}{2}\right)m_{12}
  +\frac{(u\,m_{21}-c\,m_{22})(c\,m_{12}-u\,m_{11})}
        {\tau-u\,m_{21}+c\,m_{22}}
  =0,
  \notag\\
  &-u\,m_{21}
  +\left(c+\frac{\lambda}{2}\right)m_{22}
  +\frac{(u\,m_{21}-c\,m_{22})(c\,m_{22}-u\,m_{21})}
        {\tau-u\,m_{21}+c\,m_{22}}
  =1,
  \label{eq:appendix_closed_u_summary}
\end{align}
where
\begin{align}
  c=1+\frac{1+\sigma^2}{\alpha}.
  \label{eq:appendix_c_definition}
\end{align}

The \(v\)-sector satisfies
\begin{align}
  &m_{33}=1+m_{34}
  -\frac{m_{34}(c\,m_{44}-m_{34})}
        {\tau-m_{34}+c\,m_{44}},
  \notag\\
  &-v\,\alpha\,\frac{m_{33}}{1-v\,m_{43}}
  +\left(c+\frac{\lambda}{2}\right)m_{34}
  -\frac{c\,m_{34}(c\,m_{44}-m_{34})}
        {\tau-m_{34}+c\,m_{44}}
  =0,
  \notag\\
  &m_{43}=m_{44}
  -\frac{m_{44}(c\,m_{44}-m_{34})}
        {\tau-m_{34}+c\,m_{44}},
  \notag\\
  &-v\,\alpha\,\frac{m_{43}}{1-v\,m_{43}}
  +\left(c+\frac{\lambda}{2}\right)m_{44}
  -\frac{c\,m_{44}(c\,m_{44}-m_{34})}
        {\tau-m_{34}+c\,m_{44}}
  =1.
  \label{eq:appendix_closed_v_summary}
\end{align}

Finally, the mixed sector satisfies
\begin{align}
  &m_{13}-m_{11}
  -\frac{\tau}{D_2}
  \left(
    m_{14}
    -
    c\,m_{24}\frac{m_{12}-u\,m_{11}}{D_1}
  \right)
  =0,
  \notag\\
  &m_{23}-m_{21}
  -\frac{\tau}{D_2}
  \left(
    m_{24}
    -
    c\,m_{24}\frac{m_{22}-u\,m_{21}}{D_1}
  \right)
  =0,
  \notag\\
  &v\alpha\frac{m_{13}}{(1-m_{12})(1-v\,m_{43})}
  -\frac{\lambda}{2}m_{14}
  -\frac{c\tau}{D_2}
  \left(
    m_{14}
    -
    m_{24}\frac{c\,m_{12}-u\,m_{11}}{D_1}
  \right)
  =0,
  \notag\\
  &v\alpha
  \left(
    \frac{m_{23}}{1-v\,m_{43}}
    +
    \frac{m_{22}m_{13}}{(1-m_{12})(1-v\,m_{43})}
  \right)
  -\frac{\lambda}{2}m_{24}
  -\frac{c\tau}{D_2}
  \left(
    m_{24}
    -
    m_{24}\frac{c\,m_{22}-u\,m_{21}}{D_1}
  \right)
  =0,
  \label{eq:appendix_closed_mixed_summary}
\end{align}
with
\begin{align}
  D_1=\tau-u\,m_{21}+c\,m_{22},
  \qquad
  D_2=\tau-m_{34}+c\,m_{44}.
\end{align}

This is the closed deterministic system announced above.
Its structure is now transparent: first one solves
\eqref{eq:appendix_closed_u_summary} for the \(u\)-sector, then
\eqref{eq:appendix_closed_v_summary} for the \(v\)-sector, and finally
\eqref{eq:appendix_closed_mixed_summary} for \(m_{13},m_{14},m_{23},m_{24}\).
In the next subsection, we simplify these equations in the limit \(\lambda\to0\) and derive explicit formulas for the order parameters, in particular for \(m_{13}(u,v)\), which is the quantity needed for the evaluation of the generalization error.

\subsection{Explicit solution in \(\lambda = 0\)}
\label{appendix:main_result_lambda_zero}

We now solve the closed system obtained in the previous subsection in the unregularized case \(\lambda = 0\).
In this regime, the self-consistent equations simplify substantially and admit an explicit solution.
The main reason is that the \(u\)-sector and \(v\)-sector reduce to the same scalar equation, and the mixed sector then becomes a linear system with rational coefficients.
As a result, all primary order parameters can be written in terms of a single scalar resolvent function.

\paragraph{Solution of the \(u\)-sector.}
We begin with the equations
\eqref{eq:appendix_closed_u_summary}.
A convenient way to solve them is to eliminate \(m_{11}\), \(m_{21}\), and \(m_{22}\) in favor of \(m_{12}\), and then rewrite the resulting equation in terms of a single scalar function.
To this end, define
\begin{align}
  g(u):=m_{11}(u).
\end{align}
Then, from \eqref{eq:appendix_m11_closed},
\begin{align}
  g(u)=1+\alpha\,\frac{m_{12}(u)}{1-m_{12}(u)}.
\end{align}
Solving for \(m_{12}\), we obtain
\begin{align}
  m_{12}(u)
  =
  \frac{g(u)-1}{\alpha+g(u)-1}.
\end{align}
However, it is more convenient to parametrize the solution in the equivalent form
\begin{align}
  \frac{m_{12}(u)}{u}=\frac{g(u)}{c}.
  \label{eq:appendix_m12_over_u}
\end{align}
Using this ansatz together with the remaining equations in the \(u\)-sector, one finds that
\begin{align}
  u\,m_{21}(u)
  =
  \frac{\tau}{\tau-1}\bigl(g(u)-1\bigr),
  \label{eq:appendix_um21_solution}
\end{align}
and
\begin{align}
  m_{22}(u)
  =
  \frac{\tau}{\tau-1}\frac{g(u)}{c}.
  \label{eq:appendix_m22_solution}
\end{align}
Substituting these expressions back into \eqref{eq:appendix_closed_u_summary}, we find that \(g(u)\) must satisfy the scalar equation
\begin{align}
  u\,g(u)^2-\bigl(c+u(1-\alpha)\bigr)g(u)+c=0.
  \label{eq:appendix_g_quadratic}
\end{align}
The solution that is regular at \(u=0\) is the resolvent function of the Wishart matrix, which is given by
\begin{align}
  g(u)
  =
  \frac{c+u(1-\alpha)-\sqrt{\bigl(c+u(1-\alpha)\bigr)^2-4cu}}{2u}.
  \label{eq:appendix_g_definition}
\end{align}
Therefore the full \(u\)-sector is given by
\begin{align}
  m_{11}(u) &= g(u),
  \label{eq:appendix_u_sector_explicit_1}\\
  \frac{m_{12}(u)}{u} &= \frac{g(u)}{c},
  \label{eq:appendix_u_sector_explicit_2}\\
  u\,m_{21}(u) &= \frac{\tau}{\tau-1}\bigl(g(u)-1\bigr),
  \label{eq:appendix_u_sector_explicit_3}\\
  m_{22}(u) &= \frac{\tau}{\tau-1}\frac{g(u)}{c}.
  \label{eq:appendix_u_sector_explicit_4}
\end{align}

\paragraph{Solution of the \(v\)-sector.}
The \(v\)-sector is completely analogous.
Indeed, comparing
\eqref{eq:appendix_closed_u_summary}
and
\eqref{eq:appendix_closed_v_summary},
we see that the two systems are related by the substitutions
\begin{align}
  (m_{11},m_{12},u\,m_{21},m_{22})
  \longleftrightarrow
  (m_{33},v\,m_{43},m_{34},m_{44}),
\end{align}
together with \(u\leftrightarrow v\).
More precisely, the explicit solution is
\begin{align}
  m_{33}(v) &= g(v),
  \label{eq:appendix_v_sector_explicit_1}\\
  m_{43}(v) &= \frac{g(v)}{c},
  \label{eq:appendix_v_sector_explicit_2}\\
  m_{34}(v) &= \frac{\tau}{\tau-1}\bigl(g(v)-1\bigr),
  \label{eq:appendix_v_sector_explicit_3}\\
  m_{44}(v) &= \frac{\tau}{\tau-1}\frac{g(v)}{c}.
  \label{eq:appendix_v_sector_explicit_4}
\end{align}

\paragraph{Simplification of \(D_1\) and \(D_2\).}
Before solving the mixed sector, it is useful to simplify the denominators \(D_1\) and \(D_2\).
Using \eqref{eq:appendix_u_sector_explicit_3} and \eqref{eq:appendix_u_sector_explicit_4}, we obtain
\begin{align}
  D_1
  &=
  \tau-u\,m_{21}+c\,m_{22}
  \notag\\
  &=
  \tau-\frac{\tau}{\tau-1}\bigl(g(u)-1\bigr)
  +\frac{\tau}{\tau-1}g(u)
  =
  \frac{\tau(\tau+\alpha+g(u)-1)}{\tau-1}
\end{align}
in an intermediate parametrization.
However, for the subsequent algebra it is more convenient to keep the compact symbolic notation
\begin{align}
  D_1=\tau-u\,m_{21}+c\,m_{22},
  \qquad
  D_2=\tau-m_{34}+c\,m_{44},
\end{align}
and substitute the explicit formulas only at the end.
Doing so leads to cleaner rational expressions in \(g(u)\) and \(g(v)\).

\paragraph{Solution of the mixed sector.}
We now turn to
\(
m_{13},m_{14},m_{23},m_{24}
\),
which satisfy the linear system
\eqref{eq:appendix_closed_mixed_summary}.
Substituting the explicit \(u\)- and \(v\)-sector solutions
\eqref{eq:appendix_u_sector_explicit_1}--\eqref{eq:appendix_u_sector_explicit_4}
and
\eqref{eq:appendix_v_sector_explicit_1}--\eqref{eq:appendix_v_sector_explicit_4}
into
\eqref{eq:appendix_closed_mixed_summary},
and solving the resulting linear equations, we obtain
\begin{align}
  m_{13}(u,v)
  &=
  \frac{\alpha\,g(u)\,g(v)}{\Delta(u,v)},
  \label{eq:appendix_m13_explicit}\\
  m_{14}(u,v)
  &=
  \frac{\tau\,g(u)\bigl(g(v)-1\bigr)\bigl(\alpha+g(u)-1\bigr)}
       {(\tau-1)\,\Delta(u,v)},
  \label{eq:appendix_m14_explicit}\\
  m_{23}(u,v)
  &=
  \frac{\tau\,g(u)\,g(v)\bigl(\alpha+g(u)-1\bigr)\bigl(\alpha+g(v)-1\bigr)}
       {c(\tau-1)\,\Delta(u,v)},
  \label{eq:appendix_m23_explicit}\\
  m_{24}(u,v)
  &=
  \frac{\tau^{3}\,g(u)\bigl(g(v)-1\bigr)\bigl(\alpha+g(u)-1\bigr)
        \bigl(\alpha+g(u)+g(v)-1\bigr)}
       {c(\tau-1)^{3}\,\Delta(u,v)},
  \label{eq:appendix_m24_explicit}
\end{align}
where
\begin{align}
  \Delta(u,v)
  &:=
  \alpha
  -
  \bigl(g(u)-1\bigr)\bigl(g(v)-1\bigr)
  \left[
    1+\frac{c-1}{\tau-1}\bigl(\alpha+g(u)+g(v)-1\bigr)
  \right].
  \label{eq:appendix_Delta_definition}
\end{align}

At this stage, the resolvent block relevant for the generalization error has been obtained explicitly, since
\begin{equation}
  F(u,v)=m_{13}(u,v)
  \label{eq:appendix_F_limit_m13}
\end{equation}
in the high-dimensional limit.
Substituting \eqref{eq:appendix_m13_explicit} into
\eqref{eq:appendix_generalization_from_F}
already yields an explicit representation of the asymptotic generalization error.

In conclusion, combining
\eqref{eq:appendix_generalization_from_F}
with
\eqref{eq:appendix_m13_explicit},
we conclude that the asymptotic generalization error is given by
\begin{align}
  \mathcal E_t
  =
  \sum_{p=0}^t\sum_{q=0}^t
  \binom{t}{p}\binom{t}{q}
  \left(-\frac1\alpha\right)^{p+q}
  \frac{1}{p!\,q!}
  \partial_u^p\partial_v^q m_{13}(u,v)\Big|_{u=v=0},
  \label{eq:appendix_Et_m13_derivatives}
\end{align}
which is exactly the statement of Result~\ref{result:main}.

This completes the derivation, up to the routine verification that the branch of \(g(u)\) chosen in \eqref{eq:appendix_g_definition} is the one analytic at the origin and therefore compatible with the Taylor expansion used in \eqref{eq:appendix_generalization_from_F}.

\section{Numerical evaluation of the theoretical prediction}
\label{appendix:numerical_evaluation}

We numerically evaluate the theoretical formula in Result~\ref{result:main} through coefficient extraction from a bivariate generating function.  
A direct implementation of \eqref{eq:appendix_Et_m13_derivatives} is numerically unstable for moderate or large \(t\), because it involves substantial cancellation among many terms.  
Instead, we compute the same quantity by introducing
\begin{align}
  B(x,y)
  :=
  \frac{1}{(1-x)(1-y)}
  \,m_{13}\!\left(
    -\frac{x}{\alpha(1-x)},
    -\frac{y}{\alpha(1-y)}
  \right),
  \label{eq:appendix_num_B_def}
\end{align}
for which
\begin{align}
  \mathcal E_t = [x^t y^t]\,B(x,y).
  \label{eq:appendix_num_Et_diag_coeff}
\end{align}
Therefore, the problem reduces to computing the diagonal coefficients of \(B(x,y)\).

To do so, we first expand
\begin{align}
  h(x)
  :=
  g\!\left(-\frac{x}{\alpha(1-x)}\right)
  =
  \sum_{n\ge0} h_n x^n,
\end{align}
where \(g(u)\) is the resolvent function appearing in Result~\ref{result:main} (see \eqref{eq:appendix_g_definition}).  
Substituting \(u=-x/(\alpha(1-x))\) into the quadratic relation \eqref{eq:appendix_g_quadratic} for \(g(u)\), i.e.\ into
\begin{align}
  u g(u)^2 - \bigl(c+(1-\alpha)u\bigr)g(u) + c = 0
\end{align}
gives
\begin{align}
  x h(x)^2 + \ab(\alpha c-\ab(2+\sigma^2) x)h(x) - \alpha c (1-x)=0,
\end{align}
with \(c\) as in \eqref{eq:appendix_c_definition}.
Writing \(h(x)=\sum_{n\ge0} h_n x^n\) and matching coefficients yields the recurrence
\begin{align}
  h_0 &= 1,\\
  h_n &=
  \frac{
    \ab(2+\sigma^2) h_{n-1}
    -
    \sum_{k=0}^{n-1} h_k h_{n-1-k}
    -
    \alpha c\,\mathbf{1}_{\{n=1\}}
  }{\alpha c},
  \qquad n\ge1.
\end{align}
This allows us to compute the coefficients of \(h(x)\) stably up to any finite order.

For \(\lambda=0\), the closed form for \(m_{13}(u,v)\) is \eqref{eq:appendix_m13_explicit}, i.e.\ \(\alpha\,g(u)g(v)/\Delta(u,v)\) with \(\Delta(u,v)\) defined in \eqref{eq:appendix_Delta_definition}.
Equivalently, Result~\ref{result:main} records the same expression as
\begin{align}
  m_{13}(u,v)
  =
  \frac{\alpha\,g(u)\,g(v)}
  {\alpha-(g(u)-1)(g(v)-1)
    \left[
      1+\frac{1+\sigma^2}{\alpha(\tau-1)}
      \bigl(\alpha+g(u)+g(v)-1\bigr)
    \right]}.
\end{align}
After the change of variables above, let
\begin{align}
  r(x):=h(x)-1,
  \qquad
  p(x):=\frac{h(x)}{1-x},
  \qquad
  \beta:=\frac{c-1}{\tau-1}.
\end{align}
This \(\beta\) agrees with the factor \((c-1)/(\tau-1)\) appearing inside \(\Delta(u,v)\) in \eqref{eq:appendix_Delta_definition}.
Then \(B(x,y)\) can be rewritten as
\begin{align}
  D(x,y)\,B(x,y)=\alpha\,p(x)p(y),
\end{align}
where
\begin{align}
  D(x,y)
  =
  \alpha
  -\bigl(1+\beta(\alpha+1)\bigr)r(x)r(y)
  -\beta r(x)^2r(y)
  -\beta r(x)r(y)^2.
\end{align}

We then expand
\begin{align}
  B(x,y)=\sum_{m,n\ge0} b_{m,n}x^m y^n,
  \qquad
  D(x,y)=\sum_{i,j\ge0} d_{i,j}x^i y^j.
\end{align}
Since the constant term satisfies \(d_{0,0}=\alpha\neq0\), the coefficients \(b_{m,n}\) are determined recursively from
\begin{align}
  \sum_{i=0}^m\sum_{j=0}^n d_{i,j}\,b_{m-i,n-j}
  =
  \alpha\,p_m p_n,
\end{align}
where \(p_n=[x^n]p(x)\). Equivalently,
\begin{align}
  b_{m,n}
  =
  \frac{1}{\alpha}
  \left(
    \alpha\,p_m p_n
    -
    \sum_{\substack{0\le i\le m,\ 0\le j\le n\\(i,j)\neq(0,0)}}
    d_{i,j}\,b_{m-i,n-j}
  \right).
\end{align}
We evaluate this recursion in increasing total degree \(m+n\), so that each coefficient depends only on previously computed lower-degree coefficients.

Finally, the theoretical prediction is obtained from the diagonal coefficients in \eqref{eq:appendix_num_Et_diag_coeff},
\begin{align}
  \mathcal E_t = b_{t,t}.
\end{align}
In practice, to compute \(\mathcal E_t\) for \(t=0,\dots,t_{\max}\), we generate the coefficients up to degree \(t_{\max}\) in each variable and then read off the diagonal entries.  
Since the coefficients can vary substantially in magnitude for large \(t\), we use high-precision arithmetic.  
This coefficient-based procedure is substantially more stable than directly evaluating the derivative formula \eqref{eq:appendix_Et_m13_derivatives} (Result~\ref{result:main}).

\section{Theoretical and Experimental Validation}
\label{appendix:theory_vs_experiment}

In this appendix, we empirically validate the theoretical prediction derived from random matrix theory for the weight prediction model with linear attention. 
Figure~\ref{fig:theory_vs_experiment} compares the asymptotic theoretical curve under the ridgeless limit with numerical experiments performed at finite feature dimension $D$ across several choices of $(\alpha,\tau)$. 
These parameter settings are chosen to cover qualitatively different regimes, including the saturation, overthinking, polynomial decay, and exponential decay regimes.

\begin{figure}[h]
  \centering
  \includegraphics[width=1.0\textwidth]{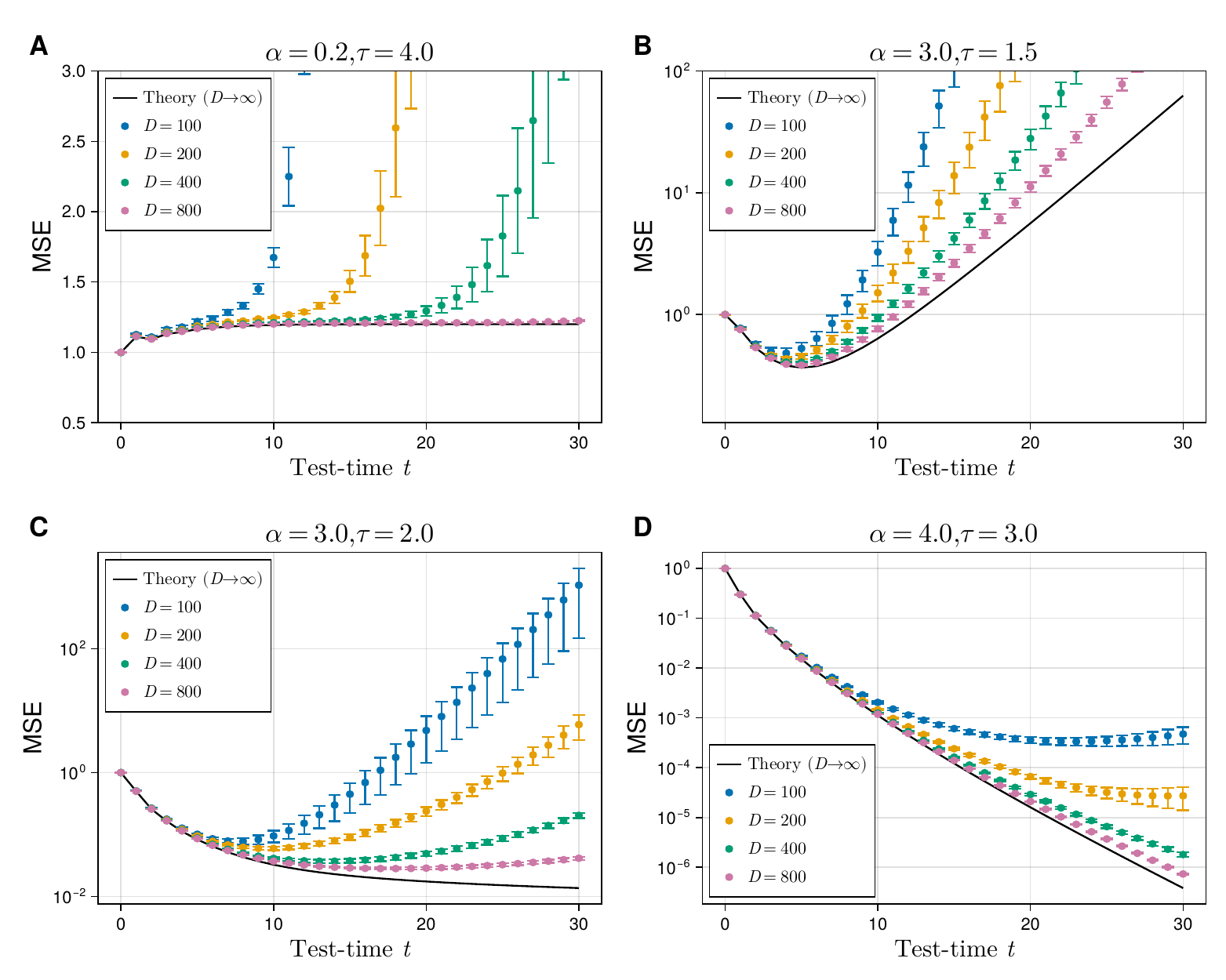}
  \caption{
    \textbf{Comparison between the theoretical prediction and numerical experiments.}
    The solid lines show the theoretical prediction in the $D \to \infty$ limit, while scatter points show numerical results at finite $D$.
    Parameters: (A-D) $\lambda=10^{-5}$, $\sigma = 0.01$. 
    Error bars represent the standard error of the mean over $5$ trials per point.
  }
  \label{fig:theory_vs_experiment}
\end{figure}

Overall, the numerical results show clear agreement with the theoretical prediction. 
In all parameter settings we tested, the finite-$D$ experimental curves approach the $D \to \infty$ theoretical prediction as $D$ increases, indicating that the asymptotic theory accurately captures the large-dimensional behavior of the model even when compared against finite-size simulations.

The agreement is particularly close at small test-time depth $t$, where finite-size effects remain limited. 
As $t$ becomes larger, discrepancies between theory and experiment gradually accumulate, and the empirical MSE tends to become larger than the theoretical prediction. 
This deviation is consistent with finite-size corrections that are not captured by the leading-order asymptotic analysis. 
Importantly, however, these discrepancies diminish as $D$ increases, and the experiments continue to converge toward the theoretical curve, supporting the validity of the theoretical characterization.


\section{Derivation of Result~\ref{result:scaling_analysis}}
\label{appendix:scaling_analysis}

In this appendix, we derive the switching criterion stated in Result~\ref{result:scaling_analysis}.
The starting point is the diagonal coefficient representation of the quantity
\(\mathcal E_t^{\mathrm{new}}\), which allows us to study its large-\(t\) behavior through the singularities of a two-variable generating function.
Our goal is to identify the singular point that controls the diagonal asymptotics and to determine when its modulus crosses the unit circle.
This yields an explicit critical value \(\tau_c(\alpha,\sigma^2)\) at which the exponential rate changes from growth to decay.

We begin by restating the result.
\resultscalinganalysis*
We now derive this result in detail.

First, from the coefficient formula,
\begin{align}
  \mathcal E_t^{\mathrm{new}}
  =
  [x^t y^t]\,B(x,y),
\end{align}
where
\begin{align}
  B(x,y)
  :=
  \frac{1}{(1-x)(1-y)}
  \,m_{13}\!\left(
    -\frac{x}{\alpha(1-x)},
    -\frac{y}{\alpha(1-y)}
  \right).
\end{align}
The function \(m_{13}(u,v)\) is meromorphic away from the branch locus of the scalar Wishart resolvent \(g\), and its singularities are determined by the vanishing of its denominator
\begin{align}
  D(u,v)
  :=
  \alpha-(g(u)-1)(g(v)-1)
  \Bigl[1+\beta\bigl(\alpha+g(u)+g(v)-1\bigr)\Bigr],
\end{align}
with
\begin{align}
  \beta=\frac{1+\sigma^2}{\alpha(\tau-1)}.
\end{align}
Accordingly, the singular variety relevant for the diagonal coefficient is
\begin{align}
  \mathcal V
  :=
  \{(x,y): D(u(x),v(y))=0\},
\end{align}
where
\begin{align}
  u(x):=-\frac{x}{\alpha(1-x)},
  \qquad
  v(y):=-\frac{y}{\alpha(1-y)}.
\end{align}

Because both the singular variety and the diagonal direction \((t,t)\) are invariant under the exchange \(x\leftrightarrow y\), the natural candidate for the dominant critical point is a symmetric point of the form
\begin{align}
  (x,y)=(\xi,\xi)\in\mathcal V.
\end{align}
Throughout this appendix, we assume that this symmetric point is indeed the unique dominant critical point and that no other singularity of smaller modulus contributes to the diagonal asymptotics.
Under this assumption, the problem reduces to a one-parameter analysis along the symmetric branch, which we now describe.

Introduce the variables
\begin{align}
  a:=1-g(u),
  \qquad
  b:=1-g(v).
\end{align}
On the symmetric branch we set \(a=b\).
The scalar resolvent \(g(u)\) satisfies the quadratic equation
\begin{align}
  u\,g(u)^2-\bigl(c+(1-\alpha)u\bigr)g(u)+c=0,
\end{align}
where
\begin{align}
  c=1+\frac{1+\sigma^2}{\alpha}.
\end{align}
Substituting \(g(u)=1-a\) into this equation and solving for \(u\), we obtain
\begin{align}
  u=-\frac{c\,a}{(1-a)(\alpha-a)}.
\end{align}
Combining this with
\begin{align}
  u(x)=-\frac{x}{\alpha(1-x)},
\end{align}
we obtain a parametrization of the symmetric branch in the \(x\)-plane:
\begin{align}
  x=x(a)
  =
  \frac{\alpha c\,a}{(1-a)(\alpha-a)+\alpha c\,a}
  =
  \frac{(\alpha+1+\sigma^2)a}{\alpha+\sigma^2 a+a^2}.
  \label{eq:appendix_scaling_x_of_a}
\end{align}

Next, we rewrite the singularity condition on the diagonal.
Since
\begin{align}
  g(u)-1=-a,
\end{align}
the condition \(D(u,u)=0\) becomes
\begin{align}
  \alpha-a^2\Bigl[1+\beta(\alpha+1-2a)\Bigr]=0,
\end{align}
or equivalently,
\begin{align}
  a^2\Bigl[1+\beta(\alpha+1-2a)\Bigr]=\alpha.
  \label{eq:appendix_scaling_diag_eq}
\end{align}
Therefore, the symmetric singular points are parametrized by the solutions of \eqref{eq:appendix_scaling_diag_eq}, and their location in the \(x\)-plane is given by \eqref{eq:appendix_scaling_x_of_a}.

Under the dominant symmetric critical-point assumption, the diagonal coefficient is governed by a smooth critical point \((\xi,\xi)\) on the singular variety.
Standard smooth-point asymptotics in two variables then gives
\begin{align}
  [x^t y^t]\,B(x,y)
  \sim
  K\,t^{-1/2}\,(\xi^2)^{-t}
\end{align}
for some nonzero constant \(K\).
Equivalently,
\begin{align}
  \mathcal E_t^{\mathrm{new}}
  \sim
  K\,t^{-1/2}\,\Lambda(\alpha,\tau,\sigma^2)^t,
  \qquad
  \Lambda(\alpha,\tau,\sigma^2):=|\xi|^{-2}.
  \label{eq:appendix_scaling_asymptotic}
\end{align}
Thus the long-time behavior is determined by the position of \(\xi\) relative to the unit circle:
\begin{align}
  |\xi|<1
  &\Longrightarrow
  \Lambda>1
  \qquad\text{(exponential growth)},\\
  |\xi|=1
  &\Longrightarrow
  \Lambda=1
  \qquad\text{(critical polynomial behavior)},\\
  |\xi|>1
  &\Longrightarrow
  \Lambda<1
  \qquad\text{(exponential damping)}.
\end{align}

We now determine the parameter value at which this crossing occurs.
Along the relevant symmetric branch, the transition takes place when the dominant point reaches the unit circle.
The branch connected to the large-\(\tau\) regime crosses at $\xi=-1$.
Substituting \(x=-1\) into \eqref{eq:appendix_scaling_x_of_a}, we obtain
\begin{align}
  a^2+(\alpha+1+2\sigma^2)a+\alpha=0.
  \label{eq:appendix_scaling_ac_quadratic}
\end{align}
Define
\begin{align}
  \Delta
  :=
  (\alpha+1+2\sigma^2)^2-4\alpha.
\end{align}
Then the two roots of \eqref{eq:appendix_scaling_ac_quadratic} are
\begin{align}
  a
  =
  -\frac{\alpha+1+2\sigma^2\pm\sqrt{\Delta}}{2}.
\end{align}
On the branch relevant to the dominant symmetric point, we must take
\begin{align}
  a_c
  =
  -\frac{\alpha+1+2\sigma^2-\sqrt{\Delta}}{2}.
\end{align}
It is convenient to write
\begin{align}
  a_c=-s,
  \qquad
  s:=\frac{\alpha+1+2\sigma^2-\sqrt{\Delta}}{2}>0.
\end{align}

We now substitute \(a=-s\) into the diagonal singularity equation \eqref{eq:appendix_scaling_diag_eq}.
This gives
\begin{align}
  s^2\Bigl[1+\beta_c(\alpha+1+2s)\Bigr]=\alpha,
\end{align}
where \(\beta_c\) denotes the value of \(\beta\) at the transition point.
Solving for \(\beta_c\), we obtain
\begin{align}
  \beta_c
  =
  \frac{\alpha-s^2}{s^2(\alpha+1+2s)}.
  \label{eq:appendix_scaling_beta_c_raw}
\end{align}

To simplify this expression, note that \(s\) satisfies the quadratic equation obtained from \eqref{eq:appendix_scaling_ac_quadratic},
\begin{align}
  s^2-(\alpha+1+2\sigma^2)s+\alpha=0.
\end{align}
Hence
\begin{align}
  \alpha-s^2
  =
  s(\alpha+1+2\sigma^2-2s)
  =
  s\sqrt{\Delta},
\end{align}
and also
\begin{align}
  \alpha+1+2s
  =
  2\alpha+2+2\sigma^2-\sqrt{\Delta}.
\end{align}
Substituting these identities into \eqref{eq:appendix_scaling_beta_c_raw}, we find
\begin{align}
  \beta_c
  =
  \frac{2\sqrt{\Delta}}
  {(\alpha+1+2\sigma^2-\sqrt{\Delta})
   (2\alpha+2+2\sigma^2-\sqrt{\Delta})}.
\end{align}

Finally, since
\begin{align}
  \beta=\frac{1+\sigma^2}{\alpha(\tau-1)},
\end{align}
the critical value of \(\tau\) is determined by
\begin{align}
  \tau_c
  =
  1+\frac{1+\sigma^2}{\alpha\beta_c}.
\end{align}
Substituting the above expression for \(\beta_c\), we obtain
\begin{align}
  \tau_c(\alpha,\sigma^2)
  =
  1+
  \frac{(1+\sigma^2)(\alpha+1+2\sigma^2-\sqrt{\Delta})
  (2\alpha+2+2\sigma^2-\sqrt{\Delta})}
  {2\alpha\sqrt{\Delta}}.
  \label{eq:appendix_scaling_tau_c}
\end{align}
This is exactly the expression stated in Result~\ref{result:scaling_analysis}.

\section{Derivation of Theorem~\ref{theorem:exponential_decay}}
\label{appendix:exponential_decay}

In this appendix, we derive the asymptotic scaling law stated in Theorem~\ref{theorem:exponential_decay}.
We begin by restating the result.
\theoremexponentialdecay*

\begin{proof}
    We work in the population-risk regime \(\tau\to\infty\) under ridgeless learning \(\lambda=0\), and assume \(\alpha>1\).
    In this regime, the test-time generalization error can be written as
    \begin{align}
      \mathcal E_t
      =
      \int
      \left(1-\frac{s}{\alpha c}\right)^{2t}\mu_\alpha(ds),
      \label{eq:appendix_exponential_decay_integral}
    \end{align}
    where $c=1+(1+\sigma^2)/\alpha$ and \(\mu_\alpha\) is the Marchenko--Pastur law with aspect ratio \(\alpha\). Since \(\alpha>1\), there is no atom at the origin, and thus
    \begin{align}
      \mu_\alpha(ds)
      =
      \frac{\sqrt{(s_+-s)(s-s_-)}}{2\pi \alpha s}\,
      \mathbf 1_{[s_-,s_+]}(s)\,ds,
      \qquad
      s_\pm=(1\pm\sqrt{\alpha})^2.
      \label{eq:appendix_exponential_decay_mp}
    \end{align}
    
    We first identify the exponential rate from \eqref{eq:appendix_exponential_decay_integral} and \eqref{eq:appendix_exponential_decay_mp}.
    Since
    \begin{align}
      \alpha c > s_+ = (1+\sqrt{\alpha})^2,
    \end{align}
    we have
    \begin{align}
      0<1-\frac{s}{\alpha c}<1
      \qquad
      (s\in[s_-,s_+]).
    \end{align}
    Moreover, the map \(s\mapsto 1-s/(\alpha c)\) is strictly decreasing, so its maximum on the support \([s_-,s_+]\) is attained at the lower edge \(s=s_-\). Define
    \begin{align}
      r_\alpha
      :=
      1-\frac{s_-}{\alpha c}.
    \end{align}
    Using \(s_-=(\sqrt{\alpha}-1)^2\) and \(\alpha c=\alpha+1+\sigma^2\), we obtain
    \begin{align}
      r_\alpha =
      \frac{2\sqrt{\alpha}+\sigma^2}{\alpha+1+\sigma^2}.
      \label{eq:appendix_exponential_decay_rate}
    \end{align}
    Thus \(0<r_\alpha<1\), and this already gives the candidate exponential decay rate \(r_\alpha^{2t}\).
    
    Next, we determine the polynomial prefactor by analyzing the contribution near the lower edge.
    Fix a small \(\varepsilon>0\), and split \eqref{eq:appendix_exponential_decay_integral} as
    \begin{align}
      \mathcal E_t
      =
      \int_{s_-}^{s_-+\varepsilon}
      \left(1-\frac{s}{\alpha c}\right)^{2t}\mu_\alpha(ds)
      +
      \int_{s_-+\varepsilon}^{s_+}
      \left(1-\frac{s}{\alpha c}\right)^{2t}\mu_\alpha(ds).
      \label{eq:appendix_exponential_decay_split}
    \end{align}
    
    We begin with the edge contribution.
    Write
    \begin{align}
      s=s_-+x,
      \qquad
      x\in[0,\varepsilon].
    \end{align}
    Since
    \begin{align}
      s-s_-=x,
      \qquad
      s_+-s=(s_+-s_-)-x,
    \end{align}
    and \(s_-= (\sqrt{\alpha}-1)^2>0\), the density satisfies
    \begin{align}
      \frac{\sqrt{(s_+-s)(s-s_-)}}{2\pi \alpha s}
      \asymp x^{1/2}
      \qquad
      (x\downarrow0).
      \label{eq:appendix_exponential_decay_density}
    \end{align}
    On the other hand,
    \begin{align}
      1-\frac{s}{\alpha c}
      =
      1-\frac{s_-+x}{\alpha c}
      =
      r_\alpha-\frac{x}{\alpha c}
      =
      r_\alpha
      \left(1-\frac{x}{\alpha c\,r_\alpha}\right).
      \label{eq:appendix_exponential_decay_factor}
    \end{align}
    Hence, by \eqref{eq:appendix_exponential_decay_factor},
    \begin{align}
      \left(1-\frac{s}{\alpha c}\right)^{2t}
      =
      r_\alpha^{2t}
      \left(1-\frac{x}{\alpha c\,r_\alpha}\right)^{2t}.
    \end{align}
    For \(x\in[0,\varepsilon]\) with \(\varepsilon\) sufficiently small, the factor in parentheses is bounded above and below by exponentials, namely
    \begin{align}
      e^{-C_2tx}
      \le
      \left(1-\frac{x}{\alpha c\,r_\alpha}\right)^{2t}
      \le
      e^{-C_1tx}
    \end{align}
    for some constants \(C_1,C_2>0\) independent of \(t\).
    Therefore,
    \begin{align}
      \int_{s_-}^{s_-+\varepsilon}
      \left(1-\frac{s}{\alpha c}\right)^{2t}\mu_\alpha(ds)
      \asymp
      r_\alpha^{2t}
      \int_0^\varepsilon e^{-Ctx}x^{1/2}\,dx,
      \label{eq:appendix_exponential_decay_edge}
    \end{align}
    where we used \eqref{eq:appendix_exponential_decay_mp}, \eqref{eq:appendix_exponential_decay_density}, and \eqref{eq:appendix_exponential_decay_factor}, and the precise value of \(C>0\) is irrelevant for \(\asymp\)-estimates.
    By the change of variables \(u=tx\), we have
    \begin{align}
      \int_0^\varepsilon e^{-Ctx}x^{1/2}\,dx
      =
      t^{-3/2}\int_0^{\varepsilon t} e^{-Cu}u^{1/2}\,du
      \asymp
      t^{-3/2}.
      \label{eq:appendix_exponential_decay_edge_scale}
    \end{align}
    Substituting \eqref{eq:appendix_exponential_decay_edge_scale} into \eqref{eq:appendix_exponential_decay_edge}, it follows that
    \begin{align}
      \int_{s_-}^{s_-+\varepsilon}
      \left(1-\frac{s}{\alpha c}\right)^{2t}\mu_\alpha(ds)
      \asymp
      t^{-3/2}r_\alpha^{2t}.
      \label{eq:appendix_exponential_decay_edge_final}
    \end{align}
    
    We now show that the contribution away from the edge is exponentially smaller.
    Because \(s\mapsto 1-s/(\alpha c)\) is continuous and strictly decreasing, there exists \(\eta>0\) such that
    \begin{align}
      1-\frac{s}{\alpha c}
      \le
      r_\alpha-\eta
      \qquad
      (s\in[s_-+\varepsilon,s_+]).
    \end{align}
    Since \(\mu_\alpha\) is a finite measure, this yields
    \begin{align}
      \int_{s_-+\varepsilon}^{s_+}
      \left(1-\frac{s}{\alpha c}\right)^{2t}\mu_\alpha(ds)
      \le
      C(r_\alpha-\eta)^{2t}
      \label{eq:appendix_exponential_decay_bulk}
    \end{align}
    for some constant \(C>0\).
    This term is negligible compared with \eqref{eq:appendix_exponential_decay_edge_final}.
    
    Combining \eqref{eq:appendix_exponential_decay_split}, \eqref{eq:appendix_exponential_decay_edge_final}, and \eqref{eq:appendix_exponential_decay_bulk}, we conclude that
    \begin{align}
      \mathcal E_t
      \asymp
      t^{-3/2}r_\alpha^{2t}
      =
      t^{-3/2}
      \left(
        \frac{2\sqrt{\alpha}+\sigma^2}{\alpha+1+\sigma^2}
      \right)^{2t}.
    \end{align}
    This proves the first claim.
    
    Finally, in the context-rich regime \(\alpha\gg \sigma^2\),
    \begin{align}
      \frac{2\sqrt{\alpha}+\sigma^2}{\alpha+1+\sigma^2}
      =
      \frac{2}{\sqrt{\alpha}}\bigl(1+o(1)\bigr),
    \end{align}
    and therefore, using \eqref{eq:appendix_exponential_decay_rate}, we have
    \begin{align}
      \mathcal E_t
      \asymp
      t^{-3/2}\left(\frac{4}{\alpha}\right)^t.
    \end{align}
\end{proof}


\section{Derivation of Result~\ref{result:optimal_depth}}
\label{appendix:optimal_depth}

In this appendix, we derive the optimal test-time depth and minimum error in the overthinking regime stated in Result~\ref{result:optimal_depth}.

We begin by restating the result.
\resultoptimaldepth*

We derive Result~\ref{result:optimal_depth} from Result~\ref{result:scaling_analysis}.
Let
\begin{align}
  \delta:=\tau_c(\alpha,\sigma^2)-\tau>0.
  \label{eq:appendix_optimal_depth_delta}
\end{align}
In the overthinking regime, we have \(\tau<\tau_c(\alpha,\sigma^2)\), and hence
Result~\ref{result:scaling_analysis} gives
\begin{align}
  \mathcal E_t
  \sim
  K(\alpha,\tau,\sigma^2)\,t^{-1/2}\Lambda(\alpha,\tau,\sigma^2)^t
  \qquad (t\to\infty),
  \label{eq:appendix_optimal_depth_asymptotic}
\end{align}
since \(\mathcal E_\infty=0\) in this regime.

Assume, as in the derivation of Result~\ref{result:scaling_analysis}, that the dominant symmetric critical point remains unique near the transition and crosses the unit circle transversely at \(\tau=\tau_c\).
Then \(\Lambda(\alpha,\tau,\sigma^2)\) is smooth near \(\tau_c\), and we have
\begin{align}
  \log \Lambda(\alpha,\tau,\sigma^2)
  =
  \kappa(\alpha,\sigma^2)\,(\tau_c(\alpha,\sigma^2)-\tau)
  +o(\tau_c-\tau),
  \label{eq:appendix_optimal_depth_log_lambda}
\end{align}
for some \(\kappa(\alpha,\sigma^2)>0\).
Moreover,
\begin{align}
  K(\alpha,\tau,\sigma^2)=K_c(\alpha,\sigma^2)+o(1),
  \label{eq:appendix_optimal_depth_prefactor}
\end{align}
with \(K_c(\alpha,\sigma^2)>0\).
Substituting \eqref{eq:appendix_optimal_depth_delta}, \eqref{eq:appendix_optimal_depth_log_lambda}, and \eqref{eq:appendix_optimal_depth_prefactor} into \eqref{eq:appendix_optimal_depth_asymptotic}, we obtain
\begin{align}
  \mathcal E_t
  \sim
  K_c(\alpha,\sigma^2)\,
  t^{-1/2}\exp\ab(\kappa(\alpha,\sigma^2)\delta\, t).
  \label{eq:appendix_optimal_depth_reduced}
\end{align}

We now minimize the continuous proxy suggested by \eqref{eq:appendix_optimal_depth_reduced},
\begin{align}
  \Phi_\delta(t):=t^{-1/2}e^{\kappa\delta t}.
  \label{eq:appendix_optimal_depth_proxy}
\end{align}
One can check that the unique minimizer of \eqref{eq:appendix_optimal_depth_proxy} is
\begin{align}
  t_{\mathrm{cont}}
  =
  \frac{1}{2\kappa(\alpha,\sigma^2)\delta}.
  \label{eq:appendix_optimal_depth_tcont}
\end{align}
As \(\delta\downarrow0\), this tends to \(+\infty\), so the large-\(t\) asymptotics is self-consistent.
Passing back to the discrete minimizer \(t^*\in\mathbb Z_{\ge0}\), we obtain
\begin{align}
  t^*
  =
  \frac{C_t(\alpha,\sigma^2)}{\tau_c(\alpha,\sigma^2)-\tau}
  \bigl(1+o(1)\bigr),
  \label{eq:appendix_optimal_depth_tstar}
\end{align}
where
\begin{align}
  C_t(\alpha,\sigma^2)
  =
  \frac{1}{2\kappa(\alpha,\sigma^2)} > 0.
\end{align}

Finally, substituting \eqref{eq:appendix_optimal_depth_tstar} into \eqref{eq:appendix_optimal_depth_reduced} gives
\begin{align}
  \mathcal E_{t^*}
  &\sim
  K_c(\alpha,\sigma^2)\,(t^*)^{-1/2}
  \exp\ab(\kappa(\alpha,\sigma^2)\delta\, t^*) \\
  &=
  K_c(\alpha,\sigma^2)e^{1/2}(t^*)^{-1/2}\ab(1+o(1)).
  \label{eq:appendix_optimal_depth_estar}
\end{align}
Hence
\begin{align}
  \mathcal E_{t^*}
  =
  C_E(\alpha,\sigma^2)\,(t^*)^{-1/2}\ab(1+o(1)),
  \label{eq:appendix_optimal_depth_estar_claim}
\end{align}
where
\begin{align}
  C_E(\alpha,\sigma^2):=K_c(\alpha,\sigma^2)e^{1/2}>0.
\end{align}
Equation \eqref{eq:appendix_optimal_depth_estar_claim} is exactly the claimed form \eqref{eq:Estar_via_tstar}, which proves Result~\ref{result:optimal_depth}.

\section{Experimental details}
\label{appendix:experimental_details}

In this appendix, we summarize the experimental details for the fully learned
linear-attention and softmax-attention experiments in
Section~\ref{section:experiments}.
All experiments used synthetic linear-regression tasks with ambient dimension
\(D=50\).
Each task was defined by a ground-truth parameter
\(\vb{w}_\mu \sim \mathcal{N}(\vb{0}, I_D)\), and each in-context example was
generated as
\[
  \vb{x}_{\mu,l} \sim \mathcal{N}(\vb{0}, I_D),
  \qquad
  y_{\mu,l} = \vb{w}_\mu^\top \vb{x}_{\mu,l}.
\]
Thus, the experiments were noiseless.
We used the same prompt structure as in Section~\ref{section:model}, but
initialized the first parameter-estimate token with a random Gaussian vector.

\subsection{Model architecture}

Both models used a single residual self-attention layer acting on tokens in
\(\mathbb{R}^{2D+2}\).
For \(D=50\), the token dimension was therefore \(2D+2=102\).
To compute the next estimate from reasoning depth \(t\), the prompt contained
\(L\) example tokens followed by \(t+1\) parameter-estimate tokens,
corresponding to
\[
  [\hat{\vb{w}}_0,\ldots,\hat{\vb{w}}_t].
\]
The total number of tokens was therefore
\[
  T = L+t+1.
\]
The prediction was always read from the parameter block of the final token,
which we denote by \(\vb{z}_T\).

In both models, only the final token was updated according to
\begin{align}
  \vb{z}_T
  \mapsto
  \vb{z}_T + W_V \sum_{s=1}^{T} a_s \vb{z}_s,
\end{align}
where \(W_K, W_Q, W_V \in \mathbb{R}^{(2D+2)\times(2D+2)}\) denote the key,
query, and value matrices.

In the fully learned linear-attention model, we used
\begin{align}
  a_s
  =
  m_s^{(t)}
  \frac{1}{L}\,
  \bigl(W_K \vb{z}_s\bigr)^\top
  \bigl(W_Q \vb{z}_T\bigr),
\end{align}
where \(m_s^{(t)}\) is a mask over source positions that determines which
source tokens are visible to the final token.

In the softmax-attention model, we used the same bilinear score, but normalized
it with a softmax over the visible source positions:
\begin{align}
  a_s
  =
  \frac{
    \exp\ab(
      \frac{\ab(W_K \vb{z}_s)^\top \ab(W_Q \vb{z}_T)}{\sqrt{2D+2}}
      + b_s^{(t)}
    )
  }{
    \sum_{r=1}^{T}
    \exp\ab(
      \frac{\ab(W_K \vb{z}_r)^\top \ab(W_Q \vb{z}_T)}{\sqrt{2D+2}}
      + b_r^{(t)}
    )
  },
\end{align}
where \(b_s^{(t)}\) is an additive attention mask.

For both the fully learned linear-attention and softmax-attention experiments,
we used a full-history visibility pattern for the final parameter-estimate
token.
That is, the final token was allowed to attend to all \(L\) example tokens and
all previously generated estimate tokens, but not to itself.
For the fully learned linear-attention model, this mask was
\begin{align}
  m_s^{(t)}
  =
  \begin{cases}
    1, & 1 \le s \le L+t,\\
    0, & s=T.
  \end{cases}
\end{align}
For the softmax-attention model, the corresponding additive mask was
\begin{align}
  b_s^{(t)}
  =
  \begin{cases}
    0, & 1 \le s \le L+t,\\
    -\infty, & s=T.
  \end{cases}
\end{align}
Thus, in both models, the visible source tokens consisted of the \(L\) example
tokens and the previous estimate tokens
\([\hat{\vb{w}}_0,\ldots,\hat{\vb{w}}_{t-1}]\), while the final query token
\(\hat{\vb{w}}_t\) was masked from attending to itself.

\subsection{Learning protocol}

For each pair \((M,L)\), we first sampled a fixed training set of \(M\) tasks,
each with \(L\) in-context examples.
We then trained the model parameters on this fixed set by minimizing the MSE between the predicted parameter vector and the ground-truth task
parameter, with \(\ell_2\) regularization on all trainable matrices.

Training used a one-step objective.
For each sampled training task, we initialized the first parameter estimate
\(\hat{\vb{w}}_0\) as a random Gaussian vector, constructed a prompt containing
the \(L\) example tokens and this single estimate token, and trained the model
to predict the ground-truth parameter \(\vb{w}\) from the updated final token.

Optimization used Adam with learning rate \(10^{-3}\), minibatch size \(300\),
\(3000\) gradient steps, and regularization coefficient
\(\lambda = 10^{-5}\).

\subsection{Inference protocol}

After training, we evaluated each model on newly sampled test tasks from the
same noiseless distribution.
For each test task, we initialized the first parameter estimate
\(\hat{\vb{w}}_0\) as a random Gaussian vector and then iteratively reapplied
the same learned attention layer to generate a trajectory
\((\hat{\vb{w}}_t)_{t\geq 0}\).

At reasoning depth \(t\), the prompt contained the \(L\) example tokens and the
history of generated estimates
\[
  [\hat{\vb{w}}_0,\ldots,\hat{\vb{w}}_t].
\]
The model updated the final estimate token to produce the next estimate
\(\hat{\vb{w}}_{t+1}\).
Under the full-history mask, the final token could attend to the example
tokens and the previous estimate tokens
\([\hat{\vb{w}}_0,\ldots,\hat{\vb{w}}_{t-1}]\), but not to itself.
Thus, both the fully learned linear-attention and softmax-attention models had
access to the full generated history during inference, except for the current
final token itself as an attention source.

We evaluated the MSE of the parameter estimate
\begin{align}
  \mathcal{E}_t
  =
  \frac{1}{D}\,
  \mathbb{E}\bigl[\|\vb{w}-\hat{\vb{w}}_t\|^2\bigr]
\end{align}
for reasoning depth \(t\).
Evaluation used \(4096\) independently sampled test tasks for each trained
model.

\subsection{Computational resources}

All experiments were run in PyTorch.
The experiments summarized here were run on CPU using an AMD EPYC 9654P
96-Core processor.

\newpage

\end{document}